\declaretheorem[parent=]{theorem}
\declaretheorem[parent=]{lemma}
\DeclareMathOperator*{\argmax}{arg\,max}
\DeclareMathOperator*{\argmin}{arg\,min}
\DeclareMathOperator*{\polylog}{poly\,log}
\newcommand{\halfcmark}{\checkmark\kern-5pt\raisebox{2pt}{\scriptsize\textbackslash}\kern1pt}
\newcommand{\ifrac}[2]{{#1}/{#2}}
\newcommand{\reals}{{\mathbb{R}}}
\newcommand{\E}{\mathbb{E}}
\newcommand{\event}{E}
\newcommand{\dotp}{\boldsymbol{\cdot}}
\newcommand{\eqdef}{\triangleq}
\newcommand{\diam}{D}
\newcommand{\lip}{G}
\newcommand{\sm}{\beta}
\newcommand{\initg}{\gamma}
\newcommand{\initeta}{\alpha}
\newcommand{\alg}{\mathcal{A}}
\newcommand{\cF}{\mathcal{F}}
\newcommand{\domain}{\Omega}
\newcommand{\proj}[1]{\Pi_\domain\brk*{#1}}
\newcommand{\projop}{\operatorname{Proj}}
\newcommand{\oracle}{\mathcal{O}}
\newcommand{\adasgd}[0]{\textsf{AdaSGD}\xspace}
\newcommand{\adapsgd}[0]{\textsf{AdaPSGD}\xspace}
\newcommand{\sgd}[0]{\textsf{SGD}\xspace}
\newcommand{\Otilde}{\smash{\widetilde{O}}}
\newcommand{\Ohat}{\smash{\widehat{O}}}
\newcommand{\diamstar}{\diam_\star}
\newcommand{\diammin}{\diam_{\min}}
\newcommand{\diammax}{\diam_{\max}}
\newcommand{\lipstar}{\lip_{\star}}
\newcommand{\lipmin}{\lip_{\min}}
\newcommand{\lipmax}{\lip_{\max}}
\newcommand{\lipball}{\lip_{8\star}}
\newcommand{\sigmastar}{\sigma_{\star}}
\newcommand{\sigmamin}{\sigma_{\min}}
\newcommand{\sigmamax}{\sigma_{\max}}
\newcommand{\sigmaball}{\sigma_{8\star}}
\newcommand{\smstar}{\sm_{\star}}
\newcommand{\smmin}{\sm_{\min}}
\newcommand{\smmax}{\sm_{\max}}
\newcommand{\fstar}{F_{\star}}
\newcommand{\fmin}{F_{\min}}
\newcommand{\fmax}{F_{\max}}
\newcommand{\etamin}{\eta_{\min}}
\newcommand{\etamax}{\eta_{\max}}
\newcommand{\wout}{\overline{w}}
\newcommand{\foracle}{\widetilde{f}}
\newcommand{\goracle}{\widetilde{g}}
\newcommand{\fsigma}{\sigma_{0}}
\title{How Free is Parameter-Free Stochastic Optimization?}
\author{%
    Amit Attia%
    \thanks{\scriptsize Blavatnik School of Computer Science, Tel Aviv University; \texttt{amitattia@mail.tau.ac.il}.}
    \and
    Tomer Koren%
    \thanks{\scriptsize Blavatnik School of Computer Science, Tel Aviv University, and Google Research Tel Aviv; \texttt{tkoren@tauex.tau.ac.il}.}
}
\begin{document}
\maketitle

\begin{abstract}
    We study the problem of parameter-free stochastic optimization, inquiring whether, and under what conditions, do fully parameter-free methods exist: these are methods that achieve convergence rates competitive with optimally tuned methods, without requiring significant knowledge of the true problem parameters. Existing parameter-free methods can only be considered ``partially'' parameter-free, as they require some non-trivial knowledge of the true problem parameters, such as a bound on the stochastic gradient norms, a bound on the distance to a minimizer, etc. In the non-convex setting, we demonstrate that a simple hyperparameter search technique results in a fully parameter-free method that outperforms more sophisticated state-of-the-art algorithms. We also provide a similar result in the convex setting with access to noisy function values under mild noise assumptions. Finally, assuming only access to stochastic gradients, we establish a lower bound that renders fully parameter-free stochastic convex optimization infeasible, and provide a method which is (partially) parameter-free up to the limit indicated by our lower bound.
\end{abstract}

\section{Introduction}

Stochastic first-order optimization is a cornerstone of modern machine learning, with stochastic gradient descent~\citep[SGD,][]{robbins1951stochastic} as the go-to method for addressing statistical learning problems.
The tuning of algorithmic parameters, particularly those associated with SGD such as the step-size, proves to be a challenging task, both theoretically and in practice~\citep[e.g.,][]{bottou2012stochastic,schaul2013no}, especially when the problem parameters are unknown. 
The present paper focuses on theoretical aspects of this challenge.

Addressing the challenge, a variety of methods emerged over the years, with the goal of minimizing knowledge about problem parameters required for tuning, while maintaining performance competitive with carefully tuned algorithms.
These include the so-called
adaptive methods, such as AdaGrad and Adam~\citep[e.g.,][]{duchi2011adaptive,kingma2014adam} 
and their recent theoretical advancements
\citep{reddi2018convergence,tran2019convergence,kavis2019unixgrad,alacaoglu2020new,Faw2022ThePO,kavis2022high,attia2023sgd,liu2023high}, that self-tune learning rates based on gradient statistics;
parameter-free methods~\citep[e.g.,][]{chaudhuri2009parameter,Streeter2012NoRegretAF,luo2015achieving,orabona2021parameter,carmon2022making} that focus primarily on automatically adapting to the complexity of (i.e., distance to) an optimal solution;
and advanced techniques from the literature on online learning \citep{orabona2016coin,cutkosky2018black} that can be used in a stochastic optimization setting via an online-to-batch reduction. 
More recently, several works have achieved advancements in improving practical performance, narrowing the gap to finely-tuned methods \citep{orabona2017training,chen2022better,Ivgi2023DoGIS,pmlr-v202-defazio23a,mishchenko2023prodigy}.

While adaptive and parameter-free methods enjoy strong theoretical guarantees, each of the aforementioned approaches still demands some level of non-trivial knowledge regarding problem parameters. 
This includes knowledge about the smoothness parameter and the initial suboptimality for non-convex adaptive and self-tuning methods, or a bound on either the stochastic gradient norms or on the distance to an optimal solution for parameter-free convex (including online) optimization algorithms.
To the best of our knowledge, in the absence of such non-trivial assumptions, no method in the literature is fully parameter-free, in either the non-convex or convex, smooth or non-smooth cases.

This paper explores the following question: in what stochastic optimization scenarios, and under what conditions, do fully parameter-free optimization algorithms exist?  
In this context, we note that direct hyperparameter tuning, which parameter-free self-tuning methods aim to eschew, could be considered a valid approach for designing parameter-free algorithms. 
Consequently, we also examine the question: can self-tuning optimization algorithms actually outperform direct tuning methods based on simple hyperparameter search?

We begin our inquiry in the general setting of non-convex (smooth) optimization.
We observe that a simple hyperparameter search technique for tuning of fixed stepsize SGD results in a fully parameter-free method with convergence rate that matches \emph{optimally tuned} SGD up to poly-logarithmic factors.
While extremely straightforward, this approach outperforms the state-of-the-art bounds for adaptive methods in this context~\citep{Faw2022ThePO,kavis2022high,attia2023sgd,liu2023high} that require non-trivial knowledge on the smoothness parameter and the initial sub-optimality for optimal tuning.

Moving on to the convex optimization setting, we observe that a similar hyperparameter search technique can be used to design a simple and efficient parameter-free algorithm, provided access to noisy function value queries to the objective. Under very mild assumptions on the scale of noise in the value queries, the obtained algorithm is fully parameter-free with convergence rate matching that of perfectly tuned SGD (up to poly-log factors), in both the smooth and non-smooth convex cases.
This algorithm thus outperforms existing results in the literature, albeit under an additional assumption of a reasonably-behaved stochastic function value oracle. Indeed, an inspection of previous studies in this context reveals that they neglect access to function values and only rely on stochastic gradient queries (even though function value access is often readily available in practical applications of these methods).

Our last area of focus is therefore the convex optimization setting with access solely to stochastic gradients (and no access whatsoever to function values), which emerges as the only general stochastic optimization setting where achieving parameter-freeness is potentially non-trivial.
Indeed, and perhaps somewhat surprisingly, we identify in this setting a limitation to obtaining a fully parameter-free algorithm: we establish a lower bound showing that any optimization method cannot be parameter-free unless it is provided with either the gradient-noise magnitude or the distance to a minimizer, each up to a (multiplicative) factor of $O(\sqrt{T})$ where $T$ is the number of optimization steps.
The bound sheds light on why prior methods require non-trivial knowledge of problem parameters, yet still leaves room for improvement: state-of-the-art results in this context \citep[e.g.,][]{carmon2022making,Ivgi2023DoGIS} assume knowledge on the magnitude of the stochastic gradients, whereas our lower bound only necessitates knowledge on the magnitude of the \emph{noise component} in the gradient estimates. 

Our final contribution complements the lower bound and introduces a parameter-free method in the convex non-smooth (Lipschitz) setting, achieving the same rate as of optimally-tuned SGD, up to poly-log factors and an unavoidable term prescribed by our lower bound. When given a bound on the noise in the stochastic gradients accurate up to a $O(\sqrt{T})$ factor, our method becomes fully parameter-free.
The same method achieves a similar parameter-free guarantee in the setting of convex smooth optimization, nearly matching the rate of tuned SGD in this case.

\subsection{Summary of contributions}

To summarize our results in some more detail, let us first specify the setup of parameter-free optimization slightly more concretely.
In all cases, we consider unconstrained optimization of an objective function $f : \reals^d \to \reals$.
Rather than directly receiving the ground-true problem parameters (e.g., smoothness parameter, Lipschitz constant, etc.), %
the algorithms are provided with a range (lower and upper bounds) containing each parameter. 
We refer to a method as parameter-free if its convergence rate matches a benchmark rate (e.g., that of optimally-tuned SGD), up to poly-logarithmic factors in the ranges parameters (as well as in the number of steps $T$ and the probability margin $\delta$).

    \paragraph{Non-convex setting: fully parameter-free algorithm.}
    We give a fully parameter-free method with the same convergence rate as \emph{tuned} SGD up to poly-log factors.
    Given an initialization $w_1 \in \reals^d$, number of queries $T$, probability margin $\delta$ and a range $[\etamin,\etamax]$ containing the theoretically-tuned SGD stepsize (the latter can be computed from the problem parameter ranges),
    the algorithm produces $\wout \in \reals^d$ such that with probability at least $1-\delta$,
    \begin{align*}
    \norm{\nabla f(\wout)}^2 
    &\leq
    \brk*{
        \frac{\smstar \fstar
        + \sigmastar^2}{T}
        \!+\! \frac{\sigmastar \sqrt{\smstar \fstar}}{\sqrt{T}}
    }\polylog\brk*{\tfrac{\etamax}{\etamin},\tfrac{1}{\delta}}
    ,
    \end{align*}
    where $\smstar$ is the smoothness parameter, $\fstar=f(w_1)-\min f(w)$, and $\sigmastar$ is a noise bound of the stochastic gradients.
    While the algorithm is based on simple observations, it is fully parameter-free and provides stronger guarantees with a simpler analysis compared to existing adaptive or self-tuning methods in this setting.

    \paragraph{Convex setting: fully parameter-free algorithm with noisy function values.}

    Moving to the convex case, we first consider the canonical setting with access to noisy function values. 
    We devise a simple parameter-free method, that given $w_1$, $T$, $\delta$ and a range $[\etamin,\etamax]$ containing the theoretically-tuned SGD stepsize, produces $\wout \in \reals^d$ such that with probability $\geq 1-\delta$,
    \begin{align*}
        f\brk*{\wout}-f(w^\star)
        & \! \leq \!
        \brk4{\frac{\diamstar \sqrt{\lipstar^2+\sigmastar^2} }{\sqrt{T}} \!+\! \frac{\fsigma}{\sqrt{T}} }
        \polylog\brk*{\tfrac{\etamax}{\etamin},\tfrac{1}{\delta}}
        ,
    \end{align*}
    where $\lipstar$ is the Lipschitz constant, $\diamstar=\norm{w_1-w^\star}$ ($w^\star$ is a minimizer of $f$), and $\fsigma$ and $\sigmastar$ are noise bounds of the function and gradient oracles respectively.
    Under very mild assumptions on the function values noise $\fsigma$, the method is fully parameter-free and achieves the same rate as \emph{tuned} SGD up to poly-log factors.%
    
    \paragraph{Convex setting: impossibility without function values.}

    Next we consider the convex setting without any access to function values.
    We establish that without further assumptions, no convex optimization method can be fully parameter-free while nearly matching the rate of \emph{tuned} SGD.
    We show that for any $\alpha \in [1,\frac34\sqrt{T}]$ and every $T$-queries deterministic algorithm receiving ranges $[\tfrac{1}{a\sqrt{T}}\diammax,\diammax]$, $[\tfrac{1}{a \sqrt{T}}\sigmamax,\sigmamax]$ and known $\lipstar=\tfrac{\sigmamax}{(2T-1)}$, there exist $\diamstar$ and $\sigmastar$ that belong to the ranges (resp.), a convex and $\lipstar$-Lipschitz function $f$ with a minimizer $w^\star$ such that $\norm{w^\star} = \diamstar$, and a $\sigmastar$-bounded first-order oracle such that with constant probability, the output of the algorithm, $\wout$, satisfy
    \begin{align*}
        f(\wout)-f(w^\star)
        &\geq
        \frac{\diamstar (\lipstar+\sigmastar)\alpha}{6\sqrt{T}}
        \geq
        \frac{\diamstar \sigmamax}{6 T}
        .
    \end{align*}
    Thus, without non-trivial prior knowledge on the parameter ranges, no algorithm can match the performance of \emph{tuned} SGD and must include a term linear in $\sigmamax$.

    \paragraph{Parameter-free algorithm for the convex non-smooth setting.}
    
    Assuming only gradient access, we propose a method which requires knowledge of  $\diammin,\diammax,\lipmax$ and $\sigmamax$ such that $\diammin \leq \diamstar = \norm{w_1-w^\star} \leq \diammax$, $\lipstar \leq \lipmax$ and $\sigmastar \leq \sigmamax$ and produces $\wout$ such that with probability at least $1-\delta$,
    \begin{align*}
        f(\wout)-f^\star
        &\leq
        \brk*{\frac{\diamstar(\lipstar+\sigmastar)}{\sqrt{T}} \!+\! \frac{\diamstar \sigmamax}{T}}
        \! \polylog\brk*{\tfrac{\diammax}{\diammin},\lipmax,\sigmamax,\tfrac{T}{\delta}}
        .
    \end{align*}
    The method is parameter-free if $\sigmamax$ is provided up to a tolerance of $O(\sqrt{T})$, in which case it achieves the same rate of convergence as \emph{tuned} SGD up to logarithmic factors. 
    This is the maximal tolerance possible in light of our lower bound, which shows that the additional $\ifrac{\sigmamax}{T}$ term is unavoidable without further assumptions. 
    
    \paragraph{Parameter-free algorithm for the convex smooth setting.}
    
    Assuming that the objective is $\smstar$-smooth rather than Lipschitz, the same convex optimization method with parameters $\diammin$, $\diammax$, $\smmax$ and $\sigmamax$ such that $\diammin \leq \diamstar \leq \diammax$, $\smstar \leq \smmax$ and $\sigmastar \leq \sigmamax$, produce $\wout \in \reals^d$ such that with probability at least $1-\delta$,
    \begin{align*}
        f(\wout)-&f^\star
        \leq
        \bigg(
        \frac{\smstar \diamstar^2}{T}+\frac{\diamstar \sigmastar}{\sqrt{T}}
        + \frac{\diamstar \sigmamax}{T}
        \bigg)
        \polylog\brk*{\diammax,\tfrac{1}{\diammin},\smmax,\sigmamax,\tfrac{T}{\delta}}
        .
    \end{align*}
    To the best of our knowledge, this is the first parameter-free method for stochastic convex and smooth optimization (in the case where $\sigmastar$ is provided up to $O(\sqrt{T})$ tolerance, which is again required to account for the unavoidable term prescribed by our lower bound).

\subsection{Additional related work}

\paragraph{Adaptive stochastic non-convex optimization.}
A long line of work focus on the analysis of SGD with AdaGrad-type stepsizes \citep{ward2019adagrad,li2019convergence,kavis2022high,Faw2022ThePO,attia2023sgd,liu2023high}, which we refer to as \adasgd (also known as AdaGrad-norm, e.g., \citealt{ward2019adagrad,Faw2022ThePO}).
\adasgd enjoys high-probability guarantees and rate interpolation for small noise \citep{attia2023sgd,liu2023high}.
Compared to our tuning-based fully parameter-free method, the analysis of \adasgd is cumbersome and sub-optimal without knowing the smoothness and function sub-optimality.

\paragraph{Parameter-free and adaptive stochastic convex optimization methods.}
Existing methods assume either known diameter or known stochastic gradient norm bound.
Assuming a known diameter, \citep{kavis2019unixgrad} presented a method with an (almost) optimal rate for convex optimization and an accelerated rate for convex smooth optimization.
It is an open question whether parameter-free acceleration is possible.
Given a a bound of the stochastic gradient norm, \citet{carmon2022making} provided the first method which achieve the optimal rate up to a double-logarithmic factor in the diameter range by utilizing an efficient bisection procedure to find a good SGD stepsize.
\citet{Ivgi2023DoGIS} introduced a parameter-free method which use dynamic stepsizes for SGD, and demonstrated strong practical performance.
Both methods are parameter-free if the bound of the stochastic gradients norm is given up to a tolerance of $O(\sqrt{T})$, while our method and analysis require only such bound for the noise of the stochastic gradients; this distinction is crucial to our guarantee in the convex and smooth case, where the gradients are unbounded.

\paragraph{Parameter-free deterministic optimization methods.}
While our focus here is on parameter-free stochastic optimization, in the context of \emph{deterministic} optimization several fully parameter-free methods were previously suggested.
The adaptive Polyak method of \citet{hazan2019revisiting} achieves almost the same rate as tuned gradient descent for convex Lipschitz and convex smooth problems.
For convex smooth problems,
\citet{beck2009fast} and \citet{nesterov2015universal} use line-search techniques to obtain the optimal (accelerated) rate.
It is also worth mentioning the tuning-free methods of \citet{pmlr-v202-defazio23a,mishchenko2023prodigy}, which are not fully parameter-free (they need knowledge of the Lipschitz constant) yet demonstrate strong practical performance without tuning.

\paragraph{Parameter-free online convex optimization.}
In the setting of online convex optimization (OCO), several works concerned themselves with parameter-free~\citep[e.g.,][]{mcmahan2014unconstrained,orabona2016coin,cutkosky2018black,mhammedi2020lipschitz} and scale-free~\citep[e.g.,][]{orabona2018scale,mhammedi2020lipschitz} methods. The parameter-free OCO literature assumes known gradient norm bound and achieve regret of $\Otilde(\norm{u} \lip_{\max} \sqrt{T \log\norm{u}})$ where $u,\lip_{\max}$ and $T$ are an arbitrary comparator, gradient norm bound and number of rounds respectively \citep{mcmahan2014unconstrained}. \citet{cutkosky2016online,cutkosky2017online} established lower bounds which rule out such regret bounds if both the comparator and gradient norm bounds are unknown.
Sidestepping the lower bound, several works designed parameter-free methods (with unknown Lipschitz bound and comparator norm), suffering an additional $\widetilde O (\norm{u}^3 \lip_{\max})$ term \citep{cutkosky2019artificial,mhammedi2020lipschitz}.
Considering online-to-batch conversions, online parameter-free methods with known gradient norm bound still require a non-trivial bound of the stochastic gradient norms, not only the noise bound. 
\citet{jun2019parameter} relaxed the bounded gradients assumption using stochastic gradients, but requires a known bound of the expected gradient norms instead.
We note that the method of \citet{cutkosky2019artificial} can use crude range bounds for a guarantee of $\widetilde O (\diamstar \lipstar / \sqrt{T} + \diammax \lipstar / T)$, which is parameter-free when $\diammax/\diammin = O(\sqrt{T})$. Diameter tolerance is a promising future research direction, as most current studies focus on gradient norm/noise tolerance.

\paragraph{Classical analyses of stochastic gradient descent.}

\citet{ghadimi2013stochastic} were the first to examine stochastic gradient descent in the smooth non-convex setting and obtain tight convergence. They demonstrated that a \emph{properly tuned} SGD with $T$ steps achieves a rate of $O\brk{\ifrac{1}{T}+\ifrac{\sigma}{\sqrt{T}}}$ under the assumption of uniformly bounded variance $\sigma^2$; \citet{arjevani2022lower} provided a tight lower bound.
Assuming sub-Gaussian noise, \citet{ghadimi2013stochastic} provided a high-probability convergence rate of $\widetilde O\brk{\ifrac{1}{T}+\ifrac{\sigma^2}{T}+\ifrac{\sigma}{\sqrt{T}}}$ by amplifying the success probability using multiple runs of SGD, while \citet{liu2023high} established a similar rate for a single run of SGD.
For the convex setting, \citet{lan2012optimal} provided convergence guarantees for stochastic gradient descent in composite stochastic optimization, establishing a convergence rate of $O\brk{\ifrac{(\lip+\sigma)}{\sqrt{T}}}$ for convex and $\lip$-Lipschitz objectives and $O\brk{\ifrac{1}{T}+\ifrac{\sigma}{\sqrt{T}}}$ for convex smooth objectives, assuming uniformly bounded variance of $\sigma^2$. Classical non-asymptotic analyses of SGD and minimax lower bounds in stochastic optimization trace back to \citet{nemirovskij1983problem}.

\paragraph{Addendum: concurrent work on parameter-free stochastic optimization.}

Shortly after the present manuscript first appeared on arXiv \citep{attia2024free}, other works appeared which explore similar questions in parameter-free stochastic optimization.
\citet{carmon2024price} study the ``price of adaptivity'' to uncertainty in problem parameters in stochastic convex problems, with focus on characterizing the tight penalty (i.e., poly-logarithmic factors in the bound) one must pay for being parameter-free. They provide several lower bounds for different uncertainty scenarios, including stochastic gradients with bounded norms and stochastic gradients with a bounded second moment, as well as a result similar to our \cref{thm:lower-bound-new} that quantifies the price of uncertainty in both diameter and stochastic gradient norms.
In contrast, our primary focus is on studying conditions under which parameter-free algorithms are possible in various optimization scenarios, modulo poly-logarithmic factors in the convergence bounds.

In an independent work, \citet{khaled2024tuning} study parameter-free optimization in the non-convex and convex settings and establish a set of results closely related to ours.
For non-convex smooth optimization, they provide an upper bound similar to our \cref{thm:non-convex} (they also give a lower bound for the in-expectation rate of SGD, showing that it is unattainable by parameter-free methods).
For convex optimization, they prove that full parameter-freeness is in general impossible, while showing a convergence result that inversely depends (polynomially) on a certain SNR parameter, that for favorable noise distributions is bounded away from zero, in which case the algorithm is parameter-free.
Comparing to our lower bound in \cref{thm:lower-bound-new}, we provide a detailed characterization of the cost of being parameter-free in this setting, showing how convergence of parameter-free methods degrades with the degree of uncertainty in the gradient noise parameter.
In terms of upper bounds, we establish feasibility of parameter-free convex optimization with no dependence on SNR (that can be arbitrarily small even for bounded noise distributions), with rates matching our lower bound up to poly-log factors.
In addition, we establish an analogous upper upper bound for convex smooth problems, which is, to our knowledge, the first parameter-free result in this setting.

\section{Preliminaries}
\label{sec:setup-main-results}

\paragraph{Notation.} Throughout, we use $\log$ to denote the base $2$ logarithm and $\log_+(\cdot) \eqdef 1+\log(\cdot)$. 
$\Otilde$ and $\Ohat$ are asymptotic notations that suppress logarithmic and doubly-logarithmic factors respectively (in addition to constant factors).

\subsection{Optimization setup}

Our focus is unconstrained stochastic optimization in $d$-dimensional Euclidean space $\reals^d$ with the $\ell_2$ norm, denoted $\norm{\cdot}$.
We assume a stochastic first-order access to a differentiable objective function $f : \reals^d \to \reals$ through a randomized oracle $\oracle$. 
We will consider two variants of such access, common to the literature on stochastic optimization:
\begin{enumerate}[label=(\roman*)]%
    \item\emph{\bfseries Stochastic first-order oracle:}
    The oracle $\oracle$ generates, given any $w \in \reals^d$, unbiased value and gradient samples for $f$ at the point $w$, namely $\oracle(w) = (\foracle(w),\goracle(w))$ such that
    for all $w \in \reals^d$, $\E[\foracle(w)]=f(w)$ and $\E[\goracle(w)]=\nabla f(w).$
    We will make the standard assumption that $\foracle$ and $\goracle$ have uniformly bounded noise,%
    \footnote{The bounded noise assumption is made here for simplicity and can be relaxed to a sub-Gaussian assumption on the noise; see, for example, \citet{attia2024note} or Appendix~C of \citet{attia2023sgd} for further details.}
    that is, for some parameters $\fsigma > 0$ and $\sigmastar > 0$,
    \begin{align*}
        \forall ~ w \in \reals^d ~:
        \qquad
        &
        \Pr\brk{\abs{\foracle(w)- f(w)} \leq \fsigma} = 1
        \qquad \text{and}\qquad
        \Pr\brk{\norm{\goracle(w)-\nabla f(w)} \leq \sigmastar} = 1
        .
    \end{align*}
    
    \item\emph{\bfseries Stochastic gradient oracle:}
    The oracle $\oracle$ only generates a stochastic gradient sample with $\sigmastar$-bounded noise for $f$ given a point $w$; namely $\oracle(w) = (\goracle(w))$ such that
    \begin{align*}
        & \forall ~ w \in \reals^d ~ : \qquad 
        \E[\goracle(w)]=\nabla f(w)
        \qquad \text{and} \qquad 
        \Pr\brk{\norm{\goracle(w)-\nabla f(w)} \leq \sigmastar} = 1
        .
    \end{align*}

\end{enumerate}

In this framework, we distinguish between several different optimization scenarios:
\begin{enumerate}[label=(\roman*)]%
\item\emph{\bfseries Non-convex setting.}
In this case we will assume the objective is not necessarily convex but is $\smstar$-smooth.%
\footnote{A function $f : \reals^d \to \reals$ is said to be $\sm$-smooth if $\norm{\nabla f(x)-\nabla f(y)} \leq \sm \norm{x-y}$ for all $x,y \in \reals^d$. In particular, this implies that $\abs{f(y) - f(w) - \nabla f(x) \dotp (y-x)} \leq \frac{\sm}{2} \norm{y-x}^2$ for all $x,y \in \reals^d$.}
We further assume that the function is lower bounded by some $f^\star \in \reals$ and denote $\fstar \eqdef f(w_1)-f^\star$ for a given reference point $w_1 \in \reals^d$. %
The goal in the setting is, given $T$ queries to the stochastic oracle and $\delta>0$, is to find a point $w \in \reals^d$ such that $\norm{\nabla f(w)}^2 \leq \epsilon$ with probability at least $1-\delta$, for $\epsilon$ as small as possible (as a function of the problem parameters $\smstar,\fstar,\sigmastar,\fsigma$ and $T, \delta$).

\item\emph{\bfseries Convex non-smooth setting.}
Here we assume $f$ is convex, $\lipstar$-Lipschitz and admits a minimizer $w^\star \in \argmin_{w \in \reals^d} f(w)$ with value $f^\star \eqdef f(w^\star)$.
We denote $\diamstar \eqdef \norm{w_1-w^\star}$ for a reference point $w_1 \in \reals^d$. %
The goal is, given $T$ queries to the stochastic oracle and $\delta>0$, is to produce a point $w \in \reals^d$ such that $f(w)-f^\star \leq \epsilon$ with probability at least $1-\delta$, for $\epsilon$ as small as possible (in terms of the problem parameters $\lipstar,\diamstar,\sigmastar,\fsigma$ and $T,\delta$).

\item\emph{\bfseries Convex and smooth setting.}
This scenario is identical to the former, but here we assume that $f$ is $\smstar$-smooth rather than $\lipstar$-Lipschitz. The convergence rate $\epsilon$ is then given in terms of the parameters $\smstar,\diamstar,\sigmastar,\fsigma$ and $T,\delta$.

\end{enumerate}

\subsection{Parameter-free algorithmic setup}

Since our focus is on parameter-free optimization, it is crucial to specify what algorithms in this setting are allowed to receive as input. 
First, we will always assume that the number of steps $T$, the failure probability $\delta$ and the reference point $w_1$ are given as inputs.%
\footnote{A standard doubling trick can be used to handle an unknown $T$ while keeping the same rate of convergence up to log-factors.}
The remaining problem parameters are not known ahead of time and thus are not received as inputs; however, we will assume that algorithms do receive, for each ground-truth parameter, a crude range (i.e., lower and upper bounds) in which it lies.
Namely, for any of the relevant parameters among $\smstar, \fstar, \lipstar, \diamstar, \sigmastar$, algorithms receive respective ranges such that
$\smstar \in [\smmin,\smmax], \fstar \in [\fmin,\fmax], \lipstar \in [\lipmin,\lipmax], \diamstar \in [\diammin,\diammax], \sigmastar \in [\sigmamin,\sigmamax]$ (a range for the parameter $\fsigma$ is not used in any of our methods).
We refer to a method as parameter-free, with respect to a well-tuned benchmark algorithm, if its convergence rate matches that of the benchmark, up to poly-logarithmic factors in the range parameters as well as in $T$ and $1/\delta$.
(We do not include a more formal definition here, since it will not be required for the statement of any of our results.)

\subsection{Tuned benchmarks}

Our benchmark rates, for both convex and non-convex optimization, are those of \emph{tuned} (fixed stepsize, possibly projected) Stochastic Gradient Descent (SGD). Starting at $w_1 \in \reals^d$, the update step of SGD is $w_{t+1}=w_{t} - \eta g_t$, where $\eta$ is the stepsize parameter, $g_t=\goracle(w_t)$ is the stochastic gradient at step $t$, evaluated at $w_t$. The update of projected SGD is $w_{t+1}=\proj{w_t-\eta g_t}$, where $\proj{\cdot}$ is the Euclidean projection onto a convex domain $\Omega \subseteq \reals^d$.
With a tuned stepsize parameter, this method achieves the best known rate in the non-convex setting and the optimal rate in convex non-smooth setting (accelerated methods achieve a slightly better rate in the convex and smooth setting).
The benchmark rates of SGD in the optimization scenarios described above are detailed in \cref{tab:sgd}.
\begin{table}[t]
    \centering
    \begin{tabular}{lll}
    \toprule
    \sc Setting &  \sc Benchmark rate & \\
    \midrule
    Non-convex & $\frac{\smstar \fstar}{T}+\frac{\sigmastar^2}{T}+\sqrt{\frac{\smstar\fstar\sigmastar^2}{T}}$ & \citep{ghadimi2013stochastic,liu2023high} \\
    Convex, Lipschitz & $\frac{\diamstar (\lipstar+\sigmastar)}{\sqrt{T}}$ & \citep{lan2012optimal} \\
    Convex, smooth & $\frac{\smstar \diamstar^2}{T}+\frac{\diamstar \sigmastar}{\sqrt{T}}$ & \citep{lan2012optimal} \\
    \bottomrule
    \end{tabular}
    \caption{Benchmark high-probability rates of tuned SGD, ignoring log-factors.}
    \label{tab:sgd}
\end{table}
Note that SGD is not parameter-free with respect to \emph{tuned} SGD, unless the range parameters are such that the ground-true parameters are known up to a constant factor.

\section{Parameter-Free Non-convex Optimization}
\label{sec:non-convex}
\begin{algorithm2e}[t]
    \SetAlgoLined
    \DontPrintSemicolon
    \KwIn{$w_1$, $T$, $\delta$, $\etamin$, $\etamax$ (see \cref{eq:non-convex-tuning} for replacing $\etamin,\etamax$ with parameter ranges)}
    $S \gets \set{}$ \;
    $K \gets \log\brk*{\frac{\etamax}{\etamin}\sqrt{\log_+(\tfrac{\etamax}{\etamin})\log_+(\tfrac{1}{\delta})}}$\;%
    $N \gets \log \tfrac{1}{\delta}$\;
    $T' \gets \frac{T}{K(1+N)}$\;
    \For{$k \gets 1,2,\dots,K$}{
        $\eta_k \gets 2^{k-1} \etamin$\;
        $w_1^{(k)},\ldots,w_T^{(k)} \gets \sgd\brk{w_1,\eta_k,T'}$\;
        Uniformly at random select $\wout_1^{(k)},\ldots,\wout_N^{(k)}$ from $w_1^{(k)},\ldots,w_T^{(k)}$\;
        $S \gets S \cup \set{\wout_1^{(k)},\ldots,\wout_n^{(k)}}$
    }
    \Return{$\argmin_{w \in S} \norm{\sum_{t=1}^{T'} \goracle_t(w)}$}
    \Comment*[r]{$\goracle_t(w)$ is an independent stochastic gradient at $w$}
    \caption{Non-convex SGD tuning} \label{alg:non-convex}
\end{algorithm2e}
We begin with the general non-convex smooth case, providing a fully parameter-free method which achieves the same rate of \emph{tuned} SGD up to poly-logarithmic factors.
Our approach is based on a simple observation that has also been used in the two-phase SGD approach of \citet{ghadimi2013stochastic}.
Our parameter-free method performs grid search over multiple SGD stepsizes and for each sequence of SGD iterations randomly samples a small portion of the iterations. Then the method picks the solution with the minimal approximate gradient norm based on stochastic gradients.
Note that \cref{alg:non-convex} receive $\etamin,\etamax$ instead of ranges $[\fmin,\fmax],[\smmin,\smmax],[\sigmamin,\sigmamax]$. This is done for framing the algorithm as a form of stepsize tuning (which in practice requires only two hyperparameters instead of two hyperparameters per problem parameter) and given the ranges the algorithm will use %
\begin{align}\label{eq:non-convex-tuning}
    \etamin &= \min\set*{\frac{1}{2\smmax},\sqrt{\frac{\fmin}{\smmax \sigmamax^2 T}}}
    \qquad
    \text{and}
    \qquad
    \etamax
    = \min\set*{\frac{1}{2\smmin},\sqrt{\frac{\fmax}{\smmin \sigmamin^2 T}}}.
\end{align}
Following is the convergence guarantee of the algorithm.

\begin{restatable}{theorem}{thmnonconvex}
\label{thm:non-convex}
    Assume that $f$ is $\smstar$-smooth and lower bounded by some $f^\star$ and $\goracle$ is a $\sigmastar$-bounded unbiased gradient oracle of $f$. Let $\etamin,\etamax > 0$ such that
    $$\etamin \leq \eta_\star \eqdef \min\set*{\tfrac{1}{2 \smstar}, \sqrt{\tfrac{\fstar}{\smstar \sigmastar^2 T}}} \leq \etamax$$
    where $\fstar = f(w_1)-f^\star$.
    Then for any $\delta \in (0,\tfrac13)$, given $w_1$, $T$, $\delta$, $\etamin$ and $\etamax$, \cref{alg:non-convex} performs $T$ gradient queries and produce $\wout$ such that with probability $\geq 1-\delta$,
    \begin{align*}
        \norm{\nabla f(\wout)}^2
        &
        =
        \Ohat\Bigg(
            \frac{\log_+(\tfrac{\etamax}{\etamin}) \log_+(\tfrac{1}{\delta})(\smstar \fstar
            + \sigmastar^2 \log \tfrac{1}{\delta})}{T}
            + \frac{\sigmastar \sqrt{\smstar \fstar \log_+ (\tfrac{\etamax}{\etamin}) \log_+(\tfrac{1}{\delta})}}{\sqrt{T}}
        \Bigg)
        .
    \end{align*}
\end{restatable}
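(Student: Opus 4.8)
The plan is to argue that, among the $K$ runs, the one whose stepsize $\eta_{k^\star}$ is closest to the stepsize optimal \emph{for $T'$ steps} produces, after the random sub-sampling, an iterate of small true gradient norm, and that the validation step (taking the $\argmin$ of the empirical gradient-norm estimates) returns an iterate almost as good. Three ingredients are combined: (i) a high-probability bound on the average squared gradient norm along a single SGD run; (ii) the fact that uniformly sub-sampling $\log(1/\delta)$ iterates converts a bound on the \emph{average} squared gradient over the run into a bound on the \emph{minimum} over the sampled iterates, up to a factor $2$ and with probability $1-2^{-\log(1/\delta)}=1-\delta$ (by Markov on each draw and independence of the draws given the trajectory); and (iii) a uniform-convergence estimate: $\norm{\tfrac1{T'}\sum_{t\le T'}\goracle_t(w)}$ lies within $\epsilon_0:=O\big(\sigmastar\sqrt{\log(|S|/\delta)/T'}\big)$ of $\norm{\nabla f(w)}$ simultaneously over $w\in S$ (vector Hoeffding/McDiarmid plus a union bound over the $|S|=\Theta(K\log\tfrac1\delta)$ candidates), so that the returned $\wout$ satisfies $\norm{\nabla f(\wout)}\le\norm{\nabla f(w)}+2\epsilon_0$ for every $w\in S$, in particular for the good sub-sampled iterate.

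For (i), use the smoothness descent inequality: with $g_t=\nabla f(w_t)+\xi_t$, $\norm{\xi_t}\le\sigmastar$, and $\eta\le\tfrac1{2\smstar}$,
\[
f(w_{t+1})\le f(w_t)-\tfrac{3\eta}{4}\norm{\nabla f(w_t)}^2-(\eta-\smstar\eta^2)\,\nabla f(w_t)\dotp\xi_t+\tfrac{\smstar\eta^2\sigmastar^2}{2}.
\]
Summing over $t=1,\dots,T'$ and telescoping $f$ gives $\tfrac{3\eta}{4}\sum_t\norm{\nabla f(w_t)}^2\le\fstar+\tfrac{\smstar\eta^2\sigmastar^2T'}{2}+\sum_t Z_t$, where $Z_t:=-(\eta-\smstar\eta^2)\nabla f(w_t)\dotp\xi_t$ is a martingale difference with predictable increment bound $|Z_t|\le\eta\sigmastar\norm{\nabla f(w_t)}=:R_t$. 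Applying the standard sub-Gaussian martingale tail (if $|Z_t|\le R_t$ with $R_t$ predictable then $\sum_t Z_t\le\tfrac{\lambda}{2}\sum_tR_t^2+\tfrac{\log(1/\delta')}{\lambda}$ w.p.\ $\ge1-\delta'$) with $\lambda=\tfrac1{2\eta\sigmastar^2}$ absorbs a $\tfrac{\eta}{4}\sum_t\norm{\nabla f(w_t)}^2$ term back into the left-hand side; dividing by $\eta T'/2$ yields, w.p.\ $\ge1-\delta'$,
\[
\frac1{T'}\sum_{t=1}^{T'}\norm{\nabla f(w_t)}^2\;\le\;\underbrace{\frac{2\fstar}{\eta T'}+\smstar\eta\sigmastar^2}_{=:B(\eta)}\;+\;\frac{4\sigmastar^2\log(1/\delta')}{T'}.
\]

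The main obstacle — and the conceptual heart of the proof — is identifying $k^\star$ and showing the grid is rich enough. The grid and validation phases consume a factor $M:=K(1+N)=K\log_+(\tfrac1\delta)$ of the budget, so each run has only $T'=T/M$ steps, and $M=\Theta(\log_+(\tfrac1\delta)\log_+(\tfrac{\etamax}{\etamin}))$ up to doubly-logarithmic factors. Hence the best per-run rate is that of SGD \emph{tuned for $T'$ steps}: $B(\eta_\diamond)$ with $\eta_\diamond:=\min\{\tfrac1{2\smstar},\sqrt{2\fstar/(\smstar\sigmastar^2T')}\}$, and a direct computation gives $B(\eta_\diamond)=\Theta\big(\tfrac{\smstar\fstar}{T'}+\sqrt{\tfrac{\smstar\fstar\sigmastar^2}{T'}}\big)$ — exactly the target rate. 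The subtlety is that $\eta_\diamond$ can be as large as $\sqrt{2M}$ times the $T$-optimal stepsize $\eta_\star$, so the grid must still reach $\eta_\diamond$ even when $\etamax$ is as small as $\eta_\star$; this is precisely why $K$ is set so that the largest grid point is $\eta_K=2^{K-1}\etamin=\tfrac{\etamax}{2}\sqrt{\log_+(\tfrac{\etamax}{\etamin})\log_+(\tfrac1\delta)}$. I would verify, by a short case split (on which of the two terms realizes $\eta_\diamond$, and likewise $\eta_\star$), that $\etamin\le\eta_\star\le\eta_\diamond\le\sqrt{2M}\,\eta_\star$ and that $\eta_K\ge\eta_\diamond/C$, where $C$ is a universal constant except in degenerate ranges (essentially $\etamax/\etamin$ close to $1$), where $C=O(\sqrt{\log\log(1/\delta)})$ and is absorbed by $\Ohat$. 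Since $B$ is decreasing on $(0,\eta_\diamond]$, the largest grid point $\eta_{k^\star}\le\eta_\diamond$ — or $\eta_K$ itself, if the grid stops short — then satisfies $\eta_{k^\star}\le\tfrac1{2\smstar}$ and $B(\eta_{k^\star})\le C\cdot B(\eta_\diamond)$.

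Finally, intersecting the good events of (i) for run $k^\star$, of (ii) for its sub-sampling, and of (iii) — with $\delta'$, $2^{-\log(1/\delta)}$, $\delta''$ chosen so their failure probabilities sum to $\delta$ (possible since $\delta<\tfrac13$) — there is, w.p.\ $\ge1-\delta$, a sampled iterate $w^\circ\in S$ whose true squared gradient norm is at most twice the run-$k^\star$ average, hence $\norm{\nabla f(w^\circ)}^2\lesssim B(\eta_{k^\star})+\tfrac{\sigmastar^2\log(1/\delta)}{T'}$, and therefore $\norm{\nabla f(\wout)}^2\le2\norm{\nabla f(w^\circ)}^2+8\epsilon_0^2$. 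Substituting $T'=T/M$, $\epsilon_0^2=O\big(\tfrac{\sigmastar^2M\log(|S|/\delta)}{T}\big)$, and bounding $M$ and $|S|$ by $\log_+(\tfrac{\etamax}{\etamin})\log_+(\tfrac1\delta)$ up to doubly-logarithmic factors yields exactly the claimed $\Ohat$ bound. The only routine pieces are the descent-lemma bookkeeping and the $\epsilon_0$ concentration; all the real subtlety lives in the grid calibration of the previous paragraph.
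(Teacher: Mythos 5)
Your proposal is correct and takes essentially the same route as the paper: the same descent-lemma-plus-martingale bound on the per-run average squared gradient (the paper's Lemma~\ref{lem:sgd-convergence}), the same Markov-plus-repetition argument for the sub-sampled iterates, and the same vector-concentration-plus-union-bound argument for the validation step (the paper's Lemma~\ref{lem:minimal-norm-candidate}). Your ``grid calibration'' paragraph identifying $\eta_\diamond$ (the $T'$-optimal stepsize) and checking that $\eta_\diamond \leq \sqrt{2M}\,\eta_\star$ and $\eta_K \gtrsim \eta_\diamond$ up to doubly-log factors is precisely the role played by the paper's quantity $\eta_\star'$ and the inequality chain $\eta_1 \leq \eta_\star \leq \eta_\star' \leq 2\eta_K$; the degenerate-range caveat you flag is also present in the paper's proof (absorbed by $\Ohat$).
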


First, we note that the rate of \cref{alg:non-convex} matches the rate of \emph{fully-tuned} SGD up to logarithmic factors.
Additionally, the method produces a single point instead of the obscure average norm guarantee of SGD, a product of comparing multiple iterations of SGD.
Comparing the result to recent advances in the analysis of adaptive SGD with AdaGrad-like stepsizes \citep{Faw2022ThePO,attia2023sgd,liu2023high}, \cref{alg:non-convex} has only logarithmic dependency in the specification of the parameters $\etamin$ and $\etamax$ while adaptive SGD requires tight knowledge of $\fstar/\smstar$ for tuning or suffer a sub-optimal polynomial dependency on problem parameters.
In addition, the proof technique of \cref{thm:non-convex} is simple and straightforward compared to the cumbersome analysis of adaptive SGD with AdaGrad-like stepsizes.

In order to prove the theorem we need the following lemmas
(see their proofs at \cref{sec:proofs-non-convex})%
.
The first is a standard convergence result of constant stepsize SGD. Alternatively, one can use the in-expectation analysis of SGD as in the two-phase SGD method of \citet{ghadimi2013stochastic}.
\begin{lemma}[SGD convergence with high probability]\label{lem:sgd-convergence}
    Assume that $f$ is $\smstar$-smooth and lower bounded by $f^\star$. Then for the iterates of SGD with $\eta \leq \ifrac{1}{2 \smstar}$, it holds with probability at least $1-\delta$ that
    \begin{align*}
        \frac1T \sum_{t=1}^T \norm{\nabla f(w_t)}^2
        &\leq
        \brk*{
            \frac{4 \fstar}{\eta}
            + 12 \sigmastar^2 \log \tfrac{1}{\delta}
        }
        \frac1T
        + 4 \smstar \sigmastar^2 \eta
        ,
    \end{align*}
    where $\fstar = f(w_1)-f^\star$.
\end{lemma}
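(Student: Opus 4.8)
The plan is the classical one-step descent argument for smooth non-convex SGD, with the stochastic cross-terms controlled by an exponential-supermartingale tail bound rather than a Freedman/Bernstein inequality (which would require a uniform a.s.\ bound on the increments that is unavailable here).

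\textbf{Step 1: per-step descent.} Let $\mathcal{F}_{t-1}$ be the $\sigma$-algebra generated by the oracle's randomness up to (not including) step $t$, so that $w_t$ and $\nabla f(w_t)$ are $\mathcal{F}_{t-1}$-measurable, and write $g_t = \nabla f(w_t) + \xi_t$ with $\E[\xi_t\mid\mathcal{F}_{t-1}] = 0$ and $\norm{\xi_t} \le \sigmastar$ almost surely. Applying $\smstar$-smoothness to $w_{t+1} = w_t - \eta g_t$ and expanding $\nabla f(w_t)\dotp g_t = \norm{\nabla f(w_t)}^2 + \nabla f(w_t)\dotp\xi_t$ and $\norm{g_t}^2 \le \norm{\nabla f(w_t)}^2 + 2\,\nabla f(w_t)\dotp\xi_t + \sigmastar^2$ yields
\[
f(w_{t+1}) \le f(w_t) - \eta\brk*{1 - \tfrac{\smstar\eta}{2}}\norm{\nabla f(w_t)}^2 - \eta(1-\smstar\eta)\,\nabla f(w_t)\dotp\xi_t + \tfrac{\smstar\eta^2\sigmastar^2}{2}.
\]
Since $\eta \le \ifrac{1}{2\smstar}$ we have $1 - \ifrac{\smstar\eta}{2} \ge \ifrac34$ and $0 \le 1-\smstar\eta \le 1$. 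Telescoping over $t=1,\dots,T$ and using $f(w_1) - f(w_{T+1}) \le \fstar$ (as $f \ge f^\star$),
\[
\frac{3\eta}{4}\sum_{t=1}^T \norm{\nabla f(w_t)}^2 \le \fstar + \eta\sum_{t=1}^T X_t + \frac{T\smstar\eta^2\sigmastar^2}{2}, \qquad X_t \eqdef -(1-\smstar\eta)\,\nabla f(w_t)\dotp\xi_t.
\]

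\textbf{Step 2: concentrating the martingale.} Here $\{X_t\}$ is a martingale difference sequence that is not uniformly bounded, but conditionally on $\mathcal{F}_{t-1}$ it lies in an interval of width $2(1-\smstar\eta)\norm{\nabla f(w_t)}\sigmastar$ whose endpoints are $\mathcal{F}_{t-1}$-measurable. Hoeffding's lemma then gives, for every $\lambda \in \reals$,
\[
\Ex{e^{\lambda X_t} \mid \mathcal{F}_{t-1}} \le \exp\brk*{\tfrac{\lambda^2 \sigmastar^2}{2}\norm{\nabla f(w_t)}^2},
\]
so $M_t \eqdef \exp\brk*{\lambda\sum_{s \le t} X_s - \tfrac{\lambda^2\sigmastar^2}{2}\sum_{s \le t}\norm{\nabla f(w_s)}^2}$ is a supermartingale with $M_0 = 1$, and Ville's maximal inequality gives that with probability at least $1-\delta$,
\[
\sum_{t=1}^T X_t \le \frac{\lambda\sigmastar^2}{2}\sum_{t=1}^T \norm{\nabla f(w_t)}^2 + \frac{1}{\lambda}\log\frac1\delta.
\]
Choosing $\lambda = \ifrac{3}{4\sigmastar^2}$ makes the variance term exactly $\tfrac38\sum_t \norm{\nabla f(w_t)}^2$. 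Substituting this into Step 1, the term $\eta\cdot\tfrac38\sum_t\norm{\nabla f(w_t)}^2$ is absorbed into the left-hand side, leaving $\tfrac{3\eta}{8}\sum_t\norm{\nabla f(w_t)}^2$ there; dividing by $\tfrac{3\eta T}{8}$ gives
\[
\frac1T\sum_{t=1}^T \norm{\nabla f(w_t)}^2 \le \frac{8\fstar}{3\eta T} + \frac{32\sigmastar^2\log(1/\delta)}{9T} + \frac{4\smstar\sigmastar^2\eta}{3} \le \brk*{\frac{4\fstar}{\eta} + 12\sigmastar^2\log\tfrac1\delta}\frac1T + 4\smstar\sigmastar^2\eta,
\]
using $\tfrac83 \le 4$, $\tfrac{32}{9} \le 12$ and $\tfrac43 \le 4$.

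The only point needing care is Step 2: one should not invoke a Bernstein/Freedman bound keyed to a uniform a.s.\ bound on the increments, since $\norm{\nabla f(w_t)}$ is not controlled a priori. The fix is that, given $\mathcal{F}_{t-1}$, only $\xi_t$ is random, so each $X_t$ is conditionally bounded --- hence conditionally sub-Gaussian for \emph{all} $\lambda$ --- with a \emph{predictable} variance proxy proportional to $\norm{\nabla f(w_t)}^2$, which is precisely what allows the resulting variance term to be re-absorbed into the quantity being bounded. Everything else is elementary algebra; alternatively, as the lemma statement remarks, one can sidestep the high-probability analysis entirely by invoking the in-expectation two-phase SGD bound of \citet{ghadimi2013stochastic}.
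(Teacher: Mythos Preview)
Your proof is correct and follows essentially the same route as the paper: smoothness descent, telescoping, and a sub-Gaussian supermartingale tail bound on the cross term $\nabla f(w_t)\dotp\xi_t$ (the paper invokes this via Lemma~1 of \citet{li2020high}, which is precisely your Hoeffding-lemma-plus-Ville argument). The only cosmetic difference is that the paper keeps $\norm{g_t}^2$ intact through the telescoping and bounds it as $\norm{g_t}^2 \le 2\norm{\nabla f(w_t)}^2 + 2\sigmastar^2$ at the end, whereas you expand $\norm{g_t}^2$ upfront and fold its cross term into the martingale---this gives you slightly sharper constants but is otherwise the same argument.
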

Next is a standard success amplification technique for choosing between candidate solutions SGD; the precise version below is due to \citet{ghadimi2013stochastic}.
\begin{lemma}\label{lem:minimal-norm-candidate}
    Given candidates $w_1,\ldots,w_N$, let $\wout = \argmin_{n \in [N]} \norm{\sum_{t=1}^T \goracle_t(w_n)}$ where $g_t(w)$ for $t \in [T]$ are independent stochastic gradients at $w$.
    Then assuming 
    $\E[\goracle_t(w)]=\nabla f(w)$ and $\Pr(\norm{\nabla f(w)-\goracle_t(w)} \leq \sigma)=1$ 
    for any fixed $w$, for any $\delta \in (0,1)$, we have with probability at least $1-\delta$ that
    \begin{align*}
        \norm{\nabla f(\wout)}^2
        &\leq
        4 \min_{n \in [N]} \norm{\nabla f(w_n)}^2
        + \frac{24 (1+3 \log \tfrac{N}{\delta}) \sigma^2}{T}
        .
    \end{align*}
\end{lemma}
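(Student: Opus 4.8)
The plan is to reduce the statement to a single dimension-free concentration inequality for the empirical mean gradient at each (fixed) candidate, and then finish with a two-line triangle-inequality argument that uses the defining minimality of $\wout$. Throughout, the candidates $w_1,\ldots,w_N$ are treated as fixed, as in the statement, and for each $n$ the gradients $\goracle_1(w_n),\ldots,\goracle_T(w_n)$ are independent with mean $\nabla f(w_n)$ and $\sigma$-bounded noise.

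First I would fix $n\in[N]$, write $\hat g_n \eqdef \tfrac1T\sum_{t=1}^T \goracle_t(w_n)$, and note that $\wout = w_{\hat n}$ with $\hat n \in \argmin_{n}\norm{\hat g_n}$ (minimizing $\norm{\sum_t\goracle_t(w_n)}$ is the same as minimizing $\norm{\hat g_n}$). The centered summands $X_t \eqdef \goracle_t(w_n)-\nabla f(w_n)$ are independent, mean zero, and satisfy $\norm{X_t}\le\sigma$ a.s. I would control the scalar $Z\eqdef\norm{\sum_{t=1}^T X_t}$ by a bounded-differences (McDiarmid/Azuma) argument: replacing one $X_t$ changes $Z$ by at most $\norm{X_t}+\norm{X_t'}\le 2\sigma$, and $\E Z \le \sqrt{\E Z^2}=\sqrt{\sum_t\E\norm{X_t}^2}\le\sigma\sqrt T$ since the cross terms vanish by independence and zero mean. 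This gives, with probability at least $1-\delta/N$, $Z\le\sigma\sqrt T\brk*{1+\sqrt{2\log(N/\delta)}}$, i.e. $\norm{\hat g_n-\nabla f(w_n)}\le\Delta\eqdef\tfrac{\sigma}{\sqrt T}\brk*{1+\sqrt{2\log(N/\delta)}}$, and a union bound over $n\in[N]$ makes this hold simultaneously for all $n$ with probability at least $1-\delta$; note $\Delta^2\le\tfrac{2\sigma^2(1+2\log(N/\delta))}{T}$ by AM-GM.

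On this event, let $n^\star\in\argmin_n\norm{\nabla f(w_n)}$. Then, using the triangle inequality, the minimality of $\hat n$, and the triangle inequality again,
\begin{align*}
\norm{\nabla f(\wout)} \;\le\; \norm{\hat g_{\hat n}}+\Delta \;\le\; \norm{\hat g_{n^\star}}+\Delta \;\le\; \norm{\nabla f(w_{n^\star})}+2\Delta \;=\; \min_{n\in[N]}\norm{\nabla f(w_n)}+2\Delta .
\end{align*}
Squaring and applying $(a+b)^2\le 2a^2+2b^2$ gives $\norm{\nabla f(\wout)}^2\le 2\min_n\norm{\nabla f(w_n)}^2+8\Delta^2$; substituting the bound on $\Delta^2$ and loosening constants ($2\le4$ and $16(1+2\log(N/\delta))\le24(1+3\log(N/\delta))$) yields exactly the claimed inequality.

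The only delicate point is obtaining a \emph{dimension-free} tail bound for $Z=\norm{\sum_t X_t}$; the bounded-differences route handles this cleanly because $Z$ is scalar with $d$-independent increment constants, whereas a naive coordinatewise estimate would cost a spurious $\sqrt d$. Everything else — the union bound, the two triangle inequalities, and $(a+b)^2\le2a^2+2b^2$ — is routine, and the base-$2$ versus natural-logarithm discrepancy is harmless here (indeed $\ln\le\log$ on the relevant range, and in any case there is ample slack in the constants).
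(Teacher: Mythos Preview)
Your proof is correct and follows the same overall architecture as the paper's: a uniform (over the $N$ candidates) dimension-free concentration bound for the empirical gradient mean, combined via a triangle-/quadratic-type inequality with the minimality of $\wout$. The execution differs in two minor but interesting ways. First, for the concentration step the paper invokes the vector sub-Gaussian martingale lemma of \citet{ghadimi2013stochastic} (their Lemma~2.3, restated in the paper as \cref{lem:mds-sub-gaussian-bound}) and sets $\lambda=\sqrt{3\log(N/\delta)}$; you instead apply McDiarmid's bounded-differences inequality to the scalar $Z=\norm{\sum_t X_t}$ after bounding $\E Z\le\sigma\sqrt T$. Your route is more self-contained (no external martingale lemma needed) and makes the dimension-freeness transparent; the paper's route is slightly more general since it allows sub-Gaussian rather than a.s.-bounded noise. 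Second, for the combination step the paper works at the squared-norm level, applying $\norm{a+b}^2\le2\norm{a}^2+2\norm{b}^2$ twice to get the factor $4$ directly; you work at the norm level (two triangle inequalities sandwiching the $\argmin$), obtain the sharper factor $2$, and only then square and relax $2\le4$. Either way the constants land comfortably inside the stated bound.
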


We proceed to prove the theorem.

\begin{proof}[Proof of \cref{thm:non-convex}]
    To fix the discrepancy between $T$ and $T'$, we denote
    \begin{align*}
        \eta_\star' \eqdef \min\set*{\frac{1}{2 \smstar}, \sqrt{\frac{\fstar \log_+(\tfrac{\etamax}{\etamin})\log_+(\tfrac{1}{\delta})}{\smstar \sigmastar^2 T}}} \geq \eta_\star,
    \end{align*}
    Note that $$\eta_1 \leq \eta_\star \leq \eta_\star' \leq \eta_\star \sqrt{\log_+(\tfrac{\etamax}{\etamin})\log_+(\tfrac{1}{\delta})} \leq \etamax \sqrt{\log_+(\tfrac{\etamax}{\etamin})\log_+(\tfrac{1}{\delta})} = 2\eta_K.$$
    Let $\tau = \max\set{k \in [K] : \eta_k \leq \eta_\star'}$. $\tau$ is well-defined as $\eta_1 \leq \eta_\star'$. Hence, since $2 \eta_K \geq \eta_\star'$, $\eta_\tau \in (\ifrac{\eta_\star'}{2},\eta_\star']$.
    Thus, by \cref{lem:sgd-convergence}, with probability at least $1-\delta$,
    \begin{align*}
        \frac{1}{T'} \sum_{t=1}^{T'} \norm{\nabla f(w_t^{(\tau)})}^2
        &\leq
        \brk*{
            \frac{4\fstar}{\eta_\tau}
            + 12 \sigmastar^2 \log \tfrac{1}{\delta}
        }
        \frac{1}{T'}
        + 4 \smstar \sigmastar^2 \eta_\tau
        \\&
        \leq
        \brk*{
            \frac{8\fstar}{\eta_\star'}
            + 12 \sigmastar^2 \log \tfrac{1}{\delta}
        }
        \frac{1}{T'}
        + 4 \smstar \sigmastar^2 \eta_\star'
        \\&
        \leq
        \frac{K(1+N)(16 \smstar \fstar
        + 12 \sigmastar^2 \log \tfrac{1}{\delta})}{T}
        + \frac{12 K(1+N)\sigmastar \sqrt{\smstar \fstar}}{\sqrt{T\log_+(\tfrac{\etamax}{\etamin})\log_+(\tfrac{1}{\delta})}}
        .
    \end{align*}
    By Markov's inequality, with probability at least $\frac12$, a uniformly at random index $k \in [T']$ satisfy
    \begin{align*}
        \norm{\nabla f(w_k^{(\tau)})}^2
        &\leq
        2 \E_k[\norm{\nabla f(w_k^{(\tau)})}^2]
        =
        \frac{2}{T'} \sum_{t=1}^T \norm{\nabla f(w_t^{(\tau)})}^2
        ,
    \end{align*}
    and as we sample $\log \tfrac{1}{\delta}$ random indices, with probability at least $1-\delta$ we add at least one point with the above guarantee to $S$.
    Hence, by a union bound with the convergence guarantee, with probability at least $1-2 \delta$ we add to $S$ some $w_k^{(\tau)}$ with
    \begin{align*}
        \norm{\nabla f(w_k^{(\tau)})}^2
        &\leq
        \frac{K(1+N)(32 \smstar \fstar
        + 24 \sigmastar^2 \log \tfrac{1}{\delta})}{T}
        + \frac{24 K(1+N)\sigmastar \sqrt{\smstar \fstar}}{\sqrt{T\log_+(\tfrac{\etamax}{\etamin})\log_+(\tfrac{1}{\delta})}}
        .
    \end{align*}
    The last step of the algorithm is an application of \cref{lem:minimal-norm-candidate} (the bounded noise assumption satisfy the sub-Gaussian requirement of \cref{lem:minimal-norm-candidate}) with the set of $KN$ candidates, and we conclude with a union bound that with probability at least $1-3\delta$, $\wout$, the output of \cref{alg:non-convex} satisfy
    \begin{align*}
        \norm{\nabla f(\wout)}^2
        &\leq
        \frac{K(1+N)(128 \smstar \fstar
        + 192 \sigmastar^2 \log \tfrac{KN}{\delta})}{T}
        + \frac{96 K(1+N)\sigmastar \sqrt{\smstar \fstar}}{\sqrt{T\log_+(\tfrac{\etamax}{\etamin})\log_+(\tfrac{1}{\delta})}}
        \\
        &=
        \Ohat\brk*{
            \frac{\log_+(\tfrac{\etamax}{\etamin}) \log_+(\tfrac{1}{\delta})(\smstar \fstar
            + \sigmastar^2 \log \tfrac{1}{\delta})}{T}
            + \frac{\sigmastar \sqrt{\smstar \fstar \log_+ (\tfrac{\etamax}{\etamin}) \log_+(\tfrac{1}{\delta})}}{\sqrt{T}}
        }
        .
    \end{align*}
    For each $k \in [K]$ we perform a single $T'$-steps SGD execution and $T' N$ gradient queries to approximate the norm of the randomly selected points, to a total of $T=T'K(1+N)$ queries. We translate the probability of $1-3\delta$ to $1-\delta$ for the $\Ohat$ notation, limiting $\delta \in (0,\tfrac13)$.
\end{proof}

\section{Parameter-Free Convex Optimization}
\label{sec:convex}

We proceed to consider convex problems, where the objective is either convex Lipschitz or convex smooth. The first result in this section assumes access via a first-order oracle (which includes access to noisy function values), while later results deal with the case of stochastic gradient oracle access.

\subsection{Optimization with a stochastic first-order oracle}
\label{sec:convex-zero-order}

In this setting we assume access using a first-order oracle $\oracle(w)=(\foracle(w),\goracle(w))$ with bounded noise.
Stochastic function value access (with a reasonable level of noise variance) is often a valid and natural assumption since (i) it is often the case that stochastic gradients are obtained by random samples and those can be used for both function and gradient evaluation; and (ii) the practice of tuning using a validation set implicitly assume this exact assumption.
We present a fully parameter-free method which utilize the function value oracle to perform model selection, similar to the use of the gradient oracle for model selection in \cref{alg:non-convex}.
The algorithm is detailed in
\cref{alg:convex-zero-order}
and following is its convergence result.

\begin{algorithm2e}[t]
    \SetAlgoLined
    \DontPrintSemicolon
    \KwIn{$w_1$, $T$, $\delta$, $\etamin$, $\etamax$
    (see Eq.~\ref{eq:zero-order-tuning} for replacing $\etamin,\etamax$ with parameter ranges)
    }
    $S \gets \set{}$\;
    $K \gets \log \tfrac{\etamax}{\etamin}$\;
    $N \gets \log \tfrac{1}{\delta}$\;
    $T' \gets \tfrac{T}{2 K N}$\;
    \For{$k \gets 1,2,\dots,K$}{
        $\eta_k \gets 2^{k-1} \etamin \sqrt{2KN}$\Comment*[r]{The $\sqrt{2KN}$ factor fix discrepancy between $T$ and $T'$}
        \For{$n \gets 1,2,\ldots,N$}{
            $\wout_k^{(n)} \gets \sgd\brk{w_1,\eta_k,T'}$\Comment*[r]{$\wout_k^{(n)}$ is the average of the iterates of \sgd}
            $S \gets S \cup \set{\wout_k^{(n)}}$\;
        }
    }
    \Return{$\argmin_{w \in S} \sum_{t=1}^{T'} \foracle_t(w)$}
    \Comment*[r]{$\foracle_t(w)$ is an independent evaluation of $\foracle(w)$}
    \caption{Convex SGD tuning} \label{alg:convex-zero-order}
\end{algorithm2e}

\begin{restatable}{theorem}{thmzeroorder}
\label{thm:convex-zero-order}
    Let $f : \reals^d \to \reals$ be a differentiable, $\lipstar$-Lipschitz and convex function, admitting a minimizer $w^\star \in \argmin_{w \in \reals^d} f(w)$. Let $\foracle$ be a $\fsigma$-bounded unbiased zero-oracle of $f$ and  $\goracle$ be a $\sigmastar$-bounded unbiased gradient oracle of $f$.
    Let $\etamin,\etamax > 0$ such that
    \begin{align*}
        \etamin \leq \frac{\norm{w_1-w^\star}}{\sqrt{(\lipstar^2+\sigmastar^2)T}} \leq \etamax
    \end{align*}
    for some $w_1 \in \reals^d$.
    Then \cref{alg:convex-zero-order} performs $T$ queries
    and produce $\wout$ such that with probability at least $1-2 \delta$,
    \begin{align*}
        f\brk*{\wout} & -f(w^\star)
        =
        \Ohat \Bigg(
        \frac{\fsigma \log (\tfrac{1}{\delta}) \sqrt{\log (\tfrac{\etamax}{\etamin})}}{\sqrt{T}}
        +
        \frac{\norm{w_1-w^\star}\sqrt{(\lipstar^2+\sigmastar^2)\log (\tfrac{\etamax}{\etamin}) \log (\tfrac{1}{\delta})}}{\sqrt{T}}
        \Bigg)
        .
    \end{align*}
\end{restatable}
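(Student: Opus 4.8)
The plan is to follow the same template as the proof of \cref{thm:non-convex}, but with the gradient-norm model selection replaced by selection through the noisy value oracle. There are three ingredients: (i) a convergence bound for averaged constant-stepsize SGD on a convex $\lipstar$-Lipschitz objective; (ii) amplification of that bound to a $1-\delta$ event using the $N$ independent repetitions performed at each stepsize; and (iii) a concentration argument showing that returning the candidate in $S$ with the smallest empirical value-sum is competitive with the best candidate in $S$.

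For (i) I would invoke the textbook in-expectation guarantee for the averaged iterate $\wout_k^{(n)}$ of $\sgd(w_1,\eta_k,T')$: expanding the one-step recursion on $\norm{w_t-w^\star}^2$, using convexity and $\E\norm{\goracle(w_t)}^2 \le \lipstar^2+\sigmastar^2$ (Lipschitzness bounds $\norm{\nabla f}$ and the noise is $\sigmastar$-bounded),
\[
\Ex{f\brk*{\wout_k^{(n)}}-f(w^\star)} \le \frac{\diamstar^2}{2\eta_k T'} + \frac{\eta_k}{2}\brk*{\lipstar^2+\sigmastar^2}.
\]
The key bookkeeping step is grid alignment: the value $\tilde\eta \eqdef \diamstar\sqrt{2KN}/\sqrt{(\lipstar^2+\sigmastar^2)T}$ balances the two terms for $T'=T/(2KN)$ steps, and since $\etamin \le \diamstar/\sqrt{(\lipstar^2+\sigmastar^2)T} \le \etamax$, the $\sqrt{2KN}$ factor hard-coded into $\eta_k = 2^{k-1}\etamin\sqrt{2KN}$ guarantees $\eta_1 \le \tilde\eta \le 2\eta_K$. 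Hence some index $\tau$ satisfies $\eta_\tau \in (\tilde\eta/2,\tilde\eta]$, for which the right-hand side above is $O\brk*{\diamstar\sqrt{(\lipstar^2+\sigmastar^2)KN/T}}$.

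For (ii), Markov's inequality shows that each of the $N$ runs at stepsize $\eta_\tau$ outputs, with probability $\ge\tfrac12$, an iterate with suboptimality at most twice that expectation; independence and $N=\log\tfrac1\delta$ then give, with probability $\ge 1-\delta$, at least one such iterate in $S$. For (iii) I would prove a value-oracle analogue of \cref{lem:minimal-norm-candidate}: since $\foracle$ has $\fsigma$-bounded (hence $\fsigma$-sub-Gaussian) noise, Hoeffding plus a union bound over the $\abs{S}=KN$ candidates give, with probability $\ge 1-\delta$, $\abs{\tfrac1{T'}\sum_{t=1}^{T'}\foracle_t(w)-f(w)} = O\brk*{\fsigma\sqrt{\log(KN/\delta)/T'}}$ simultaneously for all $w\in S$; on that event the returned $\wout$ satisfies $f(\wout)-f(w^\star) \le \min_{w\in S}\brk*{f(w)-f(w^\star)} + O\brk*{\fsigma\sqrt{\log(KN/\delta)/T'}}$. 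Substituting $T'=T/(2KN)$ and $KN=\log(\etamax/\etamin)\log\tfrac1\delta$, the bound from (i)--(ii) produces the $\diamstar$ term of the statement, while $\sqrt{KN\log(KN/\delta)}=\Ohat\brk*{\sqrt{\log(\etamax/\etamin)}\,\log\tfrac1\delta}$ produces the $\fsigma$ term; a union bound over the two failure events gives probability $\ge 1-2\delta$, and the query count is $KNT'$ for the SGD runs plus $KNT'$ for the $T'$ value queries at each of the $KN$ candidates, totaling $2KNT'=T$.

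I expect the point requiring the most care to be the value-based selection: unlike the gradient case, no single ``best'' iterate is produced directly, so one must rule out that any of the many bad candidates in $S$ is preferred simply because its value samples happened to come out small — this is what forces the union bound over $\abs{S}$ and, together with the $T'$-versus-$T$ rescaling, is responsible for the $\log\tfrac1\delta\,\sqrt{\log(\etamax/\etamin)}$ factor multiplying $\fsigma$. A secondary subtlety is that a clean high-probability analysis of averaged SGD in the \emph{unconstrained} setting is delicate (the martingale term involves $\norm{w_t-w^\star}$, which must itself be controlled); routing through the in-expectation bound and amplifying via the $N$ repetitions avoids this altogether.
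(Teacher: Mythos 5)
Your proposal is correct and follows essentially the same route as the paper's proof: invoke the in-expectation bound of \cref{lem:convex-sgd-expectation}, locate a grid index $\tau$ so that $\eta_\tau$ is within a factor two of the balancing stepsize $\diamstar\sqrt{2KN}/\sqrt{(\lipstar^2+\sigmastar^2)T}$, amplify via Markov plus $N$ independent repetitions, and finish with the Hoeffding-plus-union-bound selection lemma (\cref{lem:minimal-function-candidate}) and a union bound of the two failure events. Your remark that routing through the in-expectation bound sidesteps the delicacy of a high-probability analysis of unconstrained averaged SGD is precisely the reason the paper adopts this structure.
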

\cref{alg:convex-zero-order} obtains the same rate of convergence as \emph{tuned} SGD up to logarithmic factors and a term of order $\ifrac{\fsigma}{\sqrt{T}}$, being parameter-free given the mild assumption that $\fsigma = O(\diamstar(\lipstar+\sigmastar))$.
Similarly to \cref{alg:non-convex}, the inputs $\etamin,\etamax$ may be replaced by problem parameters ranges
and setting
\begin{align}\label{eq:zero-order-tuning}
    \etamin &= \frac{\diammin}{\sqrt{(\lipmax^2 + \sigmamax^2) T}}
    \qquad
    \text{ and }
    \qquad
    \etamax = \frac{\diammax}{\sqrt{(\lipmin^2 + \sigmamin^2) T}}.
\end{align}
We note that \cref{alg:convex-zero-order} achieves an analogous guarantee for convex smooth objectives (via a proper setting of $\etamin,\etamax$ using $\diammin,\diammax,\smmin,\smmax$ and $\sigmamin,\sigmamax$).

Before proving \cref{thm:convex-zero-order}, we need the following standard lemmas
(for completeness we provide their proofs at \cref{sec:convex-proofs}).
First is an in-expectation convergence of SGD.
\begin{lemma}\label{lem:convex-sgd-expectation}
    Let $f : \reals^d \to \reals$ be a differentiable, $\lipstar$-Lipschitz and convex function, admitting a minimizer $w^\star \in \argmin_{w \in \reals^d} f(w)$ and let $\goracle : \reals^d \to \reals^d$ be a $\sigmastar$-bounded unbiased gradient oracle of $f$. Given some $w_1 \in \reals^d$, the iterates of $T$-steps SGD with stepsize $\eta>0$ satisfy
    \begin{align*}
        \E\brk[s]*{f\brk*{\frac1T \sum_{t=1}^T w_t}-f(w^\star)}
        &\leq
        \frac{\norm{w_1-w^\star}^2}{2 \eta T}
        + \frac{\eta (\lipstar^2 + \sigmastar^2)}{2}
        .
    \end{align*}
\end{lemma}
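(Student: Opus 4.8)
The plan is to run the textbook one-step analysis of SGD and telescope. Write $g_t = \goracle(w_t)$ for the stochastic gradient queried at step $t$, and let $\filtration_t$ denote the $\sigma$-algebra generated by $g_1,\dots,g_{t-1}$, so that the iterate $w_t$ (produced by the update $w_{t+1}=w_t-\eta g_t$ starting from $w_1$) is $\filtration_t$-measurable. Expanding the squared distance to the minimizer,
$$\norm{w_{t+1}-w^\star}^2 = \norm{w_t - w^\star}^2 - 2\eta \, g_t \dotp (w_t - w^\star) + \eta^2 \norm{g_t}^2,$$
and rearranging isolates the ``regret'' term:
$$g_t \dotp (w_t - w^\star) = \tfrac{1}{2\eta}\brk*{\norm{w_t - w^\star}^2 - \norm{w_{t+1}-w^\star}^2} + \tfrac{\eta}{2}\norm{g_t}^2.$$

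Next I would take conditional expectation given $\filtration_t$. Unbiasedness gives $\Ex{g_t \mid \filtration_t} = \nabla f(w_t)$, so convexity of $f$ yields $\Ex{g_t \dotp (w_t - w^\star) \mid \filtration_t} = \nabla f(w_t) \dotp (w_t - w^\star) \ge f(w_t) - f(w^\star)$. For the second-moment term, decompose into bias and noise, $\Ex{\norm{g_t}^2 \mid \filtration_t} = \norm{\nabla f(w_t)}^2 + \Ex{\norm{g_t - \nabla f(w_t)}^2 \mid \filtration_t}$; since $f$ is $\lipstar$-Lipschitz we have $\norm{\nabla f(w_t)} \le \lipstar$, and the $\sigmastar$-bounded noise assumption gives $\norm{g_t - \nabla f(w_t)} \le \sigmastar$ almost surely, hence $\Ex{\norm{g_t}^2 \mid \filtration_t} \le \lipstar^2 + \sigmastar^2$.

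Combining these bounds and taking full expectations gives, for each $t$,
$$\Ex{f(w_t) - f(w^\star)} \le \tfrac{1}{2\eta}\brk*{\Ex{\norm{w_t - w^\star}^2} - \Ex{\norm{w_{t+1} - w^\star}^2}} + \tfrac{\eta}{2}(\lipstar^2 + \sigmastar^2).$$
Summing over $t=1,\dots,T$ telescopes the distance terms; dropping the nonnegative leftover $\Ex{\norm{w_{T+1}-w^\star}^2}\ge 0$ leaves $\sum_{t=1}^T \Ex{f(w_t) - f(w^\star)} \le \tfrac{\norm{w_1 - w^\star}^2}{2\eta} + \tfrac{\eta T}{2}(\lipstar^2 + \sigmastar^2)$. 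Dividing by $T$ and applying Jensen's inequality to the convex $f$, i.e. $f(\tfrac1T \sum_{t=1}^T w_t) \le \tfrac1T \sum_{t=1}^T f(w_t)$, gives exactly the claimed bound.

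I do not anticipate a genuine obstacle, as the argument is entirely classical; the only points needing a little care are the measurability bookkeeping (so that $\Ex{g_t\mid\filtration_t}$ is precisely $\nabla f(w_t)$) and the second-moment bound, which must invoke both Lipschitzness (to control $\norm{\nabla f(w_t)}$) and the almost-sure noise bound (to control the oracle variance) in order to obtain the clean constant $\lipstar^2+\sigmastar^2$.
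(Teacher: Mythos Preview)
Your proposal is correct and follows essentially the same route as the paper's proof: expand $\norm{w_{t+1}-w^\star}^2$, telescope, bound $\E\norm{g_t}^2\le \lipstar^2+\sigmastar^2$ via the Lipschitz and bounded-noise assumptions, and conclude with convexity and Jensen. The only cosmetic difference is that the paper sums first and then takes expectation, whereas you condition on $\filtration_t$ first and then sum; the arguments are otherwise identical.
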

The second is a candidate selection lemma based on stochastic function value access.
\begin{lemma}\label{lem:minimal-function-candidate}
    Let $f : \reals^d \to \reals$ and $\foracle$ a zero-order oracle of $f$ such that
    \begin{align*}
        \forall ~ w \in \reals^d ~ : \E[\foracle(w)]=f(w) \quad \text{ and } \quad \Pr\brk{\abs{\foracle(w)-f(w)} \leq \fsigma}=1
    \end{align*}
    for some $\fsigma > 0$.
    Given candidates $w_1,\ldots,w_N$, let
    \begin{align*}
        \wout
        &=
        \argmin_{n \in [N]} \frac{1}{T} \sum_{t=1}^T \foracle_t(w_n),
    \end{align*}
    where $\foracle_t(w_n)$ for $t \in [T]$ are independent function evaluations at $w_n$. Then for any $\delta \in (0,1)$, with probability at least $1-\delta$,
    \begin{align*}
        f(\wout)
        \leq
        \min_{n \in [N]} f(w_s) + \sqrt{\frac{8 \fsigma^2 \log \tfrac{2 N}{\delta}}{T}}
        .
    \end{align*}
\end{lemma}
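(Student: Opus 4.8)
The plan is a routine two-sided concentration estimate for each candidate, followed by a union bound --- the function-value analogue of the gradient-based selection argument in \cref{lem:minimal-norm-candidate}.

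First I would fix a single candidate $w_n$ and look at its empirical mean $\hat f_n \eqdef \tfrac1T\sum_{t=1}^T \foracle_t(w_n)$. By hypothesis the summands are i.i.d.\ with common mean $f(w_n)$ and lie almost surely in the length-$2\fsigma$ interval $[f(w_n)-\fsigma,\, f(w_n)+\fsigma]$, so Hoeffding's inequality gives, for every $\epsilon>0$,
\[
\Pr\brk*{\abs{\hat f_n - f(w_n)} \ge \epsilon} \le 2\exp\brk*{-\frac{T\epsilon^2}{2\fsigma^2}}.
\]
Taking $\epsilon = \sqrt{\tfrac{2\fsigma^2\log(2N/\delta)}{T}}$ makes the right-hand side exactly $\delta/N$.

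Next I would union bound this over $n \in [N]$, so that with probability at least $1-\delta$ the event ``$\abs{\hat f_n - f(w_n)} \le \epsilon$ for all $n \in [N]$'' holds. The one point to be careful about is that $\wout$ is data-dependent, so one cannot apply concentration directly to $\hat f_{\wout}$; but the event above is uniform over all candidates and therefore also controls $\hat f$ at the random index $\wout$. On this event, writing $n^\star \in \argmin_{n\in[N]} f(w_n)$ and using that $\wout$ minimizes $\hat f_n$ over $n$, I would chain
\[
f(\wout) \le \hat f_{\wout} + \epsilon \le \hat f_{n^\star} + \epsilon \le f(w_{n^\star}) + 2\epsilon = \min_{n\in[N]} f(w_n) + 2\epsilon,
\]
and since $2\epsilon = \sqrt{\tfrac{8\fsigma^2\log(2N/\delta)}{T}}$ this is exactly the claimed bound. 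Beyond invoking Hoeffding and the union bound there is no real obstacle; the only subtlety is the data-dependence of $\wout$ just noted, which the uniform concentration event handles.
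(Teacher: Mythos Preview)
Your proposal is correct and follows essentially the same approach as the paper: apply Hoeffding's inequality to each candidate's empirical mean, union bound over the $N$ candidates with $\epsilon=\sqrt{2\fsigma^2\log(2N/\delta)/T}$, and then chain through the minimality of $\wout$ to pick up $2\epsilon$. The paper's proof is organized slightly differently (it writes out the decomposition $f(\wout)=\hat f_{\wout}+(f(\wout)-\hat f_{\wout})$ first and then bounds the two error terms), but the argument is identical in substance.
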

We proceed to prove the convergence result.
\begin{proof}[Proof of \cref{thm:convex-zero-order}]
    For each pair $(k,n)$, $T'$ gradient queries are made for the SGD and additional $T'$ function queries for the selection step. Hence, $T=2KNT'$ queries are performed.
    By the condition of $\etamin$ and $\etamax$, there is some $\tau \in [K]$ such that
    \begin{align*}
        \eta_\tau \in \bigg(\frac{\norm{w_1-w^\star}}{2\sqrt{(\lipstar^2+\sigmastar^2)T'}},\frac{\norm{w_1-w^\star}}{\sqrt{(\lipstar^2+\sigmastar^2)T'}}\bigg].
    \end{align*}
    Applying Markov's inequality to \cref{lem:convex-sgd-expectation} with $\eta_\tau$, for all $n \in [N]$, with probability at least $\frac12$,
    \begin{align*}
        f\brk*{\wout_\tau^{(n)}}-f(w^\star)
        &\leq
        \frac{\norm{w_1-w^\star}^2}{\eta_\tau T'}
        + \eta_\tau (\lipstar^2 + \sigmastar^2)
        \leq
        \frac{3 \norm{w_1-w^\star}\sqrt{\lipstar^2+\sigmastar^2}}{\sqrt{T'}}
        .
    \end{align*}
    Hence, by a union bound, with probability at least $1-\delta$, $S$ contains some $\wout_\tau^{(n)}$ with the above guarantee.
    We conclude by a union bound of the above with \cref{lem:minimal-function-candidate} and substituting $T'=\frac{T}{2 \log (\ifrac{\diammax}{\diammin})\log (\ifrac{1}{\delta})}$, such that with probability at least $1-2\delta$,
    \begin{align*}
        f\brk*{\wout}-f(w^\star)
        &\leq
        \frac{3 \norm{w_1-w^\star}\sqrt{(\lipstar^2+\sigmastar^2)\log (\tfrac{\etamax}{\etamin}) \log (\tfrac{1}{\delta})}}{\sqrt{T}}
        +\frac{\fsigma \sqrt{8 \log (\tfrac{2 K N}{\delta}) \log (\tfrac{1}{\delta}) \log (\tfrac{\etamax}{\etamin})}}{\sqrt{T}}
        \\
        &=
        \Ohat\brk*{\frac{\norm{w_1-w^\star}\sqrt{(\lipstar^2+\sigmastar^2)\log (\tfrac{\etamax}{\etamin}) \log (\tfrac{1}{\delta})}+\fsigma \log (\tfrac{1}{\delta}) \sqrt{\log (\tfrac{\etamax}{\etamin})}}{\sqrt{T}}}
        .\qedhere
    \end{align*}
\end{proof}

\subsection{Optimization with a stochastic gradient oracle}

While function value access is a reasonable assumption which leads to a simple parameter-free method, much of the existing work on adaptive, self-tuning optimization only assume access to stochastic gradient queries.
In this section we present a parameter-free method, using only stochastic gradient access, which match the convergence rate of \emph{tuned} SGD up to a lower-order term $O(\ifrac{\sigmamax}{T})$ and poly-logarithmic factors.
Our proposed method (\cref{alg:convex-lipschitz-new}) is based on tuning the diameter of the projected variant of Stochastic Gradient Descent with AdaGrad-like stepsizes method (\adapsgd). For a given diameter, the update step of \adasgd is
    $w_{t+1} = \projop_{w_1,\diam} \brk{w_t-\eta_t g_t}$,
where $\projop_{w_1,\diam}(\cdot)$ is the Euclidean projection onto the $\diam$-ball centered at the initialization $w_1$, $g_t=\goracle(w_t)$ is the stochastic gradient at step $t$, and for some $\initeta > 0$ and $\initg \geq 0$,
\begin{align*}
    \eta_t
    =
    \frac{\initeta \diam}{\sqrt{\initg^2 + \sum_{s=1}^t \norm{g_s}^2}}
    .
\end{align*}
Our method performs multiple runs of \adapsgd with an exponential grid of diameters and stops when all iterations are fully within the $\diam$-ball.
The method may be considered as a middle ground between \citet{carmon2022making} and \citet{Ivgi2023DoGIS} as it combines tuning and dynamic stepsizes.

\begin{algorithm2e}[t]
    \SetAlgoLined
    \DontPrintSemicolon
    \KwIn{$\diammin$, $\diammax$, $\lipmax$, $\sigmamax$, $T$ and $\delta$}
    $K \gets \log \tfrac{\diammax}{\diammin}$
    \;
    $\initg \gets \sqrt{5 \sigmamax^2 \log \tfrac{T}{\delta}}$\;
    $\theta_{T,\delta} \gets \log \tfrac{60 \log (6T)}{\delta}$\;
    $\initeta \gets 
    \bigg(
    48 \log \brk*{1 + \frac{2 \lipmax^2 T + 2 \sigmamax^2 T}{5 \sigmamax^2 \log \tfrac{T}{\delta}}}
    +
    32 \log \brk{1+T}
    +
    128 \sqrt{\theta_{T,\delta} \log \brk{1+T} + \frac{\theta_{T,\delta}^2}{5 \log \tfrac{T}{\delta}}}
    \bigg)
    ^{-1}$\;
    \For{$k \gets 1,2,\dots,K$}{
        $\diam_k \gets 2^{k+2} \diammin$\;
        $w_1^{(k)},\ldots,w_T^{(k)} \gets \adapsgd\brk{\diam_k,\initeta,\initg,w_1,T}$\;
        \If{$\max_{t \leq T} \norm{w_{t+1}^{(k)} - w_1} < \diam_k$}{
            \Return{$\bar{w}^{(k)} \gets \frac{1}{T} \sum_{t=1}^T w_t^{(k)}$}
        }
    }
    \Return{$w_1$} \Comment*[r]{Failure case}
    \caption{Adaptive projected SGD tuning} \label{alg:convex-lipschitz-new}
\end{algorithm2e}

Following is the convergence result of our method.

\begin{theorem}\label{thm:convex-lipschitz}
    Let $f : \reals^d \to \reals$ be a differentiable, $\lipstar$-Lipschitz and convex function, admitting a minimizer $w^\star \in \argmin_{w \in \reals^d} f(w)$. Let $\goracle$ be an unbiased gradient oracle of $f$ with $\sigmastar$-bounded noise, and let $w_1 \in \reals^d$.
    Then \cref{alg:convex-lipschitz-new} with parameters
    $\diammin \leq \norm{w_1-w^\star} < \diammax$, $\lipmax \geq \lipstar$ and $\sigmamax \geq \sigmastar$ performs at most $T \log \frac{\diammax}{\diammin}$ gradient queries and produce $\wout \in \reals^d$ such that with probability at least $1-\delta \log \tfrac{\diammax}{\diammin}$,
    \begin{align*}
        f&(\wout)-f(w^\star)
        \leq
        \bigg(
        \frac{\norm{w_1-w^\star}(\lipstar+\sigmastar)}{\sqrt{T}}
        +
        \frac{\norm{w_1-w^\star} \sigmamax}{T}
        \bigg)
        \polylog(\tfrac{\lipmax}{\sigmamax},T,\tfrac{1}{\delta})
        .
    \end{align*}
\end{theorem}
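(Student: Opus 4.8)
The plan is to split the argument into a guarantee for a single run of \adapsgd at a fixed diameter, restricted to the event that no projection is triggered, together with an argument that the grid index at which \cref{alg:convex-lipschitz-new} halts is controlled in terms of $\diamstar \eqdef \norm{w_1-w^\star}$. Write $\xi_t \eqdef g_t - \nabla f(w_t)$ for the gradient noise; since $f$ is $\lipstar$-Lipschitz, $\norm{g_t} \le \lipstar + \sigmastar \eqdef G$ deterministically. For a fixed diameter $D$, condition on the event $\noprojevent(D)$ that running \adapsgd with diameter $D$ from $w_1$ never projects; on this event the iterates coincide with the unconstrained AdaGrad-norm recursion $w_{t+1} = w_t - \eta_t g_t$ with $\eta_t = \initeta D/\sqrt{\initg^2 + \sum_{s\le t}\norm{g_s}^2}$, so $\norm{w_{t+1}-w_1} = \norm{\sum_{s\le t}\eta_s g_s} < D$ for all $t$.

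\textbf{Step 1 (per-run bound on $\noprojevent(D)$).} I claim that on $\noprojevent(D)$, with probability at least $1-\delta$, the averaged iterate $\bar w \eqdef \frac1T\sum_t w_t$ satisfies
\begin{align*}
f(\bar w) - f(w^\star)
\le
\frac1T\sum_{t=1}^T\brk[s]*{f(w_t)-f(w^\star)}
\le
\brk3{\tfrac{\diamstar G}{\sqrt T}+\tfrac{\diamstar\sigmamax}{T}}\polylog\brk*{\tfrac{\lipmax}{\sigmamax},T,\tfrac1\delta},
\end{align*}
and crucially this holds \emph{irrespective of whether $D$ is smaller or larger than $\diamstar$}. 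The mechanism: ``no escape'' from $B(w_1,D)$ forces $\langle\sum_{s\le T}\eta_s g_s,\hat d\rangle > -D$ for $\hat d \eqdef (w^\star-w_1)/\diamstar$; decomposing $g_s = \nabla f(w_s) + \xi_s$, using convexity of $f$ at $w_s$ to get $\langle-\nabla f(w_s),\hat d\rangle \ge (f(w_s)-f^\star-\lipstar D)/\diamstar$, and controlling $\sum_s \eta_s\langle\xi_s,\hat d\rangle$ by a time-uniform, variance-adaptive Freedman-type inequality, one obtains $\sum_s \eta_s(f(w_s)-f^\star) = O(D\diamstar)$ up to poly-log factors. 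Dividing by the trajectory-independent lower bound $\eta_s \ge \initeta D/\sqrt{\initg^2 + TG^2}$ and by $T$, then substituting $\initg^2 = 5\sigmamax^2\log\tfrac T\delta$ (so $\sqrt{\initg^2+TG^2} \lesssim \sqrt T\,G + \sigmamax\sqrt{\log(T/\delta)}$) and $\initeta^{-1} = \polylog$, gives the bound. The three summands of $\initeta^{-1}$ are calibrated exactly to absorb, in order, the AdaGrad-sum term $\log(1 + 2(\lipmax^2+\sigmamax^2)T/\initg^2)$, the stepsize-telescoping term $\log(1+T)$, and the martingale term built from $\theta_{T,\delta} = \log(60\log(6T)/\delta)$.

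\textbf{Step 2 (the algorithm halts at a small diameter).} It remains to show that \cref{alg:convex-lipschitz-new} returns some $\bar w^{(k_0)}$ rather than the failure point $w_1$, at a grid value $D_{k_0} = O(\diamstar)$. I bound the displacement of the unconstrained dynamics against $w^\star$: from $\norm{w_{t+1}-w^\star}^2 \le \norm{w_t-w^\star}^2 - 2\eta_t\langle g_t, w_t-w^\star\rangle + \eta_t^2\norm{g_t}^2$, dropping the non-negative term $2\eta_t\langle\nabla f(w_t), w_t-w^\star\rangle \ge 0$, bounding $\sum_t\eta_t^2\norm{g_t}^2 = O(\initeta^2D^2\polylog)$ by the AdaGrad-sum lemma, and the noise martingale $\sum_t\eta_t\langle\xi_t,w^\star-w_t\rangle$ in terms of $R \eqdef \max_t\norm{w_{t+1}-w_1}$, one gets a self-bounding inequality $R^2 \le \diamstar^2 + c\,\initeta D R\,\polylog + c\,\initeta^2 D^2\polylog$, hence $R \le \diamstar + c'\initeta D\,\polylog$. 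The sizing of $\initeta^{-1}$ makes $c'\initeta\,\polylog < \tfrac14$, so $R < D$ whenever $D \ge 2\diamstar$; since the grid reaches $D_K = 4\diammax > 4\diamstar$, $\noprojevent(D_k)$ holds (w.p.\ $\ge 1-\delta$) at the first grid index $k_0$ with $D_{k_0} \ge 2\diamstar$, and such $k_0$ satisfies $D_{k_0} \le \max\{8\diammin,4\diamstar\} \le 8\diamstar$ (using $\diammin \le \diamstar$). Whatever index the algorithm actually returns is at most $k_0$ and carries $D \le 8\diamstar$, so Step 1, applied at that index, gives the stated bound. A union bound over the at most $K = \log(\diammax/\diammin)$ runs turns the per-run failure probability $\delta$ into the claimed $\delta\log(\diammax/\diammin)$.

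\textbf{Main obstacle.} The heart of the matter is controlling the \adapsgd trajectory under noise without paying a polynomial signal-to-noise factor: both the ``no escape'' bound in Step 1 and the self-bounding of $R$ in Step 2 need the noise- and stepsize-sum quantities — $\sum_t\eta_t^2(\norm{\xi_t}^2 + \norm{\nabla f(w_t)}^2)$, $\sum_t\eta_t$, and the associated martingales — to be poly-logarithmic, whereas the naive bounds $\norm{\xi_t}\le\sigmastar$, $\sum_t\eta_t^2 \le T\initeta^2D^2/\initg^2$ only give $O(T)$. The resolution exploits a self-tuning effect: whenever $\norm{\nabla f(w_t)}$ is sizeable, $\E[\norm{g_t}^2\mid\mathcal F_{t-1}] \ge \norm{\nabla f(w_t)}^2$ is large, so $\sum_{s\le t}\norm{g_s}^2$ grows and $\eta_t$ shrinks at the matching rate, keeping the sums poly-logarithmic (this is where the $\polylog(\lipmax/\sigmamax)$ factor originates), while in the complementary regime where all stochastic gradients are uniformly small the iterates barely move, so $\nabla f(w_1)\approx 0$ and $f(w_1) \approx f^\star$ already. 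Making this dichotomy quantitative — and verifying that the prescribed $\initeta$ and $\initg$ are exactly large/small enough for every self-bounding inequality to close with only poly-logarithmic slack, which forces the martingale steps to use time-uniform, variance-adaptive concentration rather than a crude union bound so as not to lose the $\sqrt T$ in the leading term — is the main technical work.
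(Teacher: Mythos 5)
Your Step~2 (the algorithm halts at the first grid index $k_0$ with $D_{k_0} \le 8\diamstar$, via a self-bounding iterate-displacement bound) mirrors the paper's \cref{lem:padasgd-iterate-bound}, and the overall architecture — a per-run bound on the no-projection event plus a halting-index bound, combined over the $K = \log(\diammax/\diammin)$ grid points by a union bound — is the paper's. The gap is in Step~1.

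Your one-dimensional ``no escape'' mechanism does not close. Projecting the cumulative displacement onto $\hat d = (w^\star - w_1)/\diamstar$ costs a Lipschitz slack: writing $\nabla f(w_s)\cdot(w^\star - w_1) = \nabla f(w_s)\cdot(w^\star - w_s) + \nabla f(w_s)\cdot(w_s - w_1)$, convexity bounds the first term by $f^\star - f(w_s)$ but the second is only bounded by $\lipstar\norm{w_s - w_1} < \lipstar D$. Carrying this through, your inequality $-D < \sum_s\eta_s\langle g_s,\hat d\rangle$ rearranges to
\[
\sum_{s=1}^T \eta_s\brk{f(w_s)-f^\star} < D\,\diamstar + \lipstar D \sum_{s=1}^T \eta_s + \diamstar\sum_{s=1}^T\eta_s\langle\xi_s,\hat d\rangle,
\]
and after dividing by $T\cdot\eta_T^{\min}$ the middle term is at least $\lipstar D$ (since $\eta_s \ge \eta_T^{\min}$), and can be as large as $\lipstar D\sqrt{\initg^2+TG^2}/\initg = \Theta(\lipstar D\,G\sqrt{T}/\sigmamax\sqrt{\log(T/\delta)})$. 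Even the favorable case gives $\Omega(\lipstar\diamstar)$ — the trivial bound on $f(w_1)-f^\star$ — rather than the target $O(\diamstar(\lipstar+\sigmastar)/\sqrt{T})$. The ``self-tuning dichotomy'' in your Main Obstacle paragraph does not rescue this: the regime where the observed gradients are all small is exactly the regime where $\sum_s\eta_s$ is maximal, and conditional expectation bounds like $\E[\norm{g_t}^2\mid\cF_{t-1}]\ge\norm{\nabla f(w_t)}^2$ do not control the realized $\sum_{s\le t}\norm{g_s}^2$ that actually enters $\eta_t$.

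The paper avoids this by not projecting onto a fixed direction at all. On $E_1$ (no projection) the update is exactly $w_{t+1}=w_t-\eta_t g_t$, so $\norm{w_{t+1}-w^\star}^2 = \norm{w_t-w^\star}^2 - 2\eta_t\, g_t\cdot(w_t-w^\star) + \eta_t^2\norm{g_t}^2$ holds as an identity; summing, using convexity as $\nabla f(w_t)\cdot(w_t-w^\star) \ge f(w_t)-f^\star$ \emph{pointwise} (no $\lipstar D$ slack), bounding the stepsize telescope error by $\max_t\norm{w_t-w^\star}\le D+\diamstar$ (available on $E_1$), bounding $\sum_t\eta_t\norm{g_t}^2 \le 2\initeta D\sqrt{\sum_t\norm{g_t}^2}$ by the AdaGrad lemma, and bounding the noise martingale $\sum_t(\nabla f(w_t)-g_t)\cdot(w_t-w^\star)$ by an empirical-Bernstein inequality gives \cref{lem:padasgd-convergence} directly, with only the harmless $\sqrt{\sum_t\norm{g_t}^2}$ dependence on $\lipstar$. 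You should swap your 1-D projection step for this squared-distance recursion. A secondary point: in Step~2 you write the noise term as the ``martingale'' $\sum_t\eta_t\langle\xi_t,w^\star-w_t\rangle$, but $\eta_t$ depends on $g_t$ and hence on $\xi_t$, so this is not a martingale difference sequence; the paper decorrelates via $\tilde\eta_t = \initeta D/\sqrt{\initg^2+\norm{\nabla f(w_t)}^2+\sum_{s<t}\norm{g_s}^2}$ (\cref{lem:padasgd-noisy-convex-inequality}), and the correction term $(\tilde\eta_t-\eta_t) g_t\cdot(w_t-w^\star_D)$ needs its own bound (\cref{lem:padasgd-correlation-error}).
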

Observing the convergence rate in \cref{thm:convex-lipschitz}, \cref{alg:convex-lipschitz-new} achieve the same rate of convergence as \emph{fully-tuned} SGD up to logarithmic factors and an additional lower order term of $\Otilde(\ifrac{\sigmastar}{T})$.%
\footnote{In \cref{thm:lower-bound-new} we show that such a term is unavoidable without further assumptions and in \cref{sec:estimation} we provide a further assumption which mitigate the excess term.}
Hence, in case the ratio $\ifrac{\sigmamax}{(\sigmamin+\lipmin)}$ is $O(\sqrt{T})$, \cref{alg:convex-lipschitz-new} is parameter-free compared to the rate of \emph{tuned} SGD.
Previous parameter-free results for SCO \citep{carmon2022making,Ivgi2023DoGIS} suffer a similar lower order term, while our result depends only on the noise bound and not a bound of the norm of stochastic gradients which encapsulate both a noise and gradient norm bound, a property that stems from a careful combination of ``decorrelated'' stepsizes and martingale analysis instead of using the crude bound of stochastic gradients norm.
(This point will subsequently prove critical in the convex smooth case where gradients are not directly bounded.)
We also note that the bisection technique of \citet{carmon2022making} can improve the poly-logarithmic dependency; we abstain from using such a technique in favour of making the tuning procedure straightforward and leave improvements of the logarithmic factors to future work.

The convergence result of \cref{alg:convex-lipschitz-new} holds in fact holds more generally, assuming the gradient and noise bounds are satisfied only inside a ball of radius $8 \norm{w_1-w^\star}$. We will prove a more general version and \cref{thm:convex-lipschitz} will follow as a corollary. The refined result will be relevant later when we use the convergence guarantee for convex smooth objectives where a global gradient norm bound does not hold.
To that end we define $\lipball,\sigmaball \in \reals_+$ as the minimal parameters satisfying, for all $w \in \reals^d$ such that $\norm{w-w_1} \leq 8 \norm{w_1-w^\star}$,
\begin{align}
    \label{eq:lip-sigma-ball}
    &\norm{\nabla f(w)} \leq \lipball
    ,
    \enskip
    \Pr\brk{\norm{\goracle(w)-\nabla f(w)} \leq \sigmaball}
    =
    1
    .
\end{align}
Following is the general version of \cref{thm:convex-lipschitz}.

\begin{theorem}\label{thm:convex-ball}
    Let $f : \reals^d \to \reals$ be a differentiable and convex function, admitting a minimizer $w^\star \in \argmin f(w)$. Let $\goracle$ be an unbiased gradient oracle of $f$ such that $\sigmaball < \infty$ (see \cref{eq:lip-sigma-ball}), and let $w_1 \in \reals^d$.
    Then \cref{alg:convex-lipschitz-new} with parameters
    $\diammin \leq \norm{w_1-w^\star} < \diammax$, $\lipmax \geq \lipball$ and $\sigmamax \geq \sigmaball$, where $\lipball$ and $\sigmaball$ are defined by \cref{eq:lip-sigma-ball}, performs at most $T \log \frac{\diammax}{\diammin}$ gradient queries and produce $\wout_k=\frac1T \sum_{t=1}^T w_t^{(k)}$ for some $k \in [ \log \frac{\diammax}{\diammin}]$ such that with probability at least $1-\delta \log \tfrac{\diammax}{\diammin}$, $\diam_k \leq 8 \norm{w_1-w^\star}$ and
    \begin{align*}
        & f(\wout_k)-f^\star
        \leq
        \frac{1}{T} \sum_{t=1}^T f(w_t^{(k)})-f^\star
        =
        O\brk4{\frac{\widetilde C \norm{w_1-w^\star}}{T}\brk4{\sqrt{\sum_{t=1}^{T} \norm{g_t^{(k)}}^2}
        + \sigmamax \sqrt{\log \tfrac{T}{\delta}}}}
    \end{align*}
    where $\widetilde C = \log (\tfrac{T}{\delta}+\tfrac{\lipmax}{\sigmamax})$ and
    $g_t^{(k)}$ is the observed gradient at $w_t^{(k)}$.
\end{theorem}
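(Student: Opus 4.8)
The plan is to analyze a single run of \adapsgd with a fixed diameter $\diam$ and obtain a convergence bound of the form $\frac1T\sum_t f(w_t)-f^\star \le O\brk*{\frac{\widetilde C \diam}{T}\brk*{\sqrt{\sum_t \norm{g_t}^2}+\sigmamax\sqrt{\log\tfrac T\delta}}}$, \emph{conditioned on the event that all iterates stay strictly inside the $\diam$-ball} (so that no projection is active and the run is effectively unconstrained AdaGrad). This is the standard AdaGrad-norm regret analysis: starting from $\norm{w_{t+1}-w^\star}^2 \le \norm{w_t-w^\star}^2 - 2\eta_t g_t\dotp(w_t-w^\star) + \eta_t^2\norm{g_t}^2$ (projection onto a convex set containing $w^\star$ only helps), summing, and using convexity $f(w_t)-f^\star \le \nabla f(w_t)\dotp(w_t-w^\star)$ together with the martingale decomposition $g_t = \nabla f(w_t) + \xi_t$ where $\xi_t$ has $\sigmaball$-bounded norm. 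The $\eta_t = \initeta\diam/\sqrt{\initg^2+\sum_{s\le t}\norm{g_s}^2}$ schedule gives $\sum_t \eta_t\norm{g_t}^2 \le 2\initeta\diam\sqrt{\initg^2+\sum_t\norm{g_t}^2}$ by the standard telescoping/integral bound, while the stochastic cross term $\sum_t \eta_t \xi_t\dotp(w_t-w^\star)$ is controlled by a Freedman-type concentration inequality; here the specific constants in $\initeta$ and $\theta_{T,\delta}$ are engineered precisely so that, with probability $\ge 1-\delta$, all the error terms are absorbed and the leading constant works out — this is why $\initg = \sqrt{5\sigmamax^2\log\tfrac T\delta}$ and the elaborate formula for $\initeta$ appear.

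Next I would handle the diameter tuning loop. Let $\diamstar=\norm{w_1-w^\star}$. Since $\diammin\le\diamstar<\diammax$ and $\diam_k = 2^{k+2}\diammin$, there is a smallest $k^\star$ with $\diam_{k^\star}\in[4\diamstar, 8\diamstar)$. The key structural claim is: for this $\diam_{k^\star}$, with high probability the run of \adapsgd stays within the $\diam_{k^\star}$-ball (so the algorithm's stopping condition $\max_t\norm{w_{t+1}^{(k)}-w_1}<\diam_k$ triggers at some $k\le k^\star$). To see this, note that on the good concentration event the single-run analysis above also yields a bound on $\max_t\norm{w_t-w^\star}$ of order $\diamstar$ up to the logarithmic factors that $\initeta$ was chosen to kill, so $\norm{w_t-w_1}\le \norm{w_t-w^\star}+\diamstar$ stays below $\diam_{k^\star}\ge 4\diamstar$. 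Hence the algorithm returns at some $k\le k^\star$, and for whichever $k$ it returns, the no-projection event held for that run (that is exactly the returned condition), so the single-run bound applies with $\diam=\diam_k\le\diam_{k^\star}<8\diamstar$, giving the stated $\frac1T\sum_t f(w_t^{(k)})-f^\star$ bound with $\diam_k$ replaced by $8\diamstar$ up to constants; Jensen gives $f(\wout_k)-f^\star\le\frac1T\sum_t f(w_t^{(k)})-f^\star$. A union bound over the at most $K=\log\tfrac{\diammax}{\diammin}$ runs converts the per-run failure probability $\delta$ into the claimed $1-\delta\log\tfrac{\diammax}{\diammin}$, and the query count is at most $TK$.

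The main obstacle is the interplay between the stopping rule and the high-probability analysis: the single-run AdaGrad bound is only valid \emph{on the no-projection event}, but whether that event holds is itself random and correlated with the trajectory, so one must argue carefully that (i) for $k=k^\star$ the no-projection event holds with the stated probability (this needs the a-priori iterate-norm control, which is the delicate part — it must be derived from the \emph{same} concentration event, not circularly), and (ii) for any $k$ at which the algorithm actually returns, the returned condition $\max_t\norm{w_{t+1}^{(k)}-w_1}<\diam_k$ guarantees no projection occurred so the regret bound is unconditionally valid for that realization. Getting the constant $8$ (rather than some larger constant) and threading the precise logarithmic factors through $\initeta$ and $\theta_{T,\delta}$ so that the iterate-norm bound closes at $4\diamstar$ is the quantitatively fiddly step; I expect the bulk of the work to be in a lemma establishing the deterministic-given-concentration bound $\max_t\norm{w_t-w^\star}\le 2\diamstar$ (say) for the run with $\diam\in[4\diamstar,8\diamstar)$.
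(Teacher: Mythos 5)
Your plan reproduces the paper's structure almost exactly: a deterministic-given-events regret lemma for a single run of \adapsgd, a high-probability iterate-norm bound to establish that the stopping rule triggers at some $k$ with $\diam_k \in (4\diamstar, 8\diamstar]$, and a union bound over the $K$ runs. The paper formalizes this with three lemmas (\cref{lem:padasgd-convergence,lem:padasgd-iterate-bound,lem:padasgd-convergence-martingale}) that correspond directly to the pieces you identify.

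There is, however, one genuine technical gap in the iterate-bound step that your plan does not anticipate. You write that ``the single-run analysis above also yields a bound on $\max_t \norm{w_t - w^\star}$,'' but the regret analysis controls $\sum_t f(w_t)-f^\star$, not $\max_t \norm{w_t - w^\star}$, and the latter requires a separate argument on the recursion $\norm{w_{t+1}-\wstardom}^2 \le \norm{w_t-\wstardom}^2 - 2\eta_t g_t\dotp(w_t-\wstardom) + \eta_t^2\norm{g_t}^2$. The cross term here carries the factor $\eta_t$, which \emph{depends on $g_t$} through the AdaGrad normalizer, so $\eta_t(\nabla f(w_t)-g_t)\dotp(w_t-\wstardom)$ is not a martingale difference and Freedman-type concentration cannot be applied directly. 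This is precisely where the regret lemma differs: there the cross term $\sum_t(\nabla f(w_t)-g_t)\dotp(w_t-w^\star)$ carries no $\eta_t$, so concentration is straightforward. The paper's resolution is the ``decorrelated stepsize'' $\tilde\eta_t$ (which replaces $\norm{g_t}^2$ in the normalizer by the $\filtration_{t-1}$-measurable $\norm{\nabla f(w_t)}^2$), a splitting $\eta_t = \tilde\eta_t + (\eta_t - \tilde\eta_t)$, and a separate bound on $\sum_t \tilde\eta_t^2 \norm{g_t - \nabla f(w_t)}^2$ that crucially exploits $\initg^2 \gtrsim \sigmamax^2 \log\tfrac{T}{\delta}$ (\cref{lem:padasgd-noisy-convex-inequality,lem:padasgd-correlation-error,lem:sum_tilde_eta_squared}). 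Without this device, the plan stalls at the ``a-priori iterate-norm control'' step you flag as delicate but do not explain how to close.

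A smaller imprecision: you say that once the stopping rule returns at run $k$, ``the regret bound is unconditionally valid for that realization.'' The no-projection event $\event_1^{(k)}$ is indeed implied by the stopping condition, but \cref{lem:padasgd-convergence} also requires the concentration event $\event_2^{(k)}$, which is not implied. The paper handles this by union-bounding $\event_2^{(k)}$ over all $k \le i$ (where $\diam_i \le 8\diamstar$), so that whichever $\tau \le i$ the algorithm returns at, $\event_2^{(\tau)}$ is already covered. Your ``union bound over the $K$ runs'' can absorb this, but the argument needs to be phrased so that it covers $\event_2^{(\tau)}$ for the random index $\tau$, not just some fixed index.
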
%
The bound of the general version uses the norms of the observed stochastic gradients instead of the gradient norm and noise bounds, and a simple translation
yields \cref{thm:convex-lipschitz}.
\begin{proof}[Proof of \cref{thm:convex-lipschitz}]%
    Using the identity $\norm{u+v}^2 \leq 2 \norm{u}^2 + 2 \norm{v}^2$ and the global bounds, for all $w \in \reals^d$,
    \begin{align*}
        \norm{\goracle(w)} \leq 2 \norm{\nabla f(w)}^2 + 2 \norm{\goracle(w)-\nabla f(w)}^2 \leq 2 \lipstar^2 + 2 \sigmastar^2.
    \end{align*}
    By the minimality of $\lipball$ and $\sigmaball$, $\lipmax \geq \lipstar \geq \lipball$ and $\sigmamax \geq \sigmastar \geq \sigmaball$, which means that \cref{thm:convex-ball} is applicable.
    Combining \cref{thm:convex-ball} with the above inequality, \cref{alg:convex-lipschitz-new} produce $\wout_k$ such that with probability at least $1-\delta \log \tfrac{\diammax}{\diammin}$,
    \begin{align*}
        f(\wout_k)-f^\star
        &=
        O\brk4{\frac{\widetilde C \norm{w_1-w^\star}}{T}\brk4{\sqrt{(\lipstar^2+\sigmastar^2)T}
        + \sigmamax \sqrt{\log \tfrac{T}{\delta}}}}
        .
    \end{align*}
    We conclude by moving from big-$O$ notation to poly-log notation, replacing the term $\widetilde C \sqrt{\log \tfrac{T}{\delta}}$ with $\polylog(\tfrac{\lipmax}{\sigmamax},T,\tfrac{1}{\delta})$.
\end{proof}

In order to prove \cref{thm:convex-ball}, we first provide several intermediate results for $T$-steps \adapsgd for some $\diam \leq 8 \norm{w_1-w^\star}$ at \cref{lem:padasgd-convergence,lem:padasgd-iterate-bound,lem:padasgd-convergence-martingale} (in which case the bounds $\lipball$ and $\sigmaball$ hold for all the projected domain).
We defer their proofs to \cref{sec:convex-proofs}.
Differently from common analyses of projected methods, we are interested in both the local minimizer inside the $\diam$-ball and the global minimizer in $\reals^d$. To that end, let
$
    w^\star_\diam = \argmin_{w : \norm{w-w_1} \leq \diam} f(w)
$
such that $\norm{w_1-w^\star_\diam} \leq \norm{w_1-w^\star}$.\footnote{Such $w^\star_\diam$ exists since either $\norm{w_1-w^\star} \leq \diam$ and we may use $w^\star_\diam=w^\star$, or $\norm{w_1-w^\star_\diam} \leq \diam < \norm{w_1-w^\star}$.}
The first lemma establishes a convergence guarantee with respect to the \emph{global minimizer} given the following two events. The first is that all the iterates are contained (fully) within the $\diam$-ball, i.e., $\event_1 \eqdef \set{\max_{t \leq T} \norm{w_{t+1}-w_1} < \diam}$. The second,
\begin{align*}
    \event_2 & \eqdef
    \Bigg\{%
        \abs*{\sum_{t=1}^T (\nabla f(w_t)-g_t) \cdot (w_t-w^\star)}
        \leq
        4 (\diam+\norm{w_1-w^\star}) \sqrt{\theta_{T,\delta} \sum_{t=1}^T \norm{g_t}^2 + 4 \theta_{T,\delta}^2 \sigmaball^2}
    \Bigg\}%
    ,
\end{align*}
is the concentration needed to modify the regret analysis to a high-probability analysis.
\begin{lemma}\label{lem:padasgd-convergence}
    Under the event $\event_1 \cap \event_2$,
    \begin{align*}
        \frac1T \sum_{t=1}^T f(w_t)-f(w^\star)
        &
        \leq
        \brk*{\tfrac{2 \initg}{\initeta}
        + 8 \theta_{T,\delta} \sigmaball}\frac{\norm{w_1-w^\star}+\diam}{T}
        \\
        &
        + \brk*{\brk{\norm{w_1-w^\star}+\diam}\brk{\tfrac{2}{\initeta}+4 \sqrt{\theta_{T,\delta}}} + \initeta \diam}\frac{\sqrt{\sum_{t=1}^T \norm{g_t}^2}}{T}.
    \end{align*}
\end{lemma}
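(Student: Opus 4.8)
The plan is to run a standard AdaGrad-style regret analysis for the \emph{projected} AdaSGD iterates, but with two twists dictated by the lemma statement: (i) compare against the \emph{global} minimizer $w^\star$ rather than the in-ball minimizer $w^\star_\diam$, and (ii) replace the deterministic regret bound by a high-probability one using the event $\event_2$. First I would write, by convexity, $\sum_t f(w_t) - T f(w^\star) \le \sum_t \nabla f(w_t)\cdot(w_t - w^\star)$ and split $\nabla f(w_t) = g_t + (\nabla f(w_t) - g_t)$. The ``noise'' part $\sum_t (\nabla f(w_t) - g_t)\cdot(w_t - w^\star)$ is exactly what $\event_2$ controls, so under $\event_2$ it is at most $4(\diam + \norm{w_1 - w^\star})\sqrt{\theta_{T,\delta}\sum_t \norm{g_t}^2 + 4\theta_{T,\delta}^2\sigmaball^2}$; I would then use $\sqrt{a+b}\le\sqrt a+\sqrt b$ to break this into a $\sqrt{\sum_t\norm{g_t}^2}$ term (contributing to the second bracket, via the $4\sqrt{\theta_{T,\delta}}$ coefficient on $\norm{w_1-w^\star}+\diam$) and a $4\theta_{T,\delta}\sigmaball(\norm{w_1-w^\star}+\diam)$ term (the $8\theta_{T,\delta}\sigmaball$ piece, up to the factor of $2$).

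The remaining ``optimistic'' part $\sum_t g_t\cdot(w_t - w^\star)$ is the online-learning regret of projected gradient descent with stepsizes $\eta_t = \initeta\diam/\sqrt{\initg^2+\sum_{s\le t}\norm{g_s}^2}$. Here the key point, and where the event $\event_1$ enters, is that when \emph{all} iterates stay strictly inside the $\diam$-ball, the projection is inactive, so $w_{t+1} = w_t - \eta_t g_t$ exactly; this lets me use the telescoping identity $g_t\cdot(w_t-u) = \frac{1}{2\eta_t}(\norm{w_t-u}^2 - \norm{w_{t+1}-u}^2) + \frac{\eta_t}{2}\norm{g_t}^2$ for any comparator $u$, and in particular for $u = w^\star$ even though $w^\star$ may lie outside the ball. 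Summing and using the standard bound $\sum_t \eta_t\norm{g_t}^2 \le 2\initeta\diam\sqrt{\initg^2+\sum_t\norm{g_t}^2}$ together with an Abel/summation-by-parts argument for the $\sum_t(\tfrac{1}{\eta_t}-\tfrac{1}{\eta_{t-1}})\norm{w_t-w^\star}^2$ term (bounded since $\norm{w_t - w^\star}\le \norm{w_t-w_1}+\norm{w_1-w^\star}\le \diam+\norm{w_1-w^\star}$ under $\event_1$), I get a bound of the form $\frac{(\diam+\norm{w_1-w^\star})^2}{\eta_1}\cdot\frac{\sqrt{\initg^2+\sum\norm{g_t}^2}}{\diam} \cdot \frac{1}{\initeta} + \initeta\diam\sqrt{\initg^2+\sum\norm{g_t}^2}$ after dividing by $T$; again splitting $\sqrt{\initg^2+\sum\norm{g_t}^2}\le \initg+\sqrt{\sum\norm{g_t}^2}$ produces the $\tfrac{2\initg}{\initeta}\cdot\frac{\norm{w_1-w^\star}+\diam}{T}$ term and contributes the $\tfrac{2}{\initeta}(\norm{w_1-w^\star}+\diam)$ and $\initeta\diam$ coefficients on $\frac{\sqrt{\sum\norm{g_t}^2}}{T}$. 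The condition $\diam\le 8\norm{w_1-w^\star}$ is used only to know the $\sigmaball,\lipball$ bounds hold throughout the projected domain (relevant for where this lemma is invoked), and to simplify $\diam+\norm{w_1-w^\star}$ where needed.

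I would then collect terms: the coefficient of $\frac{\norm{w_1-w^\star}+\diam}{T}$ is $\tfrac{2\initg}{\initeta} + 8\theta_{T,\delta}\sigmaball$, and the coefficient of $\frac{\sqrt{\sum\norm{g_t}^2}}{T}$ is $(\norm{w_1-w^\star}+\diam)(\tfrac{2}{\initeta}+4\sqrt{\theta_{T,\delta}}) + \initeta\diam$, matching the statement. The main obstacle is bookkeeping the constants so they line up exactly with the displayed bound — in particular handling the $\frac{1}{2\eta_t}$ telescoping cleanly when the comparator $w^\star$ is outside the ball (this is legitimate precisely because $\event_1$ makes the dynamics unconstrained), and carefully applying $\sqrt{a+b}\le\sqrt a+\sqrt b$ in both the $\event_2$ term and the AdaGrad term without losing factors. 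A secondary subtlety is that $\eta_t$ depends on $\diam$, so the $\initeta\diam$ and $\tfrac{1}{\initeta\diam}$ factors must be tracked together; they partially cancel, leaving the clean $1/\initeta$ and $\initeta\diam$ dependencies shown. Everything else is routine AdaGrad-regret manipulation.
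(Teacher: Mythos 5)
Your overall decomposition is the same as the paper's (regret telescoping under $\event_1$, $\event_2$ for the noise term, \cref{lem:sum-one-over-sum-sqrt} for the AdaGrad step-size sum, then $\sqrt{a+b}\le\sqrt a+\sqrt b$ splits), but there is a genuine gap in how you handle the $\sum_{t\ge 2}(\tfrac{1}{\eta_t}-\tfrac{1}{\eta_{t-1}})\norm{w_t-w^\star}^2$ telescoping term, and it prevents you from reaching the stated constants.

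You propose to bound $\norm{w_t-w^\star}^2 \le (\diam+\norm{w_1-w^\star})^2$ inside the sum and then telescope. This yields a leading term of order $\tfrac{(\diam+\norm{w_1-w^\star})^2}{2\eta_T}$, and since $\tfrac1{\eta_T}=\tfrac{\sqrt{\initg^2+\sum\norm{g_t}^2}}{\initeta\diam}$, the coefficient of $\sqrt{\sum\norm{g_t}^2}$ becomes $\tfrac{(\diam+\norm{w_1-w^\star})^2}{2\initeta\diam}$. The lemma claims $\tfrac{2(\diam+\norm{w_1-w^\star})}{\initeta}$; your expression exceeds this by a factor of roughly $\tfrac{\diam+\norm{w_1-w^\star}}{4\diam}$, which blows up when $\diam\ll\norm{w_1-w^\star}$. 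Nothing in the lemma rules out small $\diam$ (indeed the algorithm tries an increasing sequence $\diam_1,\diam_2,\dots$ that can start far below $\norm{w_1-w^\star}$), so your bound does not imply the stated one. The upper bound $\diam\le 8\norm{w_1-w^\star}$ that you invoke goes in the wrong direction --- it does not give a lower bound on $\diam$.

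The paper avoids this by taking $t'=\argmax_{t\le T+1}\norm{w_t-w^\star}$, keeping the subtracted term from the final telescoping step, and arriving at $\tfrac{\norm{w_{t'}-w^\star}^2-\norm{w_{T+1}-w^\star}^2}{2\eta_T}$. The crucial move is then to \emph{factor} this difference of squares and exploit that both $w_{t'}$ and $w_{T+1}$ lie in the $\diam$-ball around $w_1$:
\begin{align*}
\norm{w_{t'}-w^\star}^2-\norm{w_{T+1}-w^\star}^2
\le 2\norm{w_{t'}-w_{T+1}}\,\norm{w_{t'}-w^\star}
\le 4\diam\brk*{\diam+\norm{w_1-w^\star}}.
\end{align*}
The extra factor of $\diam$ here cancels the $\diam$ in the denominator of $1/\eta_T$, giving the coefficient $\tfrac{2(\diam+\norm{w_1-w^\star})}{\initeta}$ exactly as stated. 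Once you pre-commit to the crude bound $\norm{w_t-w^\star}^2\le(\diam+\norm{w_1-w^\star})^2$ before the subtraction, the difference-of-squares structure is lost and the $\diam$ cancellation cannot be recovered. Everything else in your write-up (the convexity split into noise and ``optimistic'' parts, the $\event_1$-implies-no-projection observation, the $\sqrt{a+b}$ splits, the final bookkeeping) matches the paper's route; it is only this one step that needs the argmax-and-factor argument.
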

The second lemma bounds the distance of the iterates from the initialization.
This lemma will prove useful to show that $\event_1$ holds with high probability for a large enough $\diam$.
\begin{lemma}\label{lem:padasgd-iterate-bound}
    Let $\initeta,\initg$ as defined by \cref{alg:convex-lipschitz-new}.
    Then with probability at least $1-2 \delta$, for all $t \in [T]$,
    \begin{align*}
        \norm{w_{t+1}-w_1}
        \leq
        2 \norm{w_1-w^\star}
        + \frac{1}{2} \diam
        .
    \end{align*}
\end{lemma}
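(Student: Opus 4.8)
The plan is to bound the iterates relative to the constrained minimizer $w^\star_\diam\eqdef\argmin_{w\,:\,\norm{w-w_1}\le\diam}f(w)$ and then pass to $w_1$ by the triangle inequality, using $\norm{w_1-w^\star_\diam}\le\norm{w_1-w^\star}\eqdef R$. The lemma is invoked only with $\diam\le 8R$, so every iterate satisfies $\norm{w_t-w_1}\le\diam\le8R$ by the projection, and the ball bounds of \eqref{eq:lip-sigma-ball} hold at each iterate with $\lipball\le\lipmax$ and $\sigmaball\le\sigmamax$; write $g_t$ for the observed gradient at $w_t$ and $\xi_t\eqdef g_t-\nabla f(w_t)$, so $\norm{\xi_t}\le\sigmaball$. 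Starting from the one-step projected-SGD inequality with the in-ball comparator $w^\star_\diam$, $\norm{w_{t+1}-w^\star_\diam}^2\le\norm{w_t-w^\star_\diam}^2-2\eta_t g_t\cdot(w_t-w^\star_\diam)+\eta_t^2\norm{g_t}^2$, convexity gives $\nabla f(w_t)\cdot(w_t-w^\star_\diam)\ge f(w_t)-f(w^\star_\diam)\ge0$, hence $g_t\cdot(w_t-w^\star_\diam)\ge\xi_t\cdot(w_t-w^\star_\diam)$, and summing over $t\le m$ for an arbitrary $m\in[T]$,
\begin{align*}
    \norm{w_{m+1}-w^\star_\diam}^2
    \;\le\;
    R^2+\sum_{t=1}^m\eta_t^2\norm{g_t}^2+2\abs*{\sum_{t=1}^m\eta_t\,\xi_t\cdot(w_t-w^\star_\diam)}
    .
\end{align*}

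I would then show each of the two sums is at most $\tfrac18\diam^2$, so that $\norm{w_{m+1}-w^\star_\diam}^2\le R^2+\tfrac14\diam^2\le(R+\tfrac12\diam)^2$; taking square roots and applying the triangle inequality through $w^\star_\diam$ gives $\norm{w_{m+1}-w_1}\le\norm{w_{m+1}-w^\star_\diam}+\norm{w^\star_\diam-w_1}\le 2R+\tfrac12\diam$, which is the claim (uniformly in $m$). For the first sum this is routine: since $\eta_t=\initeta\diam/\sqrt{\initg^2+\sum_{s\le t}\norm{g_s}^2}$, the elementary bound $\sum_t a_t/(c+\sum_{s\le t}a_s)\le\log(1+(\sum_t a_t)/c)$, together with $\norm{g_t}^2\le2\lipball^2+2\sigmaball^2\le2\lipmax^2+2\sigmamax^2$ and $\initg^2=5\sigmamax^2\log(T/\delta)$, gives $\sum_{t=1}^m\eta_t^2\norm{g_t}^2\le\initeta^2\diam^2\log\!\bigl(1+\tfrac{2\lipmax^2 T+2\sigmamax^2 T}{5\sigmamax^2\log(T/\delta)}\bigr)\le\initeta\diam^2/48$, the last step being exactly the first summand of $\initeta^{-1}$.

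The crux is the second sum, which is \emph{not} a martingale sum in the natural filtration $\filtration_{t-1}=\sigma(g_1,\dots,g_{t-1})$, because $\eta_t$ depends on $g_t$ (hence on $\xi_t$), even though $w_t$ is $\filtration_{t-1}$-measurable. Here the ``head start'' $\initg$ is essential: I would write $\eta_t\xi_t\cdot(w_t-w^\star_\diam)=\eta_{t-1}\xi_t\cdot(w_t-w^\star_\diam)+(\eta_t-\eta_{t-1})\xi_t\cdot(w_t-w^\star_\diam)$ (with $\eta_0\eqdef\initeta\diam/\initg$), where the first term is now a genuine martingale difference and the second is bounded deterministically, since $\norm{w_t-w^\star_\diam}\le2\diam$, $\norm{\xi_t}\le\sigmaball$, and $\sum_t(\eta_{t-1}-\eta_t)=\eta_0-\eta_m\le\initeta\diam/\initg$ telescopes; the head start also caps each martingale increment by $\eta_0\cdot2\diam\cdot\sigmaball\le2\initeta\diam^2/\sqrt{5\log(T/\delta)}$, using $\initg\ge\sqrt5\,\sigmamax\sqrt{\log(T/\delta)}$. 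Applying a time-uniform, Freedman-type concentration to the martingale part — the same ingredient behind the event $\event_2$ in \cref{lem:padasgd-convergence}, which I would state and prove in \cref{sec:convex-proofs} — then yields, with probability $\ge1-2\delta$ and uniformly over $m\le T$, a deviation of order $\initeta\diam^2\bigl(\log(1+T)+\sqrt{\theta_{T,\delta}\log(1+T)+\theta_{T,\delta}^2/(5\log(T/\delta))}\bigr)$, and the two remaining summands $32\log(1+T)$ and $128\sqrt{\theta_{T,\delta}\log(1+T)+\theta_{T,\delta}^2/(5\log(T/\delta))}$ of $\initeta^{-1}$ are calibrated precisely so that $2\abs*{\sum_{t=1}^m\eta_t\,\xi_t\cdot(w_t-w^\star_\diam)}\le\tfrac18\diam^2$; the factor $2$ in the failure probability absorbs a union over the two tails and the $\approx\log(6T)$ peeling levels folded into $\theta_{T,\delta}$. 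The one genuinely delicate step is this time-uniform martingale bound coupled with the non-adaptedness of $\eta_t$; everything else is bookkeeping matching the three summands of $\initeta^{-1}$ to the three error contributions.
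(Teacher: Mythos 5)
Your proof is correct and takes a genuinely different route from the paper's. Both start from the same projected-SGD inequality against the in-ball comparator $w^\star_\diam$ and both must tame the correlation between $\eta_t$ and $g_t$, but the two decompositions are different. The paper (\cref{lem:padasgd-noisy-convex-inequality,lem:padasgd-correlation-error}) writes $\eta_t=\tilde\eta_t+(\eta_t-\tilde\eta_t)$ where $\tilde\eta_t$ is the ``decorrelated'' stepsize that replaces $\norm{g_t}^2$ by $\norm{\nabla f(w_t)}^2$ in the denominator; the gap $\eta_t-\tilde\eta_t$ is then controlled by a Cauchy--Schwarz/AM-GM argument that costs an extra $\tfrac{1}{\initeta}\sum\eta_s^2\norm{g_s}^2+\tfrac{1}{\initeta}\sum\tilde\eta_s^2\norm{\xi_s}^2\lesssim\initeta\diam^2\log(1+T)$, and (because the paper keeps the $g_t\cdot(w_t-w^\star_\diam)$ term intact until that lemma) it also inflates the $\eta_t^2\norm{g_t}^2$ budget by a factor $(1+\tfrac{2}{\initeta})$. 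You instead (i) use convexity \emph{first} to reduce $-g_t\cdot(w_t-w^\star_\diam)$ to $-\xi_t\cdot(w_t-w^\star_\diam)$, so the $\eta_t^2\norm{g_t}^2$ coefficient stays $1$, and (ii) decompose $\eta_t=\eta_{t-1}+(\eta_t-\eta_{t-1})$, killing the correlation with a deterministic telescoping bound $\sum_t(\eta_{t-1}-\eta_t)\le\eta_0=\initeta\diam/\initg$, which is an order $\initeta\diam^2\sigmaball/\initg\le\initeta\diam^2/\sqrt{5\log(T/\delta)}$ term --- strictly smaller than the paper's $\log(1+T)$ correction. Your martingale piece uses $\eta_{t-1}$ in place of $\tilde\eta_t$, but since $\tilde\eta_t\le\eta_{t-1}$ and the proof of \cref{lem:sum_tilde_eta_squared} already discards the extra $\norm{\nabla f(w_t)}^2$ term in the denominator, the same concentration gives $\sum_t\eta_{t-1}^2\norm{\xi_t}^2\le\initeta^2\diam^2\log(1+T)$ w.h.p., so the Freedman bound and the $128\sqrt{\cdot}$ budget line in $\initeta^{-1}$ go through unchanged. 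Net, your version is cleaner and gives strictly smaller error constants with the same $\initeta$.

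One small bookkeeping slip: the deviation you claim for the non-$\eta_t^2\norm{g_t}^2$ part is of order $\initeta\diam^2\bigl(\log(1+T)+\sqrt{\cdots}\bigr)$, implying the $32\log(1+T)$ summand of $\initeta^{-1}$ is needed to absorb a $\log(1+T)$ correction. In your decomposition no such additive $\log(1+T)$ term arises --- it is a by-product of the paper's \cref{lem:padasgd-correlation-error}, which you have replaced by the telescoping bound. Your actual bound is tighter than what you wrote, so the conclusion is unaffected, but the stated calibration does not reflect what your decomposition actually produces.
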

The third lemma lower bounds the probability of $\event_2$. Note that the concentration bound does not depends on the gradient norm bound, only the observed (noisy) gradients and the noise bound.
\begin{lemma}\label{lem:padasgd-convergence-martingale}
    With probability at least $1-2\delta$,
    \begin{align*}
        & \abs*{\sum_{t=1}^T (\nabla f(w_t)-g_t) \cdot (w_t-w^\star)}
        \leq
        4 (\diam+\norm{w_1-w^\star}) \sqrt{\theta_{T,\delta} \sum_{t=1}^T \norm{g_t}^2 + 4 \theta_{T,\delta}^2 \sigmaball^2}
        ,
    \end{align*}
    where $\theta_{T,\delta}=\log \tfrac{60 \log (6T)}{\delta}$. In other words, 
    $\Pr(\event_2) \geq 1-2\delta$.
\end{lemma}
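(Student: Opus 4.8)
The object to control is the martingale-difference sum $S_T \eqdef \sum_{t=1}^{T}Z_t$, where $Z_t \eqdef (\nabla f(w_t)-g_t)\dotp(w_t-w^\star)$ and the filtration is $\mathcal{F}_t = \sigma(g_1,\dots,g_t)$. Since $w_t$ and $\nabla f(w_t)$ are $\mathcal{F}_{t-1}$-measurable and $\goracle$ is unbiased, $\Ex{Z_t\mid\mathcal{F}_{t-1}}=0$. Because we are in the regime $\diam\le 8\norm{w_1-w^\star}$, the bounds in \cref{eq:lip-sigma-ball} hold over the entire projected $\diam$-ball around $w_1$; projected \adapsgd keeps every iterate in that ball, so $\norm{w_t-w_1}\le\diam$, hence $\norm{w_t-w^\star}\le\diam+\norm{w_1-w^\star}$ and $\norm{g_t-\nabla f(w_t)}\le\sigmaball$. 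Writing $\xi_t \eqdef g_t-\nabla f(w_t)$, these give the almost-sure increment bound $\abs*{Z_t}\le\sigmaball\brk*{\diam+\norm{w_1-w^\star}}$, and, via Cauchy--Schwarz together with $\Ex{\norm{g_t}^2\mid\mathcal{F}_{t-1}}=\norm{\nabla f(w_t)}^2+\Ex{\norm{\xi_t}^2\mid\mathcal{F}_{t-1}}$, the conditional-variance bound $\Ex{Z_t^2\mid\mathcal{F}_{t-1}}\le\brk*{\diam+\norm{w_1-w^\star}}^2\Ex{\norm{g_t}^2\mid\mathcal{F}_{t-1}}$.

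The plan is a two-stage concentration. \emph{Stage one:} apply a time-uniform Freedman/Bernstein-type inequality (of the stitched ``peeling'' flavour) to $S_T$, obtaining, with probability at least $1-\delta$,
\begin{align*}
    \abs*{S_T}
    \;\lesssim\;
    \sqrt{\theta_{T,\delta}\,\hat V_T}
    \;+\;\theta_{T,\delta}\,\sigmaball\brk*{\diam+\norm{w_1-w^\star}},
    \qquad
    \hat V_T \eqdef \sum_{t=1}^{T}\Ex{Z_t^2\mid\mathcal{F}_{t-1}}.
\end{align*}
Here the doubly-logarithmic $\log\log(6T)$ hidden inside $\theta_{T,\delta}$ is exactly the cost of stitching over the $O(\log T)$ dyadic scales that $\hat V_T$ can occupy: the crude per-step bound $\Ex{Z_t^2\mid\mathcal{F}_{t-1}}\le\sigmaball^2(\diam+\norm{w_1-w^\star})^2$ makes $\hat V_T$ a priori at most $T$ times this quantity, so only $\log_2 T$ levels are ever in play, while the bound stays adaptive to the \emph{actual} value of $\hat V_T$ within each level. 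By the conditional-variance bound, $\hat V_T\le(\diam+\norm{w_1-w^\star})^2\sum_{t}\Ex{\norm{g_t}^2\mid\mathcal{F}_{t-1}}$.

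\emph{Stage two} --- the delicate part --- replaces the predictable sum $\sum_t\Ex{\norm{g_t}^2\mid\mathcal{F}_{t-1}}$ by the \emph{observed} sum $\sum_t\norm{g_t}^2$, crucially without ever introducing the gradient bound $\lipball$ (this is what keeps the final lower-order term proportional to $\sigmaball$ alone, and is essential for the smooth case where $\lipball$ can be arbitrarily large). Expanding $\norm{g_t}^2=\norm{\nabla f(w_t)}^2+2\nabla f(w_t)\dotp\xi_t+\norm{\xi_t}^2$, the increments $\Ex{\norm{g_t}^2\mid\mathcal{F}_{t-1}}-\norm{g_t}^2 = \brk*{\Ex{\norm{\xi_t}^2\mid\mathcal{F}_{t-1}}-\norm{\xi_t}^2}-2\nabla f(w_t)\dotp\xi_t$ split into (i) a term whose $\mathcal{F}_{t-1}$-conditional range is $\sigmaball^2$ --- handled by a standard Freedman bound with variance proxy $\le\sigmaball^2\sum_t\Ex{\norm{\xi_t}^2\mid\mathcal{F}_{t-1}}$ --- and (ii) the cross term $\nabla f(w_t)\dotp\xi_t$, which is conditionally mean-zero and confined to the \emph{predictable} interval $[-\norm{\nabla f(w_t)}\sigmaball,\norm{\nabla f(w_t)}\sigmaball]$, hence conditionally $\norm{\nabla f(w_t)}\sigmaball$-sub-Gaussian; a sub-Gaussian martingale tail bound then controls $\abs*{\sum_t\nabla f(w_t)\dotp\xi_t}$ by $\lesssim\sigmaball\sqrt{\theta_{T,\delta}\sum_t\norm{\nabla f(w_t)}^2}$ using only this predictable parameter, so no $\lipball$-dependent almost-sure term enters. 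The negative $-\sum_t\norm{\nabla f(w_t)}^2$ produced by the expansion absorbs the square-root via $2\sigmaball\sqrt{\theta_{T,\delta}x}\le x+\sigmaball^2\theta_{T,\delta}$, and solving the resulting self-bounding (essentially quadratic) inequality yields, with probability at least $1-\delta$, $\sum_t\Ex{\norm{g_t}^2\mid\mathcal{F}_{t-1}} = O\brk*{\sum_t\norm{g_t}^2+\sigmaball^2\theta_{T,\delta}}$.

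Finally I would combine the two events by a union bound (splitting the failure probability $\delta/2$ each), substitute the stage-two estimate into the stage-one display, and tidy constants using $\sqrt{a+b}\le\sqrt a+\sqrt b$ and $\sqrt{xy}\le\tfrac12(x+y)$, to reach $\abs*{S_T}\le 4\brk*{\diam+\norm{w_1-w^\star}}\sqrt{\theta_{T,\delta}\sum_t\norm{g_t}^2+4\theta_{T,\delta}^2\sigmaball^2}$, i.e.\ $\Pr(\event_2)\ge 1-2\delta$. The main obstacle throughout is stage two: passing to the \emph{observed} gradient norms while keeping the additive term proportional to $\sigmaball$ alone and within the $\theta_{T,\delta}=\log\tfrac{60\log(6T)}{\delta}$ budget, which forces one to route the gradient-bias cross term through its conditional sub-Gaussian parameter rather than a uniform bound, and to arrange the dyadic peeling so that only $O(\log T)$ variance scales enter each concentration step.
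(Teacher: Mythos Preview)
Your two-stage scheme is sound and would prove the lemma, but it diverges from the paper's route in a way that makes stage two heavier than necessary. The paper's stage one applies the \emph{empirical} Bernstein inequality (\cref{lem:emp-bernstein}, with $\hat X_s=0$), which bounds $\abs{S_T}$ directly in terms of the \emph{realized} sum $\sum_t Z_t^2\le(\diam+\norm{w_1-w^\star})^2\sum_t\norm{\nabla f(w_t)-g_t}^2$ rather than the predictable variation $\hat V_T$. This choice pays off in stage two: since
\[
\sum_t\norm{\xi_t}^2=\sum_t\norm{g_t}^2-\sum_t\norm{\nabla f(w_t)}^2+2\sum_t\nabla f(w_t)\dotp(\nabla f(w_t)-g_t),
\]
a single application of \cref{lem:sub_gaussian} with the fixed choice $\lambda=2/(3\sigmaball^2)$ gives $\sum_t\nabla f(w_t)\dotp(\nabla f(w_t)-g_t)\le\tfrac12\sum_t\norm{\nabla f(w_t)}^2+\tfrac32\sigmaball^2\log\tfrac1\delta$, and the negative $-\sum_t\norm{\nabla f(w_t)}^2$ cancels \emph{exactly}, yielding $\sum_t\norm{\xi_t}^2\le\sum_t\norm{g_t}^2+3\sigmaball^2\log\tfrac1\delta$. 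Two concentrations total, matching the $1-2\delta$ budget and the stated constants.

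Your route through the predictable $\hat V_T$ forces you to bound $\sum_t\E[\norm{g_t}^2\mid\mathcal F_{t-1}]$ by the observed $\sum_t\norm{g_t}^2$, and here there is a slip: your expansion of $\E[\norm{g_t}^2\mid\mathcal F_{t-1}]-\norm{g_t}^2$ does \emph{not} produce the ``negative $-\sum_t\norm{\nabla f(w_t)}^2$'' you invoke---that term appears only when one expands $\sum_t\norm{\xi_t}^2$ as above, not your difference. The self-bounding you also describe does close the argument anyway (both $\sum_t\E[\norm{\xi_t}^2\mid\mathcal F_{t-1}]$ and $\sum_t\norm{\nabla f(w_t)}^2$ are dominated by $A\eqdef\sum_t\E[\norm{g_t}^2\mid\mathcal F_{t-1}]$, so one solves $A\le B+c\sigmaball\sqrt{\theta A}+c'\sigmaball^2\theta$), but you now spend \emph{three} concentration events (stage one, term~(i), term~(ii)), so the failure probability is $3\delta$ rather than $2\delta$, and the constants will not land on exactly $4$ and $4$ without further work. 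The paper's use of empirical Bernstein is what buys the cleaner arithmetic and the exact budget.
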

We are ready to prove the main result.
\begin{proof}[Proof of \cref{thm:convex-ball}]%
    Let $\wout^{(\tau)}$ for $\tau \in [K+1]$ be the output of \cref{alg:convex-lipschitz-new} (treating $w_1$ as $\wout^{(K+1)}$).
    Let
    $$i = \min\set{k \in [K] : \diam_k > 4 \norm{w_1-w^\star}}.$$
    Note that $i$ is well defined as $\diam_K=4 \diammax > 4 \norm{w_1-w^\star}$ and that by minimality of $i$, $\diam_i \leq 8 \norm{w_1-w^\star}$.
    We define the events $\event_1^{(k)}$ and $\event_2^{(k)}$ as the respective $\event_1$ and $\event_2$ events for the $k$'th sequence of \adapsgd, $w_1^{(k)},\ldots,w_T^{(k)}$.
    From \cref{lem:padasgd-iterate-bound}, with probability at least $1-2\delta$, for all $t \in [T]$,
    \begin{align*}
        \norm{w_{t+1}^{(i)}-w_1}
        \leq
        2 \norm{w_1-w^\star}
        + \frac12 \diam_i
        < \diam_i,
    \end{align*}
    where the last inequality follows by $\diam_i > 4 \norm{w_1-w^\star}$. Hence, $\Pr\set{\tau \leq i} \geq \Pr\set{\tau=i \mid \tau \geq i} \geq 1-2\delta$.
    By \cref{lem:padasgd-convergence-martingale}, with probability at least $1-2 \delta K$ (union bound over all $k \in [i]$, where $i \leq K$), $\event_2^{(k)}$ holds for all $k \in [i]$.
    Again using union bound with $\set{\tau \leq i}$, with probability at least $1-2\delta(K+1)$, we also have that $\set{\tau \leq i}$, which means that both $\event_1^{(\tau)}$ and $\event_2^{(\tau)}$ hold and we can invoke \cref{lem:padasgd-convergence}.
    Hence, with probability $1-2 \delta (K+1)$, \cref{alg:convex-lipschitz-new} returns $\wout^{(\tau)}$ for some $\tau \leq i$ for which
    \begin{align*}
        \frac1T \sum_{t=1}^T f(w_t^{(\tau)})-f^\star
        &
        \leq 
        \brk*{\tfrac{2 \initg}{\initeta}
        + 8 \theta_{T,\delta} \sigmaball}\frac{\norm{w_1-w^\star}+\diam_\tau}{T}
        \\
        &
        \! + \! \brk*{\brk{\norm{w_1-w^\star}+\diam_\tau}\brk{\tfrac{2}{\initeta}+4 \sqrt{\theta_{T,\delta}}} + \initeta \diam_\tau}\frac{\sqrt{\sum_{t=1}^T \norm{g_t}^2}}{T}
        .
    \end{align*}
    Using the inequalities $\diam_\tau \leq \diam_i \leq 8 \norm{w_1-w^\star}$, $\sigmaball \leq \sigmamax$, and a standard application of Jensen's inequality,
        \begin{align*}
        & f(\wout^{(\tau)}) - f^\star
        =
        O\bigg(
        \brk*{\tfrac{\initg}{\initeta \sigmamax}
        + \theta_{T,\delta}}\frac{\norm{w_1-w^\star} \sigmamax}{T}
        + \brk*{\brk{\tfrac{1}{\initeta}+\sqrt{\theta_{T,\delta}}} + \initeta}\frac{\norm{w_1-w^\star}\sqrt{\sum_{t=1}^T \norm{g_t^{(\tau)}}^2}}{T}
        \bigg)
        .
    \end{align*}
    We conclude by substituting $\initg$, noting that $\alpha \leq 1$, $\sqrt{\theta_{T,\delta}},\theta_{T,\delta},\alpha^{-1} = O\brk!{\log \brk!{\tfrac{T}{\delta}+\tfrac{\lipmax}{\sigmamax}}}$.
\end{proof}

\subsection{Convex and smooth stochastic optimization}

In this setting we assume that the objective is smooth rather then Lipschitz. We prove that \cref{alg:convex-lipschitz-new} also achieve the same rate of convergence as \emph{tuned} SGD for smooth objectives, up to logarithmic factors and a lower-order term.
\begin{theorem}\label{thm:convex-smooth}
    Let $f : \reals^d \to \reals$ be a $\smstar$-smooth and convex function, admitting a minimizer $w^\star \in \argmin f(w)$. Let $\goracle$ be an unbiased gradient oracle of $f$ and let $w_1 \in \reals^d$.
    Then \cref{alg:convex-lipschitz-new} with parameters
    $\diammin \leq \norm{w_1-w^\star} < \diammax$, $\lipmax \geq 9 \smstar \norm{w_1-w^\star}$ and $\sigmamax \geq \sigmaball$, where $\sigmaball$ is defined by \cref{eq:lip-sigma-ball}, performs at most $T \log \frac{\diammax}{\diammin}$ gradient queries and produce $\wout$ such that with probability at least $1-\delta \log \tfrac{\diammax}{\diammin}$, $\diam_k \leq 8 \norm{w_1-w^\star}$ and
    \begin{align*}
        f(\wout)-f^\star
        &
        =
        O\Bigg(
        \frac{\widetilde C^2 \smstar \norm{w_1-w^\star}^2}{T}
        + \frac{\widetilde C \norm{w_1-w^\star} \sigmaball}{\sqrt{T}}
        + \sigmamax \frac{\widetilde C \norm{w_1-w^\star}\sqrt{\log \tfrac{T}{\delta}}}{T}
        \Bigg)
        ,
    \end{align*}
    where $\widetilde C = \log (\tfrac{T}{\delta}+\tfrac{\lipmax}{\sigmamax})$.
\end{theorem}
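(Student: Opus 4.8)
The plan is to invoke the general guarantee of \cref{thm:convex-ball} and then convert its gradient-norm-dependent bound into a rate for smooth objectives by controlling $\sum_{t=1}^T \norm{g_t^{(k)}}^2$ using smoothness. First I would verify that \cref{thm:convex-ball} applies: the crucial point is that a global Lipschitz bound is \emph{not} available, but we only need $\lipball < \infty$ and $\sigmaball < \infty$, i.e.\ bounds on the gradient norm and noise inside the ball of radius $8 \norm{w_1-w^\star}$ around $w_1$. For a $\smstar$-smooth convex $f$, any $w$ with $\norm{w-w_1} \leq 8 \norm{w_1-w^\star}$ satisfies $\norm{\nabla f(w)} = \norm{\nabla f(w) - \nabla f(w^\star)} \leq \smstar \norm{w-w^\star} \leq \smstar(\norm{w-w_1}+\norm{w_1-w^\star}) \leq 9 \smstar \norm{w_1-w^\star}$, so $\lipball \leq 9\smstar\norm{w_1-w^\star} \leq \lipmax$, and $\sigmaball \leq \sigmamax$ by hypothesis; hence \cref{thm:convex-ball} is applicable and, on an event of probability $\geq 1-\delta\log\tfrac{\diammax}{\diammin}$, returns $\wout = \wout_k$ with $\diam_k \leq 8\norm{w_1-w^\star}$ and
\begin{align*}
    \frac1T \sum_{t=1}^T f(w_t^{(k)}) - f^\star
    &=
    O\brk4{\frac{\widetilde C \norm{w_1-w^\star}}{T}\brk4{\sqrt{\textstyle\sum_{t=1}^T \norm{g_t^{(k)}}^2} + \sigmamax\sqrt{\log\tfrac{T}{\delta}}}}.
\end{align*}

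The heart of the argument is bounding $\sum_{t=1}^T \norm{g_t^{(k)}}^2$. Split $\norm{g_t}^2 \leq 2\norm{\nabla f(w_t^{(k)})}^2 + 2\norm{g_t^{(k)} - \nabla f(w_t^{(k)})}^2 \leq 2\norm{\nabla f(w_t^{(k)})}^2 + 2\sigmaball^2$ (the iterates stay in the $\diam_k$-ball since $\event_1$ holds on the returned run). For the gradient terms, smoothness and convexity give the self-bounding inequality $\norm{\nabla f(w_t^{(k)})}^2 \leq 2\smstar(f(w_t^{(k)}) - f^\star)$. Therefore, writing $\Sigma \eqdef \sum_{t=1}^T (f(w_t^{(k)}) - f^\star)$, we get $\sum_{t=1}^T \norm{g_t^{(k)}}^2 \leq 4\smstar\Sigma + 2\sigmaball^2 T$, so $\sqrt{\sum_t \norm{g_t^{(k)}}^2} \leq 2\sqrt{\smstar\Sigma} + \sqrt{2}\,\sigmaball\sqrt{T}$. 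Substituting into the displayed bound (which reads $\tfrac1T\Sigma \leq O(\cdots)$, i.e.\ $\Sigma \leq O(\widetilde C\norm{w_1-w^\star}(\sqrt{\smstar\Sigma} + \sigmaball\sqrt{T} + \sigmamax\sqrt{\log\tfrac{T}{\delta}}))$) produces a quadratic inequality in $\sqrt{\Sigma}$ of the form $\Sigma \leq A\sqrt{\Sigma} + B$ with $A = O(\widetilde C\norm{w_1-w^\star}\sqrt{\smstar})$ and $B = O(\widetilde C\norm{w_1-w^\star}(\sigmaball\sqrt{T} + \sigmamax\sqrt{\log\tfrac{T}{\delta}}))$. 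Solving yields $\Sigma = O(A^2 + B) = O(\widetilde C^2 \smstar\norm{w_1-w^\star}^2 + \widetilde C\norm{w_1-w^\star}\sigmaball\sqrt{T} + \widetilde C\norm{w_1-w^\star}\sigmamax\sqrt{\log\tfrac{T}{\delta}})$.

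Finally, dividing $\Sigma$ by $T$ and using Jensen's inequality ($f(\wout) = f(\tfrac1T\sum_t w_t^{(k)}) \leq \tfrac1T\sum_t f(w_t^{(k)})$ by convexity) gives exactly the claimed bound
\begin{align*}
    f(\wout) - f^\star
    &=
    O\Bigg(
    \frac{\widetilde C^2 \smstar \norm{w_1-w^\star}^2}{T}
    + \frac{\widetilde C \norm{w_1-w^\star} \sigmaball}{\sqrt{T}}
    + \frac{\widetilde C \norm{w_1-w^\star}\sigmamax\sqrt{\log\tfrac{T}{\delta}}}{T}
    \Bigg).
\end{align*}
The main obstacle I anticipate is purely bookkeeping rather than conceptual: making sure the self-bounding step is applied to the same random iterates for which $\event_1$ and $\event_2$ hold (so that the ball containment and hence $\lipball, \sigmaball$ bounds are valid along the trajectory), and carefully solving the quadratic-in-$\sqrt{\Sigma}$ inequality to extract the three separate terms without losing the dependence structure — in particular keeping $\sigmaball$ (not $\sigmamax$) in the dominant $1/\sqrt{T}$ term, which is what makes the bound tight up to the unavoidable lower-order $\sigmamax/T$ piece.
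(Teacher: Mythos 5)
Your proof is correct and takes essentially the same approach as the paper's: verify $\lipball \leq 9\smstar\norm{w_1-w^\star} \leq \lipmax$ so that Theorem~\ref{thm:convex-ball} applies, then use the self-bounding inequality $\norm{\nabla f(w)}^2 \leq 2\smstar(f(w)-f^\star)$ to turn the gradient-sum term into a quantity controlled by the suboptimality itself, and close the recursion. The only cosmetic difference is that you solve the resulting inequality $\Sigma \leq A\sqrt{\Sigma}+B$ directly, whereas the paper applies Young's inequality $ab \leq \tfrac{1}{2\lambda}a^2 + \tfrac{\lambda}{2}b^2$ with a tuned $\lambda$ and rearranges; these are equivalent manipulations.
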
%
The method is parameter-free compared to \emph{tuned} SGD, assuming $\ifrac{\sigmamax}{\sigmamin}=O(\sqrt{T})$.
Previous parameter-free methods for stochastic convex optimization \citep{carmon2022making,Ivgi2023DoGIS} provided only a noiseless guarantee for smooth objectives.
\citet{attia2023sgd} provided a convergence result for adaptive SGD with AdaGrad-like stepsizes for convex and smooth objectives, but similarly to the non-convex case, the method is not parameter-free compared to \emph{tuned} SGD.

\begin{proof}[Proof of \cref{thm:convex-smooth}]
    In order to show that \cref{thm:convex-ball} is applicable we need to show that $\lipmax \geq \lipball$. Let $w \in \reals^d$ such that $\norm{w-w_1} \leq 8 \norm{w_1-w^\star}$. Thus, by the smoothness of $f$ and the triangle inequality,
    $$
    \norm{\nabla f(w)}
    \leq
    \smstar \norm{w-w^\star}
    \leq
    \smstar (\norm{w-w_1}+\norm{w_1-w^\star})
    \leq 9 \smstar \norm{w_1-w^\star}
    \leq
    \lipmax.
    $$
    Hence, by \cref{eq:lip-sigma-ball}, $\lipmax \geq \lipball$.
    Using \cref{thm:convex-ball}, with probability at least $1-\delta \log \tfrac{\diammax}{\diammin}$, we have a sequence $w_1,\ldots,w_T$ with
    \begin{align}\label{eq:convex-smooth-pre}
        \frac{1}{T} \sum_{t=1}^T f(w_t)-f^\star
        &
        \leq
        C \brk*{\frac{\log (\tfrac{T}{\delta}+\tfrac{\lipmax}{\sigmamax}) \norm{w_1-w^\star}}{T} \brk*{
        \sqrt{\sum_{t=1}^{T} \norm{g_t}^2}
        + \sigmamax \sqrt{\log \tfrac{T}{\delta}}
        }}
    \end{align}
    for some $C > 0$, and for all $t$, $\norm{w_t-w_1} \leq 8 \norm{w_1-w^\star}$.
    By the identity $\norm{a+b}^2 \leq 2 \norm{a}^2 + 2 \norm{b}^2$ and \cref{eq:lip-sigma-ball},
    \begin{align*}
        \norm{g_t}^2
        &\leq
        2 \norm{\nabla f(w_t)}^2
        + 2 \norm{g_t-\nabla f(w_t)}^2
        \leq
        2 \norm{\nabla f(w_t)}^2
        + 2 \sigmaball^2
        \leq
        4 \smstar (f(w_t)-f^\star)
        + 2 \sigmaball^2
        .
    \end{align*}
    where the last inequality is due to the following standard smoothness argument. For any $w \in \reals^d$ and $w^+=w-\frac{1}{\smstar} \nabla f(w)$, by the smoothness of $f$,
    \begin{align*}
        2 \smstar (f(w)-f^\star)
        &\geq 2 \smstar (f(w)-f(w^+))
        \geq 2 \smstar \brk{-\nabla f(w) \cdot (w^+-w)-\tfrac{\smstar}{2} \norm{w^+-w}^2}
        =
        \norm{\nabla f(w)}^2
        .
    \end{align*}
    Hence, by the identities $\sqrt{a+b} \leq \sqrt{a}+\sqrt{b}$ and $ab \leq \frac{1}{2 \lambda} a^2 + \frac{\lambda}{2} b^2$ for $\lambda > 0$,
    \begin{align*}
        \sqrt{\sum_{t=1}^T \norm{g_t}^2}
        &\leq
        \sqrt{4 \smstar \sum_{t=1}^T \brk{f(w_t)-f^\star}}
        + \sigmaball \sqrt{2T}
        \leq
        \frac{1}{2\lambda} \sum_{t=1}^T \brk{f(w_t)-f^\star}
        + 2 \lambda \smstar
        + \sigmaball \sqrt{2T}
        .
    \end{align*}
    Setting $\lambda=C \norm{w_1-w^\star} \log (\tfrac{T}{\delta}+\tfrac{\lipmax}{\sigmamax})$, returning to \cref{eq:convex-smooth-pre} and rearranging,
    \begin{align*}
        \frac{1}{T} \sum_{t=1}^T f(w_t)-f^\star
        &
        \leq
        \frac{4 C^2 \smstar \norm{w_1-w^\star}^2 \log^2 (\tfrac{T}{\delta}+\tfrac{\lipmax}{\sigmamax})}{T}
        + \frac{\sqrt{8} C \norm{w_1-w^\star} \sigmaball \log (\tfrac{T}{\delta}+\tfrac{\lipmax}{\sigmamax})}{\sqrt{T}}
        \\&
        + \sigmamax \frac{2 C \norm{w_1-w^\star} \log (\tfrac{T}{\delta}+\tfrac{\lipmax}{\sigmamax}) \sqrt{\log \tfrac{T}{\delta}}}{T}
        .
    \end{align*}
    We conclude by a standard application of Jensen's inequality.
\end{proof}

\subsection{Information-theoretic limits to parameter freeness}

Both \cref{thm:convex-ball,thm:convex-smooth} suffer an additional term of $\Otilde(\ifrac{\norm{w_1-w^\star} \sigmamax}{T})$ compared to tuned SGD, that prevents them from being fully parameter-free compared to SGD due to the polynomial dependence on $\sigmamax$. 
The following theorem shows that without additional assumptions, this term is in fact unavoidable and the results of \cref{thm:convex-ball,thm:convex-smooth} are essentially the best one can hope for.

\begin{theorem}\label{thm:lower-bound-new}
    Let $T \geq 4$, $\diammax>0$, $\sigmamax>0$, $\alpha \in [1,\frac34 \sqrt{T}]$ and $\lipstar=\frac{\sigmamax}{2T-1}$. Let $\alg$ be any deterministic algorithm which performs $T$ gradient queries and outputs $\wout \in \reals$. Then there exist some $\diamstar \in [\frac{\diammax}{\alpha \sqrt{T}} ,\diammax]$, $\sigmastar \in [\frac{\sigmamax}{\alpha \sqrt{T}},\sigmamax]$, a convex and $\lipstar$-Lipschitz function $f : \reals \to \reals$ which admits a minimizer $w^\star \in \argmin_{w \in \reals} f(w)$ such that $\norm{w^\star} = \diamstar$, and a $\sigmastar$-bounded unbiased sub-gradient\footnote{The function is of the form $\lipstar\abs{w-\diamstar}$, and smoothing can be used to support smooth objectives. With a slight abuse of notation we treat $\nabla f(w)$ as some sub-gradient of $f$ such that $\E[\goracle(w)]=\nabla f(w)$.}
    oracle of $f$, $\goracle$ such that
    with probability at least $\frac14$,
    \begin{align*}
        f(\wout)-f(w^\star)
        &\geq
        \frac{\diamstar (\lipstar+\sigmastar) \alpha}{6 \sqrt{T}}
        \geq
        \frac{\diamstar \sigmamax}{6 T}
        .
    \end{align*}
\end{theorem}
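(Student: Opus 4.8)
The plan is to argue by a reduction to a single hard instance: since $\alg$ is deterministic, it suffices to produce, for this particular $\alg$, admissible values $\diamstar,\sigmastar$, a convex $\lipstar$-Lipschitz $f$ with $\norm{w^\star}=\diamstar$, and a $\sigmastar$-bounded unbiased (sub)gradient oracle, on which $f(\wout)-f(w^\star)$ exceeds the claimed amount with probability $\ge\tfrac14$. Throughout I take $w_1=0$ and restrict to functions $f_d(w)=\lipstar\abs{w-d}$ with planted minimizer $d$ ranging over $[\diammin,\diammax]\cup[-\diammax,-\diammin]$ (so $\diamstar=\abs{d}$), together with the ``maximally uninformative'' oracle: at a query $w$ it returns $0$ with probability $1-\tfrac{\lipstar}{\lipstar+\sigmastar}$ and $\sign(w-d)\,(\lipstar+\sigmastar)$ otherwise. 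A short check shows this oracle is unbiased, has noise bounded by $\sigmastar$, and minimizes, among all $\sigmastar$-bounded unbiased oracles, the per-query total-variation distance between the two sign hypotheses — which equals exactly $\tfrac{\lipstar}{\lipstar+\sigmastar}$. The point of the normalization $\lipstar=\tfrac{\sigmamax}{2T-1}$ is that with $\sigmastar=\sigmamax$ the informative probability is precisely $\tfrac1{2T}$, so over all $T$ queries the expected number of informative responses is $\tfrac12$; in particular the all-zeros transcript occurs with probability $\bigl(1-\tfrac1{2T}\bigr)^T\ge\tfrac12$, identically in every world (the probability of a zero response does not depend on $d$).

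Next I would run a Le Cam / two-point indistinguishability step (more robustly, a pigeonhole over a geometric ladder of planted distances). On the all-zeros event the deterministic algorithm outputs a fixed point $\wout_0$ that is the same in every planted world; more generally, a coupling along the chain of observations, using the per-query TV bound $\le\tfrac1{2T}$, shows the law of $\wout$ is $\le\tfrac12$-close in total variation across the whole family of planted minimizers (discarding the few queries landing in ``ambiguous'' intervals, handled by a pigeonhole if one uses a ladder rather than just two distances). Since no single point can be $\tfrac14\diammax$-close to all of $\{\pm d:d\in[\diammin,\diammax]\}$, this forces the existence of a planted $w^\star$ that $\wout$ misses by a macroscopic amount with probability $\ge\tfrac14$.

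The delicate part — which I expect to be the main obstacle — is converting ``$\wout$ is far from the planted $w^\star$'' into $f(\wout)-f(w^\star)\ge\tfrac{\diamstar(\lipstar+\sigmastar)\alpha}{6\sqrt T}$ for an \emph{admissible} pair $(\diamstar,\sigmastar)$. There is a genuine asymmetry: if the algorithm overshoots a small planted minimizer one reports a small $\diamstar$ and the right-hand side is small, which is easy; but if it undershoots relative to a far planted minimizer, the right-hand side scales with the large $\diamstar$ and the naive gap estimate is insufficient. Overcoming this calls for a case split on where $\wout_0$ (the algorithm's behavior on uninformative transcripts) lies, together with the full freedom to choose $\sigmastar\in[\sigmamin,\sigmamax]=[\tfrac{\sigmamax}{\alpha\sqrt T},\sigmamax]$ and $\diamstar\in[\diammin,\diammax]=[\tfrac{\diammax}{\alpha\sqrt T},\diammax]$, picking the pair so the planted gap dominates the target. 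The hypothesis $\alpha\le\tfrac34\sqrt T$ (equivalently, $\sigmamin$ and $\diammin$ bounded below with favorable constants, together with $T\ge4$) is precisely what makes the resulting numerical inequalities close. The second displayed inequality $\tfrac{\diamstar(\lipstar+\sigmastar)\alpha}{6\sqrt T}\ge\tfrac{\diamstar\sigmamax}{6T}$ is then immediate from $\sigmastar\ge\sigmamin=\tfrac{\sigmamax}{\alpha\sqrt T}$ and $\lipstar\ge0$, so only the first inequality needs work; finally, replacing $\lipstar\abs{w-d}$ by a standard smoothing (with the same oracle) covers the smooth case, which is why the statement is phrased with a subgradient oracle.
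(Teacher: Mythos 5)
Your high-level structure (a deterministic algorithm plus a family of planted instances indistinguishable with constant probability, followed by a pigeonhole/contradiction) is the same as the paper's, which also uses a two-point argument with $(1-\tfrac1T)^T\ge\tfrac14$. The issue is with the oracle family, and it is not just a numerical gap you can close by arithmetic; it is a genuine obstruction.

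Your oracle returns either $0$ or $\pm(\lipstar+\sigmastar)$, so the very first nonzero observation reveals $\sigmastar$ exactly. Consequently, your indistinguishability argument only works if every instance in the family shares a single $\sigmastar$, and as you observe the all-zeros transcript has constant probability only when $\sigmastar=\sigmamax$ (so that the informative probability is $\tfrac1{2T}$). But with a single $\sigmastar=\sigmamax$ the claimed inequality cannot hold: if the adversary plants $d=\diammax$ and the algorithm stays near $0$, the realized gap is $\lipstar\diammax=\tfrac{\sigmamax\diammax}{2T-1}$, while the target is $\tfrac{\diammax(\lipstar+\sigmamax)\alpha}{6\sqrt T}\approx\tfrac{\diammax\sigmamax\alpha}{6\sqrt T}$, and $\tfrac{1}{2T-1}\ge\tfrac{\alpha}{6\sqrt T}$ forces $\alpha\lesssim\tfrac{3}{\sqrt T}$, which contradicts $\alpha\ge1$. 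You correctly flag this tension as ``the delicate part'' and propose picking $(\diamstar,\sigmastar)$ jointly, but the reporting step must be backed by an oracle that is genuinely $\sigmastar$-bounded for the reported $\sigmastar$. Your small-$\sigmastar$ oracle has informative probability $\tfrac{\lipstar}{\lipstar+\sigmastar}\approx\tfrac{\alpha}{2\sqrt T}$, so the all-zeros probability over $T$ queries is $\approx e^{-\alpha\sqrt T/2}$, which is exponentially small; conversely, if you insist on keeping $\sigmastar=\sigmamax$ in the oracle, the instance is not $\sigma_1$-bounded. The freedom you invoke does not exist within your oracle family.

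The paper resolves exactly this by pairing \emph{small noise with large diameter} and \emph{large noise with small diameter}, and by designing the two oracles so that, on the open interval between the two planted minimizers, they share the \emph{same small-magnitude support} $\{\pm(\sigma_1-\lipstar)\}$; the large-noise oracle $\goracle_2$ differs only by an additional $O(\sigma_2)$-magnitude output with probability $\tfrac1T$, which both supplies the needed positive bias (so $\E[\goracle_2]=+\lipstar$ while keeping the same sign of skew on the shared values as $\goracle_1$) and concentrates all the distinguishing information in a rare event. That lets $\goracle_1$ be genuinely $\sigma_1$-bounded with $\sigma_1=\tfrac{\sigmamax}{\alpha\sqrt T}$ while maintaining a coupling under which both oracles agree on all $T$ queries with probability $\ge\tfrac14$. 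On that event the deterministic algorithm produces a single $\wout$ which is then shown to be macroscopically far from one of the two minimizers, and the small $\sigma_1$ shrinks the target so that $\lipstar\cdot\Omega(\diammax)$ suffices. The $0$-vs-$\pm(\lipstar+\sigmastar)$ structure you chose cannot realize this asymmetric pairing because the output magnitude is tied to the reported noise level; you would need to redesign the oracles so that the small-noise instance never emits a value of size $\Theta(\sigmamax)$ and the large-noise instance looks like the small-noise one except on a $\tfrac1T$-probability event.
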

We remark that the deterministic requirement is made for simplicity and the argument can be adapted to randomized algorithms.
To understand the lower bound consider the convergence guarantee of tuned SGD. A standard in-expectation analysis of SGD with a \emph{tuned} stepsize of $\eta=\ifrac{\diam}{(\lip+\sigma) \sqrt{T}}$ and an application of Markov's inequality shows that with probability at least $\frac34$, SGD produce $\wout$ with
\begin{align*}
    f(\wout)-f(w^\star)
    =
    O\brk*{\frac{\diamstar (\lipstar+\sigmastar)}{\sqrt{T}}}
    .
\end{align*}
Hence, when $\alpha = \omega(1)$, the worst case analysis of SGD is strictly better than any algorithm with the limited knowledge prescribed in \cref{thm:lower-bound-new}. In particular, a method cannot be parameter-free and expect a comparable rate of convergence to SGD unless one of $[\diammin,\diammax],[\sigmamin,\sigmamax]$ has ratio of $\Otilde(\sqrt{T})$ or the $\lipstar$ value is outside the range (for example if the noise bound is small).
This lower bound of \cref{thm:lower-bound-new} indeed appears in the guarantees of \citet{carmon2022making,Ivgi2023DoGIS} in the form of the bound of stochastic gradients norm and in our \cref{thm:convex-ball}.
While the term $\Omega\brk{\ifrac{\norm{w_1-w^\star} \sigmamax}{T}}$ is unavoidable, in \cref{sec:estimation} we show that assuming the initialization is a ``bad enough'' approximate minimizer, we can infer a reasonable bound of $\sigmastar$ which reduce the lower bound to a term proportional to the guarantee of tuned SGD.

\begin{proof}[Proof of \cref{thm:lower-bound-new}]
    We will define two problem instances $(f_1,\goracle_1,\diam_1,\lip_1),(f_2,\goracle_2,\diam_2,\lip_2)$, which will be indistinguishable with constant probability using only $T$ gradient queries, where no solution $\wout \in \reals$ can satisfy the convergence bound for both.
    Let $\sigma_1 = \frac{\sigmamax}{\alpha \sqrt{T}}$, $\diam_1=\diammax$, $\sigma_2 = \sigmamax$, $\diam_2=\frac{\diammax}{\alpha \sqrt{T}}$. We define the two functions,
    \begin{align*}
        f_1(w) = \lipstar \abs{w-\diam_1}
        \qquad \text{and} \qquad
        f_2(w) = \lipstar \abs{w+\diam_2}.
    \end{align*}
    Note that both functions are $\lipstar$-Lipschitz and their minimizers satisfy $\norm{\argmin_{w \in \reals} f_i(w)} \leq \diam_i$ for $i \in \set{1,2}$.
    In addition, we define the following two first-order oracles,
    \begin{align*}
        \goracle_1(w)
        &=
        \begin{cases}
            - \lipstar & \text{if $w \leq -\diam_2$}; \\
            \lipstar & \text{if $w \geq \diam_1$}; \\
            - \sigma_1 + \lipstar & \text{if $w \in (-\diam_2,\diam_1)$, w.p. } \tfrac12+\tfrac{\lipstar}{2(\sigma_1-\lipstar)}; \\
            \sigma_1-\lipstar & \text{if $w \in (-\diam_2,\diam_1)$, w.p. } \tfrac12-\tfrac{\lipstar}{2(\sigma_1-\lipstar)}
        \end{cases}
    \end{align*}
    and
    \begin{align*}
        \goracle_2(w)
        &=
        \begin{cases}
            - \lipstar & \text{if $w \leq -\diam_2$}; \\
            \lipstar & \text{if $w \geq \diam_1$}; \\
            \sigma_2+\lipstar & \text{if $w \in (-\diam_2,\diam_1)$, w.p. } \tfrac1T; \\
            - \sigma_1 + \lipstar & \text{if $w \in (-\diam_2,\diam_1)$, w.p. } \tfrac12+\tfrac{\lipstar}{2(\sigma_1-\lipstar)} -\frac1{2T}; \\
            \sigma_1 - \lipstar & \text{if $w \in (-\diam_2,\diam_1)$, w.p. } \tfrac12-\tfrac{\lipstar}{2(\sigma_1-\lipstar)} -\tfrac1{2T}.
        \end{cases}
    \end{align*}
    Note that the probabilities are within $[0,1]$ (and sum to $1$) since $\frac{1}{2T} \leq \frac{1}{8}$ as $T \geq 4$, and for $\alpha \in [1,\frac34 \sqrt{T}]$,
    \begin{align*}
        \frac{\lipstar}{2 (\sigma_1-\lipstar)}
        &= \frac{\sigmamax / (2T-1)}{2(\sigmamax/\alpha \sqrt{T} - \sigmamax/(2T-1))}
        = \frac{\alpha \sqrt{T}}{2(2T-1-\alpha \sqrt{T})}
        \leq
        \frac{3T}{10T-8}
        \leq \frac{3}{8}
        .
    \end{align*}
    Additionally, for $w \in (-\diam_2,\diam_1)$,
    \begin{align*}
        \E[\goracle_1(w)]
        =
        (\sigma_1-\lipstar)\brk*{\frac12-\frac{\lipstar}{2(\sigma_1-\lipstar)}-\frac12-\frac{\lipstar}{2(\sigma_1-\lipstar)}}
        =
        -\frac{\lipstar(\sigma_1-\lipstar)}{\sigma_1-\lipstar}
        = - \lipstar
        = \nabla f_1(w)
    \end{align*}
    and similarly
    \begin{align*}
        \E[\goracle_2(w)]
        &=
        \frac{\sigma_2 + \lipstar}{T}
        - \lipstar
        =
        \frac{(2 T -1 ) \lipstar + \lipstar}{T}
        - \lipstar
        = \lipstar
        = \nabla f_2(w)
        .
    \end{align*}
    Hence, for all $w \in \reals$,
    $\E[\goracle_1(w)]=\nabla f_1(w)$, $\E[\goracle_2(w)]=\nabla f_2(w)$, and with probability $1$, as $\lipstar < \sigma_1 < \sigma_2$,
    \begin{align*}
        \norm{\goracle_1(w)-\nabla f_1(w)} &\leq
        \sigma_1
        \quad \text{and} \quad
        \norm{\goracle_2(w)-\nabla f_2(w)} \leq
        \sigma_2.
    \end{align*}
    We concluded the properties of $f$ and $\goracle$ and we move to the lower bound.
    For $n \geq 4$,
    \begin{align*}
        \brk*{1-\frac{1}{n}}^n
        =
        \brk*{\frac{n}{n-1}}^{-n}
        \geq
        \frac{3}{4} \brk*{\frac{n}{n-1}}^{-(n-1)}
        =
        \frac{3}{4} \frac{1}{\brk*{1+\frac{1}{n-1}}^{n-1}}
        \geq
        \frac{1}{4}
        ,
    \end{align*}
    where the last inequality follows by
    \begin{align*}
        \brk*{1+\frac{1}{n-1}}^{n-1} = 2 + \sum_{k=2}^{n-1} \binom{n-1}{k} (n-1)^{-k} \leq 2 + \sum_{k=2}^{n-1} \frac{1}{k(k-1)}
        = 2 + \sum_{k=2}^{n-1} \brk*{\frac{1}{k-1}-\frac{1}{k}} \leq 3.
    \end{align*}
    Hence, with probability at least $\frac14$, the $\ifrac{1}{T}$ event of $\goracle_2$ do not occur in any of the $T$ queries and the two gradient oracles will be indistinguishable from each other.
    Let $\wout$ be the output of $\alg(T)$ given that the two oracles return the exact same gradients.
    We will assume by contradiction that $\wout$ satisfy a convergence rate better than $\ifrac{\diam_i (\lipstar+\sigma_i)\alpha}{6\sqrt{T}}$ for each $f_i$ ($i \in \set{1,2}$ respectively). Thus, as $\sigma_1=\frac{\lipstar(2T-1)}{\alpha \sqrt{T}}$ and $\alpha < \sqrt{T}$,
    \begin{align*}
        f_1(\wout)-f_1(w^\star)
        &=
        \lipstar \abs{\wout-\diam_1}
        =
        \frac{\diam_1 (\lipstar+\sigma_1) \alpha}{\sqrt{T}} \cdot \frac{\lipstar \sqrt{T} \abs{\wout-\diam_1}}{\diam_1 (\lipstar+\sigma_1)\alpha}
        \\&
        =
        \frac{\diam_1 (\lipstar+\sigma_1) \alpha}{\sqrt{T}} \cdot \frac{T \abs{\wout-\diam_1}}{\diam_1 (\alpha \sqrt{T} + 2T - 1)}
        \geq
        \frac{\diam_1 (\lipstar+\sigma_1) \alpha}{\sqrt{T}} \cdot \frac{\abs{\wout-\diam_1}}{3 \diam_1}
    \end{align*}
    which implies that $\wout > \ifrac{\diam_1}{2}$ by the convergence assumption.
    Similarly, as $\sigma_2 = \lipstar (2T-1)$,
    \begin{align*}
        f_2(\wout)-f_2(w^\star)
        &=
        \lipstar \abs{\wout+\diam_2}
        =
        \frac{\diam_2 (\lipstar+\sigma_2) \alpha}{\sqrt{T}} \cdot \frac{\lipstar \sqrt{T} \abs{\wout+\diam_2}}{\diam_2(\lipstar + \sigma_2) \alpha}
        =
        \frac{\diam_2 (\lipstar+\sigma_2) \alpha}{\sqrt{T}} \cdot \frac{\abs{\wout+\diam_2}}{2 \diam_2 \sqrt{T} \alpha}
    \end{align*}
    which implies that $\wout < (\alpha \sqrt{T} / 3 - 1) \diam_2$ by the convergence assumption. But
    \begin{align*}
        \wout > \frac{\diam_1}{2} = \frac{\alpha \sqrt{T} \diam_2}{2} > (\alpha \sqrt{T} / 3-1)\diam_2 > \wout
    \end{align*}
    and we obtain a contradiction. The second inequality of the convergence lower bound follows by
    $(\lipstar+\sigma)\alpha \geq \sigmamax (\frac{\alpha}{2T-1}+\frac{1}{\sqrt{T}})\geq \frac{\sigmamax}{\sqrt{T}}$ for  $\sigma \in \set{\sigma_1,\sigma_2}$.
\end{proof}

\subsection*{Acknowledgements}
We are grateful to Yair Carmon for invaluable comments and discussions.
This project has received funding from the European Research Council (ERC) under the European
Union’s Horizon 2020 research and innovation program (grant agreement No. 101078075).
Views and opinions expressed are however those of the author(s) only and do not necessarily reflect
those of the European Union or the European Research Council. Neither the European Union nor
the granting authority can be held responsible for them.
This work received additional support from the Israel Science Foundation (ISF, grant number 2549/19), from the Len Blavatnik and the Blavatnik Family foundation, from the Adelis Foundation, and from the Prof.\ Amnon Shashua and Mrs.\ Anat Ramaty Shashua Foundation.

\bibliographystyle{abbrvnat}
\bibliography{references}

\begin{thebibliography}{47}
\providecommand{\natexlab}[1]{#1}
\providecommand{\url}[1]{\texttt{#1}}
\expandafter\ifx\csname urlstyle\endcsname\relax
  \providecommand{\doi}[1]{doi: #1}\else
  \providecommand{\doi}{doi: \begingroup \urlstyle{rm}\Url}\fi

\bibitem[Alacaoglu et~al.(2020)Alacaoglu, Malitsky, Mertikopoulos, and Cevher]{alacaoglu2020new}
A.~Alacaoglu, Y.~Malitsky, P.~Mertikopoulos, and V.~Cevher.
\newblock A new regret analysis for adam-type algorithms.
\newblock In \emph{International conference on machine learning}, pages 202--210. PMLR, 2020.

\bibitem[Arjevani et~al.(2022)Arjevani, Carmon, Duchi, Foster, Srebro, and Woodworth]{arjevani2022lower}
Y.~Arjevani, Y.~Carmon, J.~C. Duchi, D.~J. Foster, N.~Srebro, and B.~Woodworth.
\newblock Lower bounds for non-convex stochastic optimization.
\newblock \emph{Mathematical Programming}, pages 1--50, 2022.

\bibitem[Attia and Koren(2023)]{attia2023sgd}
A.~Attia and T.~Koren.
\newblock Sgd with adagrad stepsizes: Full adaptivity with high probability to unknown parameters, unbounded gradients and affine variance.
\newblock In \emph{International Conference on Machine Learning}, 2023.

\bibitem[Attia and Koren(2024{\natexlab{a}})]{attia2024free}
A.~Attia and T.~Koren.
\newblock How free is parameter-free stochastic optimization?
\newblock \emph{arXiv preprint arXiv:2402.03126}, 2024{\natexlab{a}}.

\bibitem[Attia and Koren(2024{\natexlab{b}})]{attia2024note}
A.~Attia and T.~Koren.
\newblock A note on high-probability analysis of algorithms with exponential, sub-gaussian, and general light tails.
\newblock \emph{arXiv preprint arXiv:2403.02873}, 2024{\natexlab{b}}.

\bibitem[Beck and Teboulle(2009)]{beck2009fast}
A.~Beck and M.~Teboulle.
\newblock A fast iterative shrinkage-thresholding algorithm for linear inverse problems.
\newblock \emph{SIAM journal on imaging sciences}, 2\penalty0 (1):\penalty0 183--202, 2009.

\bibitem[Bottou(2012)]{bottou2012stochastic}
L.~Bottou.
\newblock Stochastic gradient descent tricks.
\newblock In \emph{Neural networks: Tricks of the trade}, pages 421--436. Springer, 2012.

\bibitem[Carmon and Hinder(2022)]{carmon2022making}
Y.~Carmon and O.~Hinder.
\newblock Making sgd parameter-free.
\newblock In \emph{Conference on Learning Theory}, pages 2360--2389. PMLR, 2022.

\bibitem[Carmon and Hinder(2024)]{carmon2024price}
Y.~Carmon and O.~Hinder.
\newblock The price of adaptivity in stochastic convex optimization.
\newblock \emph{arXiv preprint arXiv:2402.10898}, 2024.

\bibitem[Chaudhuri et~al.(2009)Chaudhuri, Freund, and Hsu]{chaudhuri2009parameter}
K.~Chaudhuri, Y.~Freund, and D.~J. Hsu.
\newblock A parameter-free hedging algorithm.
\newblock \emph{Advances in neural information processing systems}, 22, 2009.

\bibitem[Chen et~al.(2020)Chen, Langford, and Orabona]{chen2022better}
K.~Chen, J.~Langford, and F.~Orabona.
\newblock Better parameter-free stochastic optimization with ode updates for coin-betting.
\newblock In \emph{AAAI Conference on Artificial Intelligence}, 2020.

\bibitem[Cutkosky(2019)]{cutkosky2019artificial}
A.~Cutkosky.
\newblock Artificial constraints and hints for unbounded online learning.
\newblock In \emph{Conference on Learning Theory}, pages 874--894. PMLR, 2019.

\bibitem[Cutkosky and Boahen(2017)]{cutkosky2017online}
A.~Cutkosky and K.~Boahen.
\newblock Online learning without prior information.
\newblock In \emph{Conference on learning theory}, pages 643--677. PMLR, 2017.

\bibitem[Cutkosky and Boahen(2016)]{cutkosky2016online}
A.~Cutkosky and K.~A. Boahen.
\newblock Online convex optimization with unconstrained domains and losses.
\newblock \emph{Advances in neural information processing systems}, 29, 2016.

\bibitem[Cutkosky and Orabona(2018)]{cutkosky2018black}
A.~Cutkosky and F.~Orabona.
\newblock Black-box reductions for parameter-free online learning in banach spaces.
\newblock In \emph{Conference On Learning Theory}, pages 1493--1529. PMLR, 2018.

\bibitem[Defazio and Mishchenko(2023)]{pmlr-v202-defazio23a}
A.~Defazio and K.~Mishchenko.
\newblock Learning-rate-free learning by d-adaptation.
\newblock In \emph{International Conference on Machine Learning}, 2023.

\bibitem[Duchi et~al.(2011)Duchi, Hazan, and Singer]{duchi2011adaptive}
J.~Duchi, E.~Hazan, and Y.~Singer.
\newblock Adaptive subgradient methods for online learning and stochastic optimization.
\newblock \emph{Journal of machine learning research}, 12\penalty0 (7), 2011.

\bibitem[Faw et~al.(2022)Faw, Tziotis, Caramanis, Mokhtari, Shakkottai, and Ward]{Faw2022ThePO}
M.~Faw, I.~Tziotis, C.~Caramanis, A.~Mokhtari, S.~Shakkottai, and R.~A. Ward.
\newblock The power of adaptivity in sgd: Self-tuning step sizes with unbounded gradients and affine variance.
\newblock In \emph{COLT}, 2022.

\bibitem[Ghadimi and Lan(2013)]{ghadimi2013stochastic}
S.~Ghadimi and G.~Lan.
\newblock Stochastic first-and zeroth-order methods for nonconvex stochastic programming.
\newblock \emph{SIAM Journal on Optimization}, 23\penalty0 (4):\penalty0 2341--2368, 2013.

\bibitem[Hazan and Kakade(2019)]{hazan2019revisiting}
E.~Hazan and S.~Kakade.
\newblock Revisiting the polyak step size.
\newblock \emph{arXiv preprint arXiv:1905.00313}, 2019.

\bibitem[Howard et~al.(2021)Howard, Ramdas, McAuliffe, and Sekhon]{howard2021time}
S.~R. Howard, A.~Ramdas, J.~McAuliffe, and J.~Sekhon.
\newblock Time-uniform, nonparametric, nonasymptotic confidence sequences.
\newblock \emph{The Annals of Statistics}, 49\penalty0 (2):\penalty0 1055--1080, 2021.

\bibitem[Ivgi et~al.(2023)Ivgi, Hinder, and Carmon]{Ivgi2023DoGIS}
M.~Ivgi, O.~Hinder, and Y.~Carmon.
\newblock Dog is sgd's best friend: A parameter-free dynamic step size schedule.
\newblock In \emph{International Conference on Machine Learning}, 2023.

\bibitem[Jun and Orabona(2019)]{jun2019parameter}
K.-S. Jun and F.~Orabona.
\newblock Parameter-free online convex optimization with sub-exponential noise.
\newblock In \emph{Conference on Learning Theory}, pages 1802--1823. PMLR, 2019.

\bibitem[Kavis et~al.(2019)Kavis, Levy, Bach, and Cevher]{kavis2019unixgrad}
A.~Kavis, K.~Y. Levy, F.~Bach, and V.~Cevher.
\newblock Unixgrad: A universal, adaptive algorithm with optimal guarantees for constrained optimization.
\newblock \emph{Advances in neural information processing systems}, 32, 2019.

\bibitem[Kavis et~al.(2022)Kavis, Levy, and Cevher]{kavis2022high}
A.~Kavis, K.~Y. Levy, and V.~Cevher.
\newblock High probability bounds for a class of nonconvex algorithms with adagrad stepsize.
\newblock In \emph{International Conference on Learning Representations}, 2022.

\bibitem[Khaled and Jin(2024)]{khaled2024tuning}
A.~Khaled and C.~Jin.
\newblock Tuning-free stochastic optimization.
\newblock \emph{arXiv preprint arXiv:2402.07793}, 2024.

\bibitem[Kingma and Ba(2015)]{kingma2014adam}
D.~P. Kingma and J.~Ba.
\newblock Adam: A method for stochastic optimization.
\newblock In \emph{International Conference on Learning Representations}, 2015.

\bibitem[Lan(2012)]{lan2012optimal}
G.~Lan.
\newblock An optimal method for stochastic composite optimization.
\newblock \emph{Mathematical Programming}, 133\penalty0 (1):\penalty0 365--397, 2012.

\bibitem[Li and Orabona(2019)]{li2019convergence}
X.~Li and F.~Orabona.
\newblock On the convergence of stochastic gradient descent with adaptive stepsizes.
\newblock In \emph{The 22nd International Conference on Artificial Intelligence and Statistics}, pages 983--992. PMLR, 2019.

\bibitem[Li and Orabona(2020)]{li2020high}
X.~Li and F.~Orabona.
\newblock A high probability analysis of adaptive sgd with momentum.
\newblock In \emph{Workshop on Beyond First Order Methods in ML Systems at ICML'20}, 2020.

\bibitem[Liu et~al.(2023)Liu, Nguyen, Nguyen, Ene, and Nguyen]{liu2023high}
Z.~Liu, T.~D. Nguyen, T.~H. Nguyen, A.~Ene, and H.~L. Nguyen.
\newblock High probability convergence of stochastic gradient methods.
\newblock \emph{arXiv preprint arXiv:2302.14843}, 2023.

\bibitem[Luo and Schapire(2015)]{luo2015achieving}
H.~Luo and R.~E. Schapire.
\newblock Achieving all with no parameters: Adanormalhedge.
\newblock In \emph{Conference on Learning Theory}, pages 1286--1304. PMLR, 2015.

\bibitem[McMahan and Orabona(2014)]{mcmahan2014unconstrained}
H.~B. McMahan and F.~Orabona.
\newblock Unconstrained online linear learning in hilbert spaces: Minimax algorithms and normal approximations.
\newblock In \emph{Conference on Learning Theory}, pages 1020--1039. PMLR, 2014.

\bibitem[Mhammedi and Koolen(2020)]{mhammedi2020lipschitz}
Z.~Mhammedi and W.~M. Koolen.
\newblock Lipschitz and comparator-norm adaptivity in online learning.
\newblock In \emph{Conference on Learning Theory}, pages 2858--2887. PMLR, 2020.

\bibitem[Mishchenko and Defazio(2023)]{mishchenko2023prodigy}
K.~Mishchenko and A.~Defazio.
\newblock Prodigy: An expeditiously adaptive parameter-free learner.
\newblock \emph{arXiv preprint arXiv:2306.06101}, 2023.

\bibitem[Nemirovskij and Yudin(1983)]{nemirovskij1983problem}
A.~S. Nemirovskij and D.~B. Yudin.
\newblock \emph{Problem complexity and method efficiency in optimization}.
\newblock Wiley-Interscience, 1983.

\bibitem[Nesterov(2015)]{nesterov2015universal}
Y.~Nesterov.
\newblock Universal gradient methods for convex optimization problems.
\newblock \emph{Mathematical Programming}, 152\penalty0 (1-2):\penalty0 381--404, 2015.

\bibitem[Orabona and P{\'a}l(2016)]{orabona2016coin}
F.~Orabona and D.~P{\'a}l.
\newblock Coin betting and parameter-free online learning.
\newblock \emph{Advances in Neural Information Processing Systems}, 29, 2016.

\bibitem[Orabona and P{\'a}l(2018)]{orabona2018scale}
F.~Orabona and D.~P{\'a}l.
\newblock Scale-free online learning.
\newblock \emph{Theoretical Computer Science}, 716:\penalty0 50--69, 2018.

\bibitem[Orabona and P{\'a}l(2021)]{orabona2021parameter}
F.~Orabona and D.~P{\'a}l.
\newblock Parameter-free stochastic optimization of variationally coherent functions.
\newblock \emph{arXiv preprint arXiv:2102.00236}, 2021.

\bibitem[Orabona and Tommasi(2017)]{orabona2017training}
F.~Orabona and T.~Tommasi.
\newblock Training deep networks without learning rates through coin betting.
\newblock \emph{Advances in Neural Information Processing Systems}, 30, 2017.

\bibitem[Reddi et~al.(2018)Reddi, Kale, and Kumar]{reddi2018convergence}
S.~J. Reddi, S.~Kale, and S.~Kumar.
\newblock On the convergence of adam and beyond.
\newblock In \emph{International Conference on Learning Representations}, 2018.

\bibitem[Robbins and Monro(1951)]{robbins1951stochastic}
H.~Robbins and S.~Monro.
\newblock A stochastic approximation method.
\newblock \emph{The annals of mathematical statistics}, pages 400--407, 1951.

\bibitem[Schaul et~al.(2013)Schaul, Zhang, and LeCun]{schaul2013no}
T.~Schaul, S.~Zhang, and Y.~LeCun.
\newblock No more pesky learning rates.
\newblock In \emph{International conference on machine learning}, pages 343--351. PMLR, 2013.

\bibitem[Streeter and McMahan(2012)]{Streeter2012NoRegretAF}
M.~J. Streeter and H.~B. McMahan.
\newblock No-regret algorithms for unconstrained online convex optimization.
\newblock In \emph{Neural Information Processing Systems}, 2012.

\bibitem[Tran et~al.(2019)]{tran2019convergence}
P.~T. Tran et~al.
\newblock On the convergence proof of amsgrad and a new version.
\newblock \emph{IEEE Access}, 7:\penalty0 61706--61716, 2019.

\bibitem[Ward et~al.(2019)Ward, Wu, and Bottou]{ward2019adagrad}
R.~Ward, X.~Wu, and L.~Bottou.
\newblock Adagrad stepsizes: Sharp convergence over nonconvex landscapes.
\newblock In \emph{International Conference on Machine Learning}, pages 6677--6686. PMLR, 2019.

\end{thebibliography}
\appendix

\section{Proofs of \texorpdfstring{\cref{sec:non-convex}}{Section 3}}\label{sec:proofs-non-convex}

\subsection{Proof of \texorpdfstring{\cref{lem:sgd-convergence}}{Lemma 1}}
In order to prove the lemma we use the following martingale concentration inequality due to \citet{li2020high}.
\begin{lemma}[Lemma 1 of \citet{li2020high}]
	\label{lem:sub_gaussian}
	Assume that $Z_1, Z_2, ..., Z_T$ is a martingale difference sequence with respect to $\xi_1, \xi_2, ..., \xi_T$ and $\E_t \left[\exp(Z_t^2/\sigma_t^2)\right] \leq \exp(1)$ for all $t$, where $\sigma_1,\ldots,\sigma_T$ is a sequence of random variables such that $\sigma_t$ is measurable with respect to $\xi_1, \xi_2, \dots, \xi_{t-1}$. 
	Then, for any fixed $\lambda > 0$ and $\delta \in (0,1)$, with probability at least $1-\delta$, it holds that
	\[
	\sum_{t=1}^T Z_t \leq \frac{3}{4} \lambda \sum_{t=1}^T \sigma_t^2 + \frac{1}{\lambda} \log \frac{1}{\delta}~.
	\]
\end{lemma}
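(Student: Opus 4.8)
The statement is a sub-Gaussian martingale tail bound, and I would prove it by the exponential-supermartingale (Cram\'er--Chernoff) method; the only step that is not entirely routine is converting the stated exponential second-moment hypothesis into a conditional moment-generating-function (MGF) estimate.

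\textbf{Step 1: conditional MGF bound.} I would first show that for every $t$ and every real $\mu$,
\[
\E_t\brk[s]*{\exp(\mu Z_t)} \le \exp\brk*{\tfrac34 \mu^2 \sigma_t^2},
\]
where $\E_t[\cdot] = \E[\cdot \mid \xi_1,\dots,\xi_{t-1}]$. Since $\sigma_t$ is $\xi_{1:t-1}$-measurable it acts as a constant under $\E_t$, and the two facts available are $\E_t[Z_t]=0$ (martingale difference) and $\E_t[\exp(Z_t^2/\sigma_t^2)] \le e$. I would argue by cases on the size of $\mu^2\sigma_t^2$. When $\mu^2\sigma_t^2 \ge 2$, Young's inequality gives $\mu Z_t \le \tfrac12\mu^2\sigma_t^2 + \tfrac{1}{2\sigma_t^2}Z_t^2$, hence $\E_t[\exp(\mu Z_t)] \le \exp\brk*{\tfrac12\mu^2\sigma_t^2}\,\E_t\brk[s]*{\exp\brk*{Z_t^2/(2\sigma_t^2)}} \le \exp\brk*{\tfrac12\mu^2\sigma_t^2}\sqrt{\E_t\brk[s]*{\exp(Z_t^2/\sigma_t^2)}} \le \exp\brk*{\tfrac12\mu^2\sigma_t^2 + \tfrac12} \le \exp\brk*{\tfrac34\mu^2\sigma_t^2}$, using Jensen for the square-root step. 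When $\mu^2\sigma_t^2 < 2$, I would expand $\exp(\mu Z_t)$, discard the linear term using $\E_t[Z_t]=0$, and bound the even moments via $\E_t[Z_t^{2m}] \le e\,m!\,\sigma_t^{2m}$ (which drop out term by term from $\E_t[\exp(Z_t^2/\sigma_t^2)] \le e$) and the odd moments by Cauchy--Schwarz, so that the tail of the resulting series is controlled by $\tfrac34\mu^2\sigma_t^2$ in this regime. Tracking the constants so the two cases meet exactly at the coefficient $\tfrac34$ is the delicate part; this value is chosen precisely so that the term cancels in the next step.

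\textbf{Step 2: supermartingale and Markov.} Fix $\lambda > 0$, put $S_t = \sum_{s=1}^t Z_s$ and $V_t = \sum_{s=1}^t \sigma_s^2$, and define $M_t = \exp\brk*{\lambda S_t - \tfrac34\lambda^2 V_t}$ with $M_0 = 1$. Since $\sigma_t$ is $\xi_{1:t-1}$-measurable, Step 1 applied with $\mu = \lambda$ gives $\E_t[M_t] = M_{t-1}\exp\brk*{-\tfrac34\lambda^2\sigma_t^2}\E_t[\exp(\lambda Z_t)] \le M_{t-1}$, so $(M_t)_{t=0}^T$ is a nonnegative supermartingale with $\E[M_T] \le \E[M_0] = 1$. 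Markov's inequality then gives $\Pr[M_T \ge 1/\delta] \le \delta\,\E[M_T] \le \delta$, and on the complementary event $M_T < 1/\delta$, i.e.\ $\lambda S_T - \tfrac34\lambda^2 V_T < \log(1/\delta)$; dividing by $\lambda$ and rearranging yields $\sum_{t=1}^T Z_t < \tfrac34\lambda\sum_{t=1}^T\sigma_t^2 + \tfrac1\lambda\log(1/\delta)$, which is the claimed bound.

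\textbf{Main obstacle and a remark.} Everything past Step 1 is the textbook Cram\'er--Chernoff argument for martingales; the real work is Step 1, namely extracting a clean sub-Gaussian MGF bound with the explicit constant $\tfrac34$ from the single exponential-moment condition. If one only needs \emph{some} absolute constant $c$ in place of $\tfrac34$, the large-$\mu$ case alone suffices and the small-$\mu$ case can be handled crudely, yielding the bound with $\tfrac34$ replaced by $c$ throughout (and $\tfrac34\lambda$ replaced by $c\lambda$ in the conclusion). Finally, one degenerate case should be dispensed with at the outset: if $\sigma_t = 0$ on some event, the hypothesis forces $Z_t = 0$ there, so both sides of the inequality are unaffected and we may assume $\sigma_t > 0$ throughout.
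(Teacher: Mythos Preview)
The paper does not prove this lemma; it is quoted with attribution to Li and Orabona (2020) and used as a black box, so there is no in-paper argument to compare against. Your overall strategy---establish the conditional MGF bound $\E_t[\exp(\mu Z_t)] \le \exp(\tfrac34\mu^2\sigma_t^2)$ in Step~1, then run the exponential-supermartingale plus Markov argument in Step~2---is the standard route (and is how the cited reference proceeds). Step~2 is entirely correct as written, as is the large-$\mu$ case of Step~1.

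The small-$\mu$ case of Step~1 has a gap as sketched. The termwise moment bound $\E_t[Z_t^{2m}] \le e\,m!\,\sigma_t^{2m}$ that you extract from the hypothesis is too weak already at $m=1$: it gives only $\E_t[Z_t^2] \le e\,\sigma_t^2$, so the $\mu^2$-coefficient in your power-series upper bound for $\E_t[e^{\mu Z_t}]$ is $e/2 \approx 1.36$, which exceeds the target $3/4$; the comparison with $\exp(\tfrac34\mu^2\sigma_t^2)$ therefore fails at second order, and no choice of the higher-order estimates can repair this. A sharper second-moment bound is available---Jensen's inequality applied to the hypothesis gives $e \ge \E_t[\exp(Z_t^2/\sigma_t^2)] \ge \exp(\E_t[Z_t^2]/\sigma_t^2)$, hence $\E_t[Z_t^2] \le \sigma_t^2$---and with this in hand the small case can be made to close, but not via the mechanism you outlined. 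You rightly flagged this as ``the delicate part''; the point is that the specific route you proposed does not reach the stated constant $\tfrac34$ (though, as you note, it does yield the bound with some absolute constant in its place).
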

\begin{proof}[Proof of \cref{lem:sgd-convergence}]%
    From smoothness,
    \begin{align*}
        f(w_{t+1})
        &\leq
        f(w_t)
        - \eta \nabla f(w_t) \cdot g_t
        + \frac{\smstar \eta^2}{2} \norm{g_t}^2
        .
    \end{align*}
    Summing for $t=1,\ldots,T$ and rearranging,
    \begin{align*}
        \sum_{t=1}^T \nabla f(w_t) \cdot g_t
        &\leq
        \frac{f(w_1)-f(w_{T+1})}{\eta}
        + \frac{\smstar \eta}{2} \sum_{t=1}^T \norm{g_t}^2
        .
    \end{align*}
    By \cref{lem:sub_gaussian} with $Z_t=\nabla f(w_t) \cdot (\nabla f(w_t)-g_t)$, where $\abs{Z_t} \leq \sigmastar \norm{\nabla f(w_t)}$, and $\lambda=\tfrac{1}{3 \sigmastar^2}$, with probability at least $1-\delta$,
    \begin{align*}
        \sum_{t=1}^T \nabla f(w_t) \cdot (\nabla f(w_t)-g_t)
        &
        \leq
        \frac{1}{4} \sum_{t=1}^T \norm{\nabla f(w_t)}^2
        + 3 \sigmastar^2 \log \tfrac{1}{\delta}
        .
    \end{align*}
    Summing both inequalities and rearranging,
    \begin{align*}
        \frac34 \sum_{t=1}^T \norm{\nabla f(w_t)}^2
        &\leq
        \frac{f(w_1)-f(w_{T+1})}{\eta}
        + \frac{\smstar \eta}{2} \sum_{t=1}^T \norm{g_t}^2
        + 3 \sigmastar^2 \log \tfrac{1}{\delta}
        .
    \end{align*}
    Since $\norm{g_t}^2 \leq 2 \norm{\nabla f(w_t)}^2 + 2 \sigmastar^2$,
    \begin{align*}
        \frac34 \sum_{t=1}^T \norm{\nabla f(w_t)}^2
        &\leq
        \frac{f(w_1)-f(w_{T+1})}{\eta}
        + \smstar \eta \sum_{t=1}^T \norm{\nabla f(w_t)}^2
        + \smstar \sigmastar^2 T \eta
        + 3 \sigmastar^2 \log \tfrac{1}{\delta}
        .
    \end{align*}
    As $\eta \leq \frac{1}{2 \smstar}$, substituting and rearranging,
    \begin{align*}
        \sum_{t=1}^T \norm{\nabla f(w_t)}^2
        &\leq
        \frac{4\brk{f(w_1)-f(w_{T+1})}}{\eta}
        + 4 \smstar \sigmastar^2 T \eta
        + 12 \sigmastar^2 \log \tfrac{1}{\delta}
        .
    \end{align*}
    Dividing by $\frac1T$ and replacing $f(w_{T+1}) \geq f^\star$ we conclude the proof.
\end{proof}

\subsection{Proof of \texorpdfstring{\cref{lem:minimal-norm-candidate}}{Lemma 2}}

In order to prove \cref{lem:minimal-norm-candidate} we use the following martingale lemma.
\begin{lemma}[Lemma 2.3 of \citet{ghadimi2013stochastic}]\label{lem:mds-sub-gaussian-bound}
    Let $Z_1,\ldots,Z_n \in \reals^d$ be a martingale difference sequence with respect to $\xi_1,\ldots,\xi_n$.
    Assuming $\E[\exp(\norm{Z_i}^2/\sigma^2) \mid \xi_1,\ldots,\xi_{i-1}] \leq \exp(1)$, for any $\lambda > 0$,
    \begin{align*}
        \Pr\brk*{\norm*{\sum_{i=1}^n Z_i} \geq \sqrt{2}(1+\lambda)\sigma \sqrt{n}}
        &\leq
        \exp(-\lambda^2/3)
        .
    \end{align*}
\end{lemma}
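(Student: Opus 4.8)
The plan is to prove a dimension-free exponential tail bound for the martingale sum $S_n\eqdef\sum_{i=1}^n Z_i$ by controlling the exponential moment $\E\brk[s]*{\exp(\beta\norm{S_n}^2)}$ for a well-chosen small $\beta>0$ and then invoking Markov's inequality; the role of the conditional light-tail hypothesis $\E\brk[s]*{\exp(\norm{Z_i}^2/\sigma^2)\mid\xi_1,\dots,\xi_{i-1}}\le\exp(1)$ is exactly to let the one-dimensional sub-Gaussian machinery go through in $\reals^d$ with constants that do not depend on $d$. Writing $\E_i[\cdot]$ for the conditional expectation given $\xi_1,\dots,\xi_{i-1}$, I would first record two elementary consequences of the hypothesis: $(i)$ since $x\mapsto x^\rho$ is concave for $\rho\in[0,1]$, Jensen's inequality gives $\E_i[\exp(\rho\norm{Z_i}^2/\sigma^2)]\le\exp(\rho)$; and $(ii)$ for every $\xi_1,\dots,\xi_{i-1}$-measurable unit vector $u$, the scalar $X\eqdef\langle u,Z_i\rangle$ is conditionally mean-zero and satisfies $\abs{X}\le\norm{Z_i}$, hence $\E_i[\exp(X^2/\sigma^2)]\le\exp(1)$, and a standard argument (e.g.\ via the elementary inequality $e^y\le y+e^{y^2}$ together with $(i)$) yields a conditional sub-Gaussian estimate $\E_i[\exp(sX)]\le\exp(c_0 s^2\sigma^2)$ with a universal constant $c_0$.

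The heart of the argument is a backward recursion on $\E\brk[s]*{\exp(\beta\norm{S_i}^2)}$. Expanding $\norm{S_i}^2=\norm{S_{i-1}}^2+2\langle S_{i-1},Z_i\rangle+\norm{Z_i}^2$ and applying the two observations above with the conditionally fixed direction $u=S_{i-1}/\norm{S_{i-1}}$, I would establish an inequality of the form
\[
    \E_i\brk[s]*{\exp(\beta\norm{S_i}^2)}\le\exp\bigl(\beta(1+c_1\beta\sigma^2)\norm{S_{i-1}}^2+\beta\sigma^2\bigr)
\]
for a universal constant $c_1>0$. Taking full expectations and iterating from $i=n$ down to $i=1$ — each step replacing the coefficient of $\norm{S_{i-1}}^2$ by $\beta(1+c_1\beta\sigma^2)$ and accruing an additive $\beta\sigma^2$ — and then choosing $\beta$ of order $1/(n\sigma^2)$, small enough that this running coefficient never exceeds $2\beta$ over all $n$ rounds, one obtains, using $S_0=0$, an estimate of the shape $\E\brk[s]*{\exp(\beta\norm{S_n}^2)}\le\exp(O(1))$ uniformly over such $\beta$.

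To finish, apply Markov's inequality to this exponential moment, optimizing the free scale $\beta$ over the interval on which it is finite; choosing the scale so that the deviation level is $\sqrt2(1+\lambda)\sigma\sqrt n$ produces a bound of the form $\exp(-\lambda^2/3)$ after an elementary comparison of exponents — and this last check is precisely what dictates the numerical constants ``$\sqrt2$'' and ``$1/3$'' in the statement (and, read backward, how small $c_1$ in the recursion must be). The main obstacle is the recursion step itself: the conditional variance proxy of the increment $2\langle S_{i-1},Z_i\rangle$ is of order $\norm{S_{i-1}}^2\sigma^2$, i.e.\ it involves the very quantity we are bounding, so a naive exponential supermartingale with a fixed $\beta$ does not close. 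The self-bounding/backward-recursion device above — taking $\beta\asymp1/(n\sigma^2)$ so that compounding the $\norm{S_{i-1}}^2$-coefficient over the $n$ rounds costs only a constant factor — resolves this; the remaining work is purely quantitative, namely making $c_1$ small enough, for which it is cleanest not to split the cross-term and the $\norm{Z_i}^2$-term by Cauchy--Schwarz but to bound $\E_i[\exp(2\beta\langle S_{i-1},Z_i\rangle+\beta\norm{Z_i}^2)]$ in one shot using the conditional sub-Gaussianity of $\langle S_{i-1}/\norm{S_{i-1}},Z_i\rangle$ from observation $(ii)$.
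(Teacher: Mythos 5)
The paper does not actually prove this lemma; it is imported verbatim from \citet{ghadimi2013stochastic} (Lemma~2.3 there), so there is no internal proof for me to compare you against. With that caveat, here is an assessment on the merits.

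Your structural plan --- controlling $\E\brk[s]*{\exp\bigl(\beta\norm{S_n}^2\bigr)}$ via a backward conditional recursion, with $\beta \asymp 1/(n\sigma^2)$ chosen so the running coefficient compounds only by a constant factor before applying Markov --- is the standard and correct way to establish this kind of dimension-free, vector-valued, conditionally sub-Gaussian martingale bound, and it is essentially the route taken in the source. You have also correctly diagnosed the one genuinely delicate point, namely that the conditional variance proxy of the cross-term $2\langle S_{i-1},Z_i\rangle$ scales with $\norm{S_{i-1}}^2$, so a fixed-$\beta$ exponential supermartingale cannot close by itself; your self-bounding device is the right fix.

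The weak spots are quantitative, and they are not as automatic as ``the remaining work is purely quantitative'' suggests. \emph{First}, your step (ii): the inequality $e^y\le y+e^{y^2}$ combined with Jensen only yields $\E_i\brk[s]*{\exp(sX)}\le\exp(s^2\sigma^2)$ in the regime $\abs{s}\sigma\le1$. In the recursion $s=2p\beta\norm{S_{i-1}}$ and $\norm{S_{i-1}}$ is unbounded, so you also need the regime $\abs{s}\sigma>1$; there one must switch to a different elementary estimate (e.g.\ $\abs{s}\abs{X}\le\tfrac12 s^2\sigma^2+\tfrac12 X^2/\sigma^2$ followed by Cauchy--Schwarz), which also gives $c_0=1$, but you should say so because the one-line parenthetical does not cover it. \emph{Second}, the constants are genuinely tight. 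Tracking through the Hölder split of $\E_i\brk[s]*{\exp(2\beta\langle S_{i-1},Z_i\rangle+\beta\norm{Z_i}^2)}$ with $p\downarrow1$ gives a per-step coefficient inflation $\beta\mapsto\beta(1+4c_0\beta\sigma^2)$ and an additive cost $\beta\sigma^2$; with $c_0=1$, picking $\beta$ at the edge of the ``running coefficient stays bounded'' window and then optimizing the remaining slack against the deviation level $2(1+\lambda)^2 n\sigma^2$ is what makes $\exp(-\lambda^2/3)$ come out --- it does work, but only after the extra optimization, and the naive ``$\beta$ small enough that the coefficient doubles at most'' budget does \emph{not} suffice for large $\lambda$. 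This is exactly the kind of check the proposal defers, and it is the part a referee would insist on seeing.

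In short: the approach is correct and matches the (externally cited) source in spirit; the gaps are (a) the large-$s$ case of the conditional sub-Gaussian MGF bound, which is glossed over, and (b) the constant-chasing, which is more fragile than the sketch admits and requires an extra optimization over the scale parameter to actually land on $\sqrt2(1+\lambda)$ and $1/3$.
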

\begin{proof}[Proof of \cref{lem:minimal-norm-candidate}]
Denote $\bar{g}(w) \eqdef \frac{1}{T} \sum_{t=1}^T \goracle_t(w)$. Let $k = \argmin_{n \in [N]} \norm{\nabla f(w_n)}$. Using $\norm{a+b}^2 \leq 2 \norm{a}^2 + 2 \norm{b}^2$ and the minimality of $\wout$,
\begin{align*}
    \norm{\nabla f(\wout)}^2
    &\leq
    2\norm{\bar{g}(\wout)}^2
    + 2 \norm{g(\wout)-\nabla f(\wout)}^2
    \leq
    2\norm{\bar{g}(w_k)}^2
    + 2 \norm{\bar{g}(\wout)-\nabla f(\wout)}^2
    \\
    &\leq
    4 \norm{\nabla f(w_k)}^2
    + 4 \norm{\bar{g}(w_k) - \nabla f(w_k)}^2
    + 2 \norm{\bar{g}(\wout)-\nabla f(\wout)}^2
    .
\end{align*}
Let $n \in [N]$.
By \cref{lem:mds-sub-gaussian-bound} (note that the bounded noise assumption satisfy the sub-Gaussian condition), for any $\lambda > 0$,
\begin{align*}
    \Pr\brk2{\norm{\bar{g}(w_n)-\nabla f(w_n)}^2 \geq \frac{2(1+\lambda)^2 \sigma^2}{T}}
    &=
    \Pr\brk2{\norm{\sum_{t=1}^T (\goracle_t(w_n)- \nabla f(w_n))} \geq \sqrt{2}(1+\lambda) \sigma \sqrt{T}}
    \leq
    e^{-\ifrac{\lambda^2}{3}}
    .
\end{align*}
Thus, under a union bound,
\begin{align*}
    \Pr\brk*{\exists n \in [N], \norm{\bar{g}(w_n)-\nabla f(w_n)}^2 \geq \frac{2(1+\lambda)^2 \sigma^2}{T}}
    &\leq
    S \exp(-\lambda^2/3)
    .
\end{align*}
Setting $\lambda=\sqrt{3 \log \frac{N}{\delta}}$, with probability at least $1-\delta$, for all $n \in [N]$,
\begin{align*}
    \norm{\bar{g}(w_n)-\nabla f(w_n)}^2
    &
    \leq \frac{4(1+\lambda^2)\sigma^2}{T}
    \leq \frac{4(1+3 \log \tfrac{N}{\delta})\sigma^2}{T}
    .
\end{align*}
Thus, with probability at least $1-\delta$.
\begin{align*}
    \norm{\nabla f(\wout)}^2
    &
    \leq 4 \norm{\nabla f(w_k)}^2
    + 4 \norm{\bar{g}(w_k) - \nabla f(w_k)}^2
    + 2 \norm{\bar{g}(\wout)-\nabla f(\wout)}^2
    \\&
    \leq 4 \norm{\nabla f(w_k)}^2
    + \frac{24 (1+3 \log \tfrac{N}{\delta}) \sigma^2}{T}
    .
    \qedhere
\end{align*}
\end{proof}

\section{Proofs of \texorpdfstring{\cref{sec:convex}}{Section 4}}\label{sec:convex-proofs}

\subsection{Proof of \texorpdfstring{\cref{lem:convex-sgd-expectation}}{Lemma 3}}

\begin{proof}[\unskip\nopunct]%
    By the update step of SGD,
    \begin{align*}
        \norm{w_{t+1}-w^\star}^2
        &= \norm{w_t-w^\star}^2
        - 2 \eta g_t \cdot (w_t-w^\star)
        + \eta^2 \norm{g_t}^2
        .
    \end{align*}
    Summing for $t=1,\ldots,T$ and rearranging,
    \begin{align*}
        \sum_{t=1}^T g_t \cdot (w_t-w^\star)
        &=
        \frac{\norm{w_1-w^\star}^2-\norm{w_{T+1}-w^\star}^2}{2 \eta} + \frac{\eta}{2} \sum_{t=1}^T \norm{g_t}^2
        .
    \end{align*}
    Note that by the Lipschitz and noise assumptions, $\E[\norm{g_t}^2]=\E[\norm{g_t-\nabla f(w_t)}^2 - 2 \nabla f(w_t) \cdot (\nabla f(w_t)-g_t) + \norm{\nabla f(w_t)}^2] \leq \lipstar^2 + \sigmastar^2$. Taking expectation,
    \begin{align*}
        \E\brk[s]*{\sum_{t=1}^T \nabla f(w_t) \cdot (w_t-w^\star)}
        &\leq
        \frac{\norm{w_1-w^\star}^2}{2 \eta} + \frac{\eta (\lipstar^2 + \sigmastar^2) T}{2}
        .
    \end{align*}
    We conclude by an application of convexity and Jensen's inequality.
\end{proof}
\subsection{Proof of \texorpdfstring{\cref{lem:minimal-function-candidate}}{Lemma 4}}
\begin{proof}[\unskip\nopunct]%
    Let $k = \argmin_{n \in [N]} f(w_n)$. By the minimality of $\wout$,
    \begin{align*}
        f(\wout)
        &=
        \frac{1}{T} \sum_{t=1}^T \foracle_t(\wout)
        + \brk*{f(\wout) - \frac{1}{T} \sum_{t=1}^T \foracle_t(\wout)}
        \leq
        \frac{1}{T} \sum_{t=1}^T \foracle_t(w_k)
        + \brk*{f(\wout) - \frac{1}{T} \sum_{t=1}^T \foracle_t(\wout)}
        \\
        &=
        f(w_k) + \brk*{\frac{1}{T} \sum_{t=1}^T \foracle_t(w_k)-f(w_k)}
        + \brk*{f(\wout) - \frac{1}{T} \sum_{t=1}^T \foracle_t(\wout)}
        .
    \end{align*}
    Let $n \in [N]$.
    By Hoeffding's inequality, for any $\epsilon > 0$,
    \begin{align*}
        \Pr\brk*{\abs*{f(w_n) - \frac{1}{T} \sum_{t=1}^T \foracle_t(w_n)} \geq \epsilon} \leq 2 \exp\brk*{-\frac{\epsilon^2 T}{2 \fsigma^2}}.
    \end{align*}
    Thus, under a union bound,
    \begin{align*}
        \Pr\brk*{\exists n \in [N] : \abs*{f(w_n) - \frac{1}{T} \sum_{t=1}^T \foracle_t(w_n)} \geq \epsilon} \leq 2 N \exp\brk*{-\frac{\epsilon^2 T}{2 \fsigma^2}}.
    \end{align*}
    Setting $\epsilon=\sqrt{\ifrac{2 \fsigma^2 \log \tfrac{2 N}{\delta}}{T}}$, with probability at least $1-\delta$, for all $n \in [N]$,
    \begin{align*}
        \abs*{f(w_n) - \frac{1}{T} \sum_{t=1}^T \foracle_t(w_n)} \leq \sqrt{\frac{2 \fsigma^2 \log \tfrac{2 N}{\delta}}{T}}
        .
    \end{align*}
    Thus, with probability at least $1-\delta$,
    \begin{align*}
        f(\wout)
        &\leq
        f(w_k) + \sqrt{\frac{8 \fsigma^2 \log \tfrac{2 N}{\delta}}{T}}
        .
        \qedhere
    \end{align*}
\end{proof}

\subsection{Proof of \texorpdfstring{\cref{lem:padasgd-convergence}}{Lemma 5}}

In order to prove the lemma we use the following standard inequality.

\begin{lemma}\label{lem:sum-one-over-sum-sqrt}
    Let $a_1,\ldots,a_n \geq 0$. Then
    \begin{align*}
        \sum_{i=1}^n \frac{a_i}{\sqrt{\sum_{j=1}^i a_j}} \leq 2 \sqrt{\sum_{i=1}^n a_i}.
        \tag{treating $\tfrac{0}{0} \eqdef 0$}
    \end{align*}
\end{lemma}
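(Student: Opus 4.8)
The plan is to reduce the claim to a telescoping sum after establishing a clean per-term bound. Introduce the partial sums $S_i \eqdef \sum_{j=1}^i a_j$ for $i \in \set{0,1,\dots,n}$, so that $S_0 = 0$, $S_i$ is nondecreasing, and $a_i = S_i - S_{i-1}$. The target inequality then reads $\sum_{i=1}^n (S_i - S_{i-1})/\sqrt{S_i} \leq 2\sqrt{S_n}$ (with the convention $0/0 = 0$).

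The key step is the elementary bound
\begin{align*}
    \frac{S_i - S_{i-1}}{\sqrt{S_i}}
    \;=\;
    \frac{(S_i - S_{i-1})(\sqrt{S_i} + \sqrt{S_{i-1}})}{\sqrt{S_i}(\sqrt{S_i} + \sqrt{S_{i-1}})}
    \;\leq\;
    \frac{2(S_i - S_{i-1})}{\sqrt{S_i} + \sqrt{S_{i-1}}}
    \;=\;
    2\brk*{\sqrt{S_i} - \sqrt{S_{i-1}}},
\end{align*}
where the inequality uses $\sqrt{S_{i-1}} \leq \sqrt{S_i}$ (so $\sqrt{S_i} + \sqrt{S_{i-1}} \leq 2\sqrt{S_i}$) and nonnegativity of $S_i - S_{i-1} = a_i$, and the last equality is the difference-of-squares factorization $S_i - S_{i-1} = (\sqrt{S_i} - \sqrt{S_{i-1}})(\sqrt{S_i} + \sqrt{S_{i-1}})$. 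Summing over $i = 1,\dots,n$ and telescoping gives $\sum_{i=1}^n a_i/\sqrt{S_i} \leq 2(\sqrt{S_n} - \sqrt{S_0}) = 2\sqrt{S_n}$, which is the claim.

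The only thing to be careful about is the degenerate case where $S_i = 0$ for some $i$ (equivalently $a_1 = \dots = a_i = 0$): then the $i$-th summand on the left is $0/0 = 0$ by convention, and the right-hand bound $2(\sqrt{S_i} - \sqrt{S_{i-1}}) = 0$ as well, so the per-term inequality and the telescoping both remain valid; when $a_i = 0$ but $S_i > 0$ the summand is genuinely $0$ and again $\sqrt{S_i} = \sqrt{S_{i-1}}$. I do not expect any real obstacle here — the argument is entirely self-contained and the edge-case bookkeeping is the only subtlety.
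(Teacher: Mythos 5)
Your proof is correct and takes essentially the same approach as the paper: both use the difference-of-squares factorization $S_i - S_{i-1} = (\sqrt{S_i}-\sqrt{S_{i-1}})(\sqrt{S_i}+\sqrt{S_{i-1}})$ together with $\sqrt{S_i}+\sqrt{S_{i-1}} \leq 2\sqrt{S_i}$ to get a telescoping bound. Your explicit treatment of the $S_i = 0$ edge case is a minor addition the paper leaves implicit.
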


\begin{proof}%
    Using the identity $a^2-b^2=(a-b)(a+b)$,
    \begin{align*}
        \sum_{i=1}^n \frac{a_i}{\sqrt{\sum_{j=1}^i a_j}}
        &=
        \sum_{i=1}^n \frac{\brk*{\sqrt{\sum_{j=1}^i a_j}-\sqrt{\sum_{j=1}^{i-1} a_j}}\brk*{\sqrt{\sum_{j=1}^i a_j}+\sqrt{\sum_{j=1}^{i-1} a_j}}}{\sqrt{\sum_{j=1}^i a_j}}
        \\
        &
        \leq
        2 \sum_{i=1}^n \brk*{\sqrt{\sum_{j=1}^i a_j}-\sqrt{\sum_{j=1}^{i-1} a_j}}
        = 2 \sqrt{\sum_{i=1}^n a_i}
        .
        \qedhere
    \end{align*}
\end{proof}

We proceed to prove the lemma.

\begin{proof}[Proof of \cref{lem:padasgd-convergence}]
    As $\event_1$ holds, $\max_{t \leq T} \norm{w_{t+1}-w_1} < \diam$ and $w_{t+1}=w_t-\eta_t g_t$ for all $t \in [T]$ (if a projection is performed the norm will equal $\diam$). Thus,
    \begin{align*}
        \norm{w_{t+1}-w^\star}^2
        &=
        \norm{w_t-w^\star}^2
        - 2 \eta_t g_t \cdot (w_t-w^\star)
        + \eta_t^2 \norm{g_t}^2
        .
    \end{align*}
    Rearranging and summing for $t=1,\ldots,T$,
    \begin{align*}
        \sum_{t=1}^T g_t \cdot (w_t-w^\star)
        &=
        \frac{\norm{w_1-w^\star}^2}{2 \eta_1}
        - \frac{\norm{w_{T+1}-w^\star}^2}{2 \eta_T}
        + \frac12 \sum_{t=2}^T \brk*{\frac{1}{\eta_t}-\frac{1}{\eta_{t-1}}} \norm{w_t-w^\star}^2
        + \frac12 \sum_{t=1}^T \eta_t \norm{g_t}^2
        .
    \end{align*}
    Let $t' \in \argmax_{t \leq T+1} \norm{w_t-w^\star}$.
    Hence,
    \begin{align*}
        \sum_{t=1}^T g_t \cdot (w_t-w^\star)
        &\leq
        \frac{\norm{w_1-w^\star}^2}{2 \eta_1}
        - \frac{\norm{w_{T+1}-w^\star}^2}{2 \eta_T}
        + \frac{\norm{w_{t'}-w^\star}^2}{2} \sum_{t=2}^T \brk*{\frac{1}{\eta_t}-\frac{1}{\eta_{t-1}}}
        + \frac12 \sum_{t=1}^T \eta_t \norm{g_t}^2
        \\
        &=
        \frac{\norm{w_1-w^\star}^2}{2 \eta_1}
        - \frac{\norm{w_{T+1}-w^\star}^2}{2 \eta_T}
        + \frac{\norm{w_{t'}-w^\star}^2}{2} \brk*{\frac{1}{\eta_T}-\frac{1}{\eta_{1}}}
        + \frac12 \sum_{t=1}^T \eta_t \norm{g_t}^2
        \\
        &\leq
        \frac{\norm{w_{t'}-w^\star}^2-\norm{w_{T+1}-w^\star}^2}{2 \eta_T}
        + \frac12 \sum_{t=1}^T \eta_t \norm{g_t}^2
        .
        \tag{$\norm{w_1-w^\star} \leq \norm{w_{t'}-w^\star}$}
    \end{align*}
    Note that by the triangle inequality,
    \begin{align*}
        \norm{w_{t'}-w^\star}^2 &- \norm{w_{T+1}-w^\star}^2
        =
        \brk{\norm{w_{t'}-w^\star}-\norm{w_{T+1}-w^\star}}\brk{\norm{w_{t'}-w^\star}+\norm{w_{T+1}-w^\star}}
        \\
        &\leq
        2 \norm{w_{t'}-w_{T+1}} \norm{w_{t'}-w^\star}
        \tag{triangle inequality and def of $t'$}
        \\
        &\leq
        2 \norm{w_{t'}-w_{T+1}} (\diam+\norm{w_1-w^\star})
        \tag{$\norm{w_{t'}-w^\star} \leq \norm{w_{t'}-w_1}+\norm{w_1-w^\star}$}
        \\
        &\leq
        4 \diam \brk{\diam+\norm{w_1-w^\star}}
        .
        \tag{$\norm{w_{t'}-w_{T+1}} \leq \norm{w_{t'}-w_1}+\norm{w_1-w_{T+1}} \leq 2 \diam$}
    \end{align*}
    Plugging the above inequality,
    \begin{align*}
        \sum_{t=1}^T g_t \cdot (w_t-w^\star)
        &\leq
        \frac{2 \diam \brk{\diam+\norm{w_1-w^\star}}}{\eta_T}
        + \frac12 \sum_{t=1}^T \eta_t \norm{g_t}^2
        .
    \end{align*}
    By \cref{lem:sum-one-over-sum-sqrt} with $a_i = \norm{g_i}^2$,
    \begin{align*}
        \sum_{t=1}^T \eta_t \norm{g_t}^2
        &\leq
        2 \initeta \diam \sqrt{\sum_{t=1}^T \norm{g_t}^2}.
    \end{align*}
    Combining the two inequalities with the inequality of $\event_2$,
    \begin{align*}
        \sum_{t=1}^T & \nabla f(w_t) \cdot (w_t-w^\star)
        \leq
        \frac{2 \diam \brk{\diam+\norm{w_1-w^\star}}}{\eta_T}
        + \initeta \diam \sqrt{\sum_{t=1}^T \norm{g_t}^2}
        \\&
        + 4 \brk{\diam+\norm{w_1-w^\star}} \sqrt{\theta_{T,\delta} \sum_{t=1}^T \norm{g_t}^2 + 4 \sigmaball^2 \theta_{t,\delta}^2}
        \\
        &\leq
        \brk*{\frac{2(\diam+\norm{w_1-w^\star})}{\initeta}+\initeta \diam +4 \brk{\diam + \norm{w_1-w^\star}}\sqrt{\theta_{T,\delta}}}\sqrt{\sum_{t=1}^T \norm{g_t}^2}
        \\&
        +\frac{2 \brk{\diam + \norm{w_1-w^\star}}\initg}{\initeta}
        + 8 \theta_{T,\delta} \brk{\diam + \norm{w_1-w^\star}} \sigmaball
        \tag{$\sqrt{a+b} \leq \sqrt{a}+\sqrt{b}$}
        \\
        &=
        \brk*{\brk{\norm{w_1-w^\star}+\diam}\brk{\tfrac{2}{\initeta}+4 \sqrt{\theta_{T,\delta}}} + \initeta \diam}\sqrt{\sum_{t=1}^T \norm{g_t}^2}
        + \brk*{\tfrac{2 \initg}{\initeta}
        + 8 \theta_{T,\delta} \sigmaball}\brk{\norm{w_1-w^\star}+\diam}
        .
    \end{align*}
    Dividing by $T$, we conclude by applying $\nabla f(w_t) \cdot (w_t-w^\star) \geq f(w_t)-f(w^\star)$ due to convexity.
\end{proof}

\subsection{Proof of \texorpdfstring{\cref{lem:padasgd-iterate-bound}}{Lemma 6}}

In order to prove lemma \cref{lem:padasgd-iterate-bound} we use the following lemmas. Their proofs will be presented subsequently.
The first lemma bounds the errors that arise from potential correlation between $\eta_t$ and $g_t$. To that end we use the following ``decorrelated'' stepsize \citep{attia2023sgd},
\begin{align*}
    \tilde{\eta}_t
    &\eqdef
    \frac{\initeta \diam}{\sqrt{\initg^2+\norm{\nabla f(w_t)}^2 + \sum_{s=1}^{t-1} \norm{g_s}^2}}
    .
\end{align*}
In the noiseless case we could simply replace the lemma with $\nabla f(w_s) \cdot (w_s-w^\star_\diam) \geq 0$ due to convexity.
\begin{lemma}\label{lem:padasgd-noisy-convex-inequality}
    With probability at least $1-\delta$, for any $t \in [T]$,
    \begin{align*}
        -\sum_{s=1}^t \eta_s g_s \cdot (w_s-w^\star_\diam)
        &\leq
        \frac{1}{\initeta} \sum_{s=1}^t \eta_s^2 \norm{g_s}^2
        + \frac{1}{\initeta} \sum_{s=1}^t \tilde{\eta}_s^2 \norm{g_s-\nabla f(w_s)}^2
        \\
        &+
        8 \diam \sqrt{\theta_{t,\delta} \sum_{s=1}^t \tilde{\eta}_s^2 \norm{\nabla f(w_s)-g_s}^2 + \frac{\initeta^2 \diam^2 \sigmaball^2 \theta_{t,\delta}^2}{\initg^2}}
        ,
    \end{align*}
    where $\theta_{t,\delta}=\log \tfrac{60 \log (6t)}{\delta}$.
\end{lemma}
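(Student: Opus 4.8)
The plan is to adapt the ``decorrelated stepsize'' argument of \citet{attia2023sgd}. Fix the $T$-step run of \adapsgd with ball radius $\diam\le 8\norm{w_1-w^\star}$, so that the noise bound $\sigmaball$ of \cref{eq:lip-sigma-ball} holds on the whole projection domain; write $\filtration_{s-1}$ for the $\sigma$-field generated by $g_1,\dots,g_{s-1}$, so that $w_s$, $\nabla f(w_s)$ and $\tilde{\eta}_s$ are $\filtration_{s-1}$-measurable whereas $\eta_s$ and $g_s$ are not, and observe that every iterate and $w^\star_\domain$ lie in the $\diam$-ball around $w_1$, whence $\norm{w_s-w^\star_\domain}\le 2\diam$ and $\norm{g_s-\nabla f(w_s)}\le\sigmaball$ for all $s$. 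The starting point is the per-step identity obtained by adding and subtracting $\tilde{\eta}_s$ and splitting $g_s=\nabla f(w_s)+(g_s-\nabla f(w_s))$:
\begin{align*}
    -\eta_s g_s\dotp(w_s-w^\star_\domain)
    &= \underbrace{-\tilde{\eta}_s\nabla f(w_s)\dotp(w_s-w^\star_\domain)}_{(a)}
     + \underbrace{-\tilde{\eta}_s(g_s-\nabla f(w_s))\dotp(w_s-w^\star_\domain)}_{(b)}
    \\ &\quad
     + \underbrace{(\tilde{\eta}_s-\eta_s)\nabla f(w_s)\dotp(w_s-w^\star_\domain)}_{(c)}
     + \underbrace{(\tilde{\eta}_s-\eta_s)(g_s-\nabla f(w_s))\dotp(w_s-w^\star_\domain)}_{(d)}.
\end{align*}

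I would dispatch terms $(a),(c),(d)$ deterministically. Term $(a)$ is nonpositive and discarded: $\tilde{\eta}_s\ge0$, and by convexity together with the minimality of $w^\star_\domain$ over the ball, $\nabla f(w_s)\dotp(w_s-w^\star_\domain)\ge f(w_s)-f(w^\star_\domain)\ge0$. For $(c)$ and $(d)$ the key tools are the algebraic identity $\tilde{\eta}_s^2-\eta_s^2=\tilde{\eta}_s^2\eta_s^2\bigl(\norm{g_s}^2-\norm{\nabla f(w_s)}^2\bigr)/(\initeta\diam)^2$, the a priori bounds $\norm{g_s}\le\initeta\diam/\eta_s$, $\norm{\nabla f(w_s)}\le\initeta\diam/\tilde{\eta}_s$, $\norm{w_s-w^\star_\domain}\le2\diam$, and the estimate $\bigl|\norm{g_s}^2-\norm{\nabla f(w_s)}^2\bigr|\le\norm{g_s-\nabla f(w_s)}\bigl(\norm{g_s}+\norm{\nabla f(w_s)}\bigr)$; crucially, since $\initg=\sqrt{5\sigmamax^2\log(T/\delta)}$ exceeds $\sigmaball$ by a constant factor, the noise bound forces $\norm{\nabla f(w_s)}^2\le 2\norm{g_s}^2+2\sigmaball^2\le 2\norm{g_s}^2+\tfrac15\initg^2$, and hence $\tilde{\eta}_s$ and $\eta_s$ agree up to an absolute constant. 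For $(c)$: if $\norm{g_s}\le\norm{\nabla f(w_s)}$ then $\tilde{\eta}_s\ge\eta_s$ and the same sign argument gives $(c)\le0$; otherwise $(c)\le(\tilde{\eta}_s-\eta_s)\norm{\nabla f(w_s)}\cdot 2\diam$ and the identities above collapse it into a constant multiple of $\tfrac1\initeta\eta_s^2\norm{g_s}^2$. For $(d)$ the identity yields $|\tilde{\eta}_s-\eta_s|\le 2\tilde{\eta}_s\eta_s\norm{g_s-\nabla f(w_s)}/(\initeta\diam)$, so $(d)$ is at most a constant multiple of $\tfrac1\initeta\tilde{\eta}_s^2\norm{g_s-\nabla f(w_s)}^2$. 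Summing over $s\le t$ produces the first two sums of the statement (a careful accounting of the constants makes them exactly $\tfrac1\initeta$).

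It remains to control the one random contribution, $\sum_{s\le t}(b)$. Each summand $-\tilde{\eta}_s(g_s-\nabla f(w_s))\dotp(w_s-w^\star_\domain)$ is a martingale difference relative to $\filtration_{s-1}$ (since $\tilde{\eta}_s,w_s,w^\star_\domain$ are predictable and $\E[g_s\mid\filtration_{s-1}]=\nabla f(w_s)$), it is bounded in magnitude by $2\diam\tilde{\eta}_s\sigmaball\le 2\initeta\diam^2\sigmaball/\initg$, and its square is at most $4\diam^2\tilde{\eta}_s^2\norm{g_s-\nabla f(w_s)}^2$. I would then invoke a time-uniform, variance-adaptive (empirical-Bernstein–type) martingale tail inequality — whose stitching over dyadic scales of the quadratic-variation process is exactly what produces the penalty $\theta_{t,\delta}=\log\frac{60\log(6t)}{\delta}$ — to get that, with probability at least $1-\delta$ simultaneously for all $t\in[T]$,
\[
    \sum_{s=1}^t(b)\;\le\; C\diam\sqrt{\theta_{t,\delta}\sum_{s=1}^t\tilde{\eta}_s^2\norm{g_s-\nabla f(w_s)}^2}\;+\;C\,\frac{\initeta\diam^2\sigmaball}{\initg}\,\theta_{t,\delta},
\]
for an absolute constant $C$. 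Folding the two terms into a single square root via $\sqrt x+\sqrt y\le\sqrt2\sqrt{x+y}$, and using that $\initeta^2\diam^2\sigmaball^2/\initg^2$ dominates $\sup_s\tilde{\eta}_s^2\norm{g_s-\nabla f(w_s)}^2$ (a legitimate uniform proxy for the squared per-step magnitude), reproduces, up to the absolute constants, the square-root term in the statement. Combining with the deterministic bounds on $(a),(c),(d)$ finishes the proof.

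The main obstacle is twofold. Conceptually, the martingale step must be performed with a \emph{time-uniform} bound that adapts to the \emph{realized} quadratic variation $\sum_s\tilde{\eta}_s^2\norm{g_s-\nabla f(w_s)}^2$ rather than to its conditional expectation — this is what ultimately makes the downstream convergence guarantee hold with high probability and be oblivious to the true noise scale $\sigmaball$, and it is the source of the doubly-logarithmic factor $\theta_{t,\delta}$ and of the ``for all $t$'' uniformity. Technically, the delicate part is the decorrelation bookkeeping for $(c)$ and $(d)$: one must keep track of which of $\eta_s,\tilde{\eta}_s$ is larger and, critically, exploit that $\initg$ is chosen on the order of $\sigmaball\sqrt{\log(T/\delta)}$ so that the observed and ``decorrelated'' stepsizes remain within a constant factor; without this, the decorrelation errors cannot be absorbed into the claimed $\eta_s^2\norm{g_s}^2$ and $\tilde{\eta}_s^2\norm{g_s-\nabla f(w_s)}^2$ sums, and the argument breaks down.
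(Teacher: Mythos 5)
Your plan follows the paper's strategy closely — the same ``add and subtract $\tilde{\eta}_s$'' decorrelation, the convexity argument to discard the predictable drift, and the empirical-Bernstein time-uniform martingale bound (the paper applies Corollary~2 of \citet{Ivgi2023DoGIS} with $X_s=\tilde\eta_s(\nabla f(w_s)-g_s)\dotp(w_s-w^\star_\domain)$, $\hat X_s=0$, $c=2\initeta\diam^2\sigmaball/\initg$). The difference is that you further split $g_s=\nabla f(w_s)+(g_s-\nabla f(w_s))$ inside the $(\tilde\eta_s-\eta_s)$ term, producing four pieces where the paper keeps three. This seemingly small choice loses the exact constant and forces you to invoke facts the paper doesn't need here.

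Concretely, the paper handles $(\tilde\eta_s-\eta_s)g_s\dotp(w_s-w^\star_\domain)$ as one unit: the algebraic identity gives $\abs{\tilde\eta_s-\eta_s}\le\frac{\eta_s\tilde\eta_s}{\initeta\diam}\norm{g_s-\nabla f(w_s)}$, and one application of $2ab\le a^2+b^2$ with $a=\eta_s\norm{g_s}$, $b=\tilde\eta_s\norm{g_s-\nabla f(w_s)}$ lands exactly on $\frac1\initeta\eta_s^2\norm{g_s}^2+\frac1\initeta\tilde\eta_s^2\norm{g_s-\nabla f(w_s)}^2$. In your split, your term $(c)$ alone, in the case $\norm{g_s}>\norm{\nabla f(w_s)}$, already exhausts this budget by the same AM--GM step, and then $(d)$ adds a strictly positive $\Theta\brk!{\frac1\initeta\tilde\eta_s\eta_s\norm{g_s-\nabla f(w_s)}^2}$ on top, so the claim that ``a careful accounting of the constants makes them exactly $\frac1\initeta$'' does not hold; you get at best roughly twice the stated bound. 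Relatedly, you state that $\norm{g_s}\le\norm{\nabla f(w_s)}$ implies $\tilde\eta_s\ge\eta_s$; the direction is reversed — the denominator of $\tilde\eta_s$ is then the larger one, so $\tilde\eta_s\le\eta_s$ (fortunately the sign of $(c)$ still comes out nonpositive, so the case analysis survives despite the flipped reasoning). Finally, you assert that the choice $\initg\gtrsim\sigmaball\sqrt{\log(T/\delta)}$ is ``crucial'' to absorb the decorrelation errors in this lemma. It is not: the paper's proof of this lemma never uses the relationship between $\initg$ and $\sigmaball$ (that comparison is only needed later, in the bound on $\sum_s\tilde\eta_s^2\norm{g_s-\nabla f(w_s)}^2$, i.e., \cref{lem:sum_tilde_eta_squared}); it becomes necessary in \emph{your} variant only because the cross-term $\eta_s\tilde\eta_s$ in $(d)$ needs to be converted to $\tilde\eta_s^2$. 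So the proposal would prove a weaker statement with larger constants and with an unnecessary dependence; to recover the lemma as written, don't split $g_s$ in the correlation term.
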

Following is a standard summation lemma of methods with AdaGrad-like stepsizes.
\begin{lemma}\label{lem:sum_eta_squared}
    For any $t \in [T]$,
    \begin{align*}
        \sum_{s=1}^t \eta_s^2 \norm{g_s}^2
        &\leq
        \initeta^2 \diam^2 \log \brk*{1+\frac{2 \lipball^2 T + 2 \sigmaball^2 T}{\initg^2}}
        .
    \end{align*}
\end{lemma}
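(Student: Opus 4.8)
\textbf{Proof proposal for \cref{lem:sum_eta_squared}.}

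The plan is to treat $\sum_{s=1}^t \eta_s^2 \norm{g_s}^2$ as a standard AdaGrad-type telescoping (potential) sum. Write $a_s \eqdef \norm{g_s}^2$ and $A_s \eqdef \initg^2 + \sum_{r=1}^s a_r$, so that $A_0 = \initg^2$ and, by definition of the \adapsgd\ stepsize,
\begin{align*}
    \sum_{s=1}^t \eta_s^2 \norm{g_s}^2
    = \initeta^2 \diam^2 \sum_{s=1}^t \frac{a_s}{A_s}
    = \initeta^2 \diam^2 \sum_{s=1}^t \frac{A_s - A_{s-1}}{A_s}
    .
\end{align*}
First I would apply the elementary inequality $1 - x \le -\log x$, valid for $x > 0$, with $x = A_{s-1}/A_s \in (0,1]$; this yields $\frac{A_s - A_{s-1}}{A_s} = 1 - \frac{A_{s-1}}{A_s} \le \log\frac{A_s}{A_{s-1}}$. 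Summing over $s = 1,\dots,t$ the right-hand side telescopes to $\log\frac{A_t}{A_0} = \log\brk*{1 + \initg^{-2}\sum_{r=1}^t \norm{g_r}^2}$.

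It then remains to bound $\sum_{r=1}^t \norm{g_r}^2$. Since the lemma is applied in the regime $\diam \le 8\norm{w_1 - w^\star}$, the projection ball of radius $\diam$ around $w_1$ is contained in the ball of radius $8\norm{w_1-w^\star}$, so the bounds of \cref{eq:lip-sigma-ball} hold at every iterate $w_s$: $\norm{\nabla f(w_s)} \le \lipball$ and $\norm{g_s - \nabla f(w_s)} \le \sigmaball$. Hence, via $\norm{a+b}^2 \le 2\norm{a}^2 + 2\norm{b}^2$, we get $\norm{g_s}^2 \le 2\lipball^2 + 2\sigmaball^2$, and summing over $r \le t \le T$ gives $\sum_{r=1}^t \norm{g_r}^2 \le 2\lipball^2 T + 2\sigmaball^2 T$. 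Since $\log$ is increasing, combining with the previous display yields $\sum_{s=1}^t \eta_s^2 \norm{g_s}^2 \le \initeta^2 \diam^2 \log\brk*{1 + \frac{2\lipball^2 T + 2\sigmaball^2 T}{\initg^2}}$, as claimed.

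There is essentially no hard step here — this is a routine adaptive-stepsize potential argument. The only points requiring care are (i) invoking the correct uniform control on the gradients and noise, i.e.\ that the regime $\diam \le 8\norm{w_1-w^\star}$ makes $\lipball,\sigmaball$ of \cref{eq:lip-sigma-ball} valid throughout the projection domain, and hence at each $w_s$; and (ii) the degenerate case where some $\norm{g_s} = 0$ (with the convention $\tfrac{0}{0}\eqdef 0$ for that term), which is handled automatically since then $A_s = A_{s-1}$ and the corresponding summand vanishes on both sides of the inequality $\frac{A_s-A_{s-1}}{A_s}\le\log\frac{A_s}{A_{s-1}}$.
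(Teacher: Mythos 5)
Your proof is correct and follows essentially the same route as the paper, which simply isolates the telescoping step $\sum_{s} a_s/A_s \le \log(A_t/A_0)$ as its separate \cref{lem:sum-one-over-sum} and then applies the pointwise bound $\norm{g_s}^2 \le 2\lipball^2 + 2\sigmaball^2$ exactly as you do. One small caveat: with the paper's base-$2$ convention for $\log$, the inequality $1-x\le -\log x$ is \emph{not} valid for all $x>0$ (it fails on $(1,2)$, e.g.\ at $x=1.5$); it does hold for $x\in(0,1]$, which is the case here since $x = A_{s-1}/A_s \le 1$, so your application is fine --- the paper avoids the issue by first using $1-x\le\ln(1/x)$ and then $\ln y\le\log_2 y$ for $y\ge 1$.
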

The next lemma is similar to \cref{lem:sum_eta_squared}, while using decorrelated stepsizes. Here, knowledge of the noise bound is required to achieve a logarithmic summation.
\begin{lemma}\label{lem:sum_tilde_eta_squared}
    Assuming $\initg^2 \geq 5 \sigmaball^2 \log \tfrac{T}{\delta}$, with probability at least $1-\delta$, for any $t \in [T]$,
    \begin{align*}
        \sum_{s=1}^t \tilde{\eta}_s^2 \norm{g_s-\nabla f(w_s)}^2
        &\leq
        \initeta^2 \diam^2 \log (1+T)
        .
    \end{align*}
\end{lemma}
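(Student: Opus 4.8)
The plan is to divide the claimed bound by $\initeta^2\diam^2$ and establish the scalar estimate $\sum_{s=1}^{T} \xi_s/D_s \le \log(1+T)$ with probability at least $1-\delta$, where $\xi_s \eqdef \norm{g_s - \nabla f(w_s)}^2$ and $D_s \eqdef \initg^2 + \norm{\nabla f(w_s)}^2 + \sum_{r<s}\norm{g_r}^2$; since each summand is nonnegative, the left side is nondecreasing in the truncation point, so the statement for all $t\in[T]$ is then automatic. The two facts driving the argument are that $w_s$ is $\filtration_{s-1}$-measurable — so $D_s$ is predictable — while $0\le\xi_s\le\sigmaball^2$ almost surely with $\Ex{\xi_s\mid\filtration_{s-1}} = \sigma_s^2 \le \sigmaball^2$.

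First I would split $\sum_s \xi_s/D_s = \sum_s \sigma_s^2/D_s + \sum_s (\xi_s-\sigma_s^2)/D_s$ and treat the two pieces separately. For the predictable sum, the point is that the accumulated denominator dominates the accumulated conditional noise variance: I would show, with high probability and uniformly in $s$, that $D_s \gtrsim \initg^2 + \sum_{r<s}\sigma_r^2$, by expanding $\norm{g_r}^2 = \norm{\nabla f(w_r)}^2 + 2\nabla f(w_r)\dotp(g_r-\nabla f(w_r)) + \xi_r$, lower-bounding $\sum_{r<s}\xi_r$ through a Freedman-type inequality with the self-bounding variance proxy $\sum_{r<s}\sigmaball^2\sigma_r^2$, and absorbing the cross-term martingale into the $\sum_{r<s}\norm{\nabla f(w_r)}^2$ terms that the expansion exposes; all residuals come out on the $\sigmaball^2\log\tfrac{T}{\delta}$ scale, which is exactly why the hypothesis $\initg^2 \ge 5\sigmaball^2\log\tfrac{T}{\delta}$ (equivalently, the algorithm's choice $\initg=\sqrt{5\sigmamax^2\log\tfrac{T}{\delta}}$) is imposed. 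Granted this, $\sum_s\sigma_s^2/D_s \lesssim \sum_s \sigma_s^2/(\initg^2+\sum_{r<s}\sigma_r^2)$, and since $\sigma_s^2\le\sigmaball^2\le\initg^2$ a standard telescoping inequality analogous to \cref{lem:sum-one-over-sum-sqrt} bounds this by $O\brk*{\log\brk*{1+\sum_s\sigma_s^2/\initg^2}} = O(\log(1+T))$, using $\sum_s\sigma_s^2 \le T\sigmaball^2$. For the martingale sum, the increments are bounded by $\sigmaball^2/\initg^2 \le 1/(5\log\tfrac{T}{\delta})$ and their conditional variances sum to at most $\tfrac{1}{5\log(T/\delta)}\sum_s\sigma_s^2/D_s$ — self-bounded by the predictable sum — so a Bernstein/Freedman bound makes this piece $O\brk*{\sqrt{\log(1+T)}+1}$, of lower order. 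Collecting constants (the coefficient $5$ being calibrated so the residuals fit under $\log(1+T)$) and multiplying back by $\initeta^2\diam^2$ finishes. The structure here — taming a noise sum with a decorrelated, $g_s$-independent stepsize using only a noise bound — mirrors the technical lemmas of \citet{attia2023sgd}, which I would invoke or adapt for the most delicate step.

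The main obstacle is precisely the uniform lower bound on $D_s$: one must relate $\sum_{r<s}\norm{g_r}^2$ to $\sum_{r<s}\sigma_r^2$ while keeping every error term on the $\sigmaball^2\log\tfrac{T}{\delta}$ scale and never the $\lipball^2$ scale. A careless concentration bound on $\norm{g_r}^2$ (bounded a.s.\ only by $2\lipball^2+2\sigmaball^2$) or on the cross-term martingale (increments bounded by $2\lipball\sigmaball$) would reintroduce a $\lipball$ dependence and make the claimed logarithmic bound false — which is why the exact-gradient terms $\norm{\nabla f(w_r)}^2$ are kept explicit (so they both sit in the denominator and act as variance proxies) and why the algorithm needs knowledge of the noise bound $\sigmamax$ at all.
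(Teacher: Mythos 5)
Your plan is correct in its central insight---lower-bound the \adapsgd\ denominator $D_s=\initg^2+\norm{\nabla f(w_s)}^2+\sum_{r<s}\norm{g_r}^2$ by expanding $\norm{g_r}^2$ and absorbing the cross-term martingale into the exposed $\norm{\nabla f(w_r)}^2$ terms so no $\lipball$-dependence leaks in---but it takes a detour the paper does not. The paper lower-bounds $D_s$ by $\sigmaball^2+\sum_{r\le s}\norm{g_r-\nabla f(w_r)}^2$, i.e.\ by the \emph{realized} squared noise, and then applies the deterministic telescoping inequality (\cref{lem:sum-one-over-sum}) \emph{pathwise} to the sequence $a_r=\norm{g_r-\nabla f(w_r)}^2\le\sigmaball^2$; a single martingale concentration on the cross term suffices, and the bound comes out as exactly $\initeta^2\diam^2\log(1+T)$. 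Your plan instead passes to conditional variances $\sigma_s^2=\Ex{\xi_s\mid\filtration_{s-1}}$, splits $\sum\xi_s/D_s$ into a predictable sum and a martingale, lower-bounds $D_s$ by $\initg^2+\sum_{r<s}\sigma_r^2$ (which needs an extra concentration step to relate $\sum\xi_r$ to $\sum\sigma_r^2$), and then runs a Freedman bound on the residual martingale whose variance is self-bounded by the predictable sum. That machinery is all feasible, but it uses two or three martingale concentrations where the paper uses one, it requires an intersection-of-events argument to feed the predictable-sum bound into the martingale's variance proxy, and it will not recover the clean constant $1$ in front of $\log(1+T)$ (you would get $C\log(1+T)$ for some $C>1$, which is harmless downstream but looser). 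The decomposition into predictable and martingale parts is unnecessary precisely because the telescoping lemma needs no structure on $\{a_r\}$ beyond nonnegativity and boundedness.

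There is also one step you gesture at but do not close. You correctly flag that a naive Freedman or empirical-Bernstein bound on the cross term $2\sum_r\nabla f(w_r)\cdot(g_r-\nabla f(w_r))$ has a bounded-increment term of order $\lipball\sigmaball\log\tfrac{T}{\delta}$, which would wreck the $\sigmaball^2\log\tfrac{T}{\delta}$ budget; your proposed remedy is to use $\sum\norm{\nabla f(w_r)}^2$ as a variance proxy. That handles the $\sqrt{\text{variance}\cdot\log}$ part via Young's inequality, but the additive bounded-increment term in Freedman/\cref{lem:emp-bernstein} remains $\lipball\sigmaball\log\tfrac{T}{\delta}$ no matter how the variance is controlled. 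The paper sidesteps this by instead invoking \cref{lem:sub_gaussian} (the sub-Gaussian MDS inequality of \citet{li2020high}) with $Z_s=\nabla f(w_s)\cdot(\nabla f(w_s)-g_s)$, $\sigma_s=\sigmaball\norm{\nabla f(w_s)}$, and the free parameter $\lambda=\tfrac{2}{3\sigmaball^2}$, which yields $\sum Z_s\le\tfrac12\sum\norm{\nabla f(w_s)}^2+\tfrac32\sigmaball^2\log\tfrac{T}{\delta}$ \emph{with no bounded-increment term at all}; union-bounding over $t\in[T]$ gives the needed uniformity. If you retain your route, this is the lemma you should reach for at that step; a generic Freedman-type bound will not do.
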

\begin{proof}[Proof of \cref{lem:padasgd-iterate-bound}]
    As $w_{s+1}$ is the projection of $w_s-\eta_s g_s$ onto the convex set containing $w^\star_\diam$,
    \begin{align*}
        \norm{w_{s+1}-w^\star_\diam}^2
        \leq
        \norm{w_s-w^\star_\diam}^2
        - 2 \eta_s g_s \cdot (w_s-w^\star_\diam)
        + \eta_s^2 \norm{g_s}^2
        .
    \end{align*}
    Summing for $s=1,\ldots,t$,
    \begin{align*}
        \norm{w_{t+1}-w^\star_\diam}^2
        \leq
        \norm{w_1-w^\star_\diam}^2
        - 2 \sum_{s=1}^t \eta_s g_s \cdot (w_s-w^\star_\diam)
        + \sum_{s=1}^t \eta_s^2 \norm{g_s}^2
        .
    \end{align*}
    Using lemma \cref{lem:padasgd-noisy-convex-inequality}, with probability at least $1-\delta$, for any $t \in [T]$,
    \begin{align*}
        \norm{w_{t+1}-w^\star_\diam}^2
        &\leq
        \norm{w_1-w^\star_\diam}^2
        + 16 \diam \sqrt{\theta_{t,\delta} \sum_{s=1}^t \tilde{\eta}_s^2 \norm{\nabla f(w_s)-g_s}^2 + \frac{\initeta^2 \diam^2 \sigmaball^2 \theta_{t,\delta}^2}{\initg^2}}
        \\&
        + \frac{2}{\initeta} \sum_{s=1}^t \tilde{\eta}_s^2 \norm{g_s-\nabla f(w_s)}^2
        + \brk*{1+\frac{2}{\initeta}}\sum_{s=1}^t \eta_s^2 \norm{g_s}^2
        .
    \end{align*}
    Applying \cref{lem:sum_eta_squared,lem:sum_tilde_eta_squared} under a union bound, with probability at least $1-2 \delta$, for any $t \in [T]$,
    \begin{align*}
        \norm{w_{t+1}-w^\star_\diam}^2
        &\leq
        \norm{w_1-w^\star_\diam}^2
        + 16 \initeta \diam^2 \sqrt{\theta_{t,\delta} \log \brk{1+T} + \frac{\sigmaball^2 \theta_{t,\delta}^2}{\initg^2}}
        \\&
        + 2 \initeta \diam^2 \log \brk{1+T}
        + \initeta \brk{\initeta+2} \diam^2 \log\brk*{1+\frac{2 \lipball^2 T + 2 \sigmaball^2 T}{\initg^2}}
        \\
        &\leq
        \norm{w_1-w^\star_\diam}^2
        + \frac{\diam^2}{8}
        + \frac{\diam^2}{16}
        + \frac{\diam^2}{16}
        \tag{Defs of $\initeta,\initg$ at \cref{alg:convex-lipschitz-new} and $\initeta \leq 1$}
        \\
        &=
        \norm{w_1-w^\star_\diam}^2
        + \frac14 \diam^2
        .
    \end{align*}
    Thus, using the triangle inequality and the fact that $\norm{w_1-w^\star} \geq \norm{w_1-w^\star_\diam}$,
    \begin{align*}
        \norm{w_{t+1}-w_1}
        &\leq
        \norm{w_{t+1}-w^\star_\diam}
        + \norm{w^\star_\diam - w_1}
        \leq
        2 \norm{w_1-w^\star_\diam} + \frac12 \diam
        \leq
        2 \norm{w_1-w^\star} + \frac12 \diam
        .
    \end{align*}
    In addition,
    \begin{align*}
        \norm{w_{t+1}-w^\star}
        &\leq
        \norm{w_{t+1}-w_1}
        + \norm{w_1-w^\star}
        \leq
        3 \norm{w_1-w^\star}
        + \frac12 \diam
        .
        \qedhere
    \end{align*}
\end{proof}

\subsection{Proof of \texorpdfstring{\cref{lem:padasgd-noisy-convex-inequality}}{Lemma 11}}
Using $\tilde{\eta}_t$ we can decompose the term $-\eta_s g_t \cdot (w_s-w^\star_\diam)$,
\begin{align*}
    - \eta_s g_s \cdot (w_s-w^\star_\diam)
    &=
    - \tilde{\eta}_s g_s \cdot (w_s-w^\star_\diam)
    + \brk{\tilde{\eta}_s-\eta_s} g_s \cdot (w_s-w^\star_\diam)
    \\&
    =
    - \tilde{\eta}_s \nabla f(w_s) \cdot (w_s-w^\star_\diam)
    + \tilde{\eta}_s \brk{\nabla f(w_s)-g_s} \cdot (w_s-w^\star_\diam)
    + \brk{\tilde{\eta}_s-\eta_s} g_s \cdot (w_s-w^\star_\diam)
    \\&
    \leq
    \tilde{\eta}_s \brk{\nabla f(w_s)-g_s} \cdot (w_s-w^\star_\diam)
    + \brk{\tilde{\eta}_s-\eta_s} g_s \cdot (w_s-w^\star_\diam)
    ,
\end{align*}
where the last inequality follows from convexity.
Thus,
\begin{align}\label{eq:eta-decomposition}
    - \sum_{s=1}^t \eta_s g_s \cdot (w_s-w^\star_\diam)
    &\leq
    \sum_{s=1}^t \tilde{\eta}_s \brk{\nabla f(w_s)-g_s} \cdot (w_s-w^\star_\diam)
    + \sum_{s=1}^t \brk{\tilde{\eta}_s-\eta_s} g_s \cdot (w_s-w^\star_\diam)
    .
\end{align}
The next two lemmas bounds the two sums and concludes the proof of \cref{lem:padasgd-noisy-convex-inequality}. Their proofs follows.
\begin{lemma}\label{lem:padasgd-martingale}
    With probability at least $1-\delta$, for any $t \in [T]$,
    \begin{align*}
        \sum_{s=1}^t \tilde{\eta}_s \brk{\nabla f(w_s)-g_s} \cdot (w_s-w^\star_\diam)
        &\leq
        8 \diam \sqrt{\theta_{t,\delta} \sum_{s=1}^t \tilde{\eta}_s^2 \norm{\nabla f(w_s)-g_s}^2 + \frac{\initeta^2 \diam^2 \sigmaball^2 \theta_{t,\delta}^2}{\initg^2}}
        .
    \end{align*}
\end{lemma}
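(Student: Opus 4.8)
The plan is to recognize the left-hand side as a sum over a martingale difference sequence and to apply a time-uniform (anytime) concentration inequality of empirical-Bernstein type. Set $\filtration_s \eqdef \sigma(g_1,\dots,g_s)$ and
\[
X_s \eqdef \tilde{\eta}_s\,\brk{\nabla f(w_s)-g_s}\cdot(w_s-w^\star_\diam).
\]
The whole reason for working with the decorrelated stepsize $\tilde{\eta}_s$ rather than $\eta_s$ is that $\tilde{\eta}_s$ depends only on $\nabla f(w_s)$ (a deterministic function of $w_s$) and on $g_1,\dots,g_{s-1}$, hence is $\filtration_{s-1}$-measurable; $w_s$ and $w^\star_\diam$ are $\filtration_{s-1}$-measurable as well. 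Since $\Ex{g_s\mid\filtration_{s-1}}=\nabla f(w_s)$, we get $\Ex{X_s\mid\filtration_{s-1}}=0$, so $(X_s)$ is a martingale difference sequence.

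First I would bound the increments. Because all iterates are projected into the $\diam$-ball around $w_1$ and $\norm{w_1-w^\star_\diam}\le\diam$ by definition of $w^\star_\diam$, the triangle inequality gives $\norm{w_s-w^\star_\diam}\le 2\diam$, so by Cauchy--Schwarz $\abs{X_s}\le 2\diam\,\tilde{\eta}_s\norm{\nabla f(w_s)-g_s}=:b_s$. In this regime ($\diam\le 8\norm{w_1-w^\star}$) the noise bound $\norm{\nabla f(w_s)-g_s}\le\sigmaball$ holds throughout the $\diam$-ball, and $\tilde{\eta}_s\le\initeta\diam/\initg$, so $b_s\le 2\initeta\diam^2\sigmaball/\initg=:b$ uniformly, with $b_s$ and $b$ being $\filtration_s$-measurable upper bounds on $\abs{X_s}$.

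Then I would invoke a time-uniform martingale bound which, for a martingale difference sequence with $\abs{X_s}\le b_s\le b$, gives with probability at least $1-\delta$, simultaneously for all $t\in[T]$,
\[
\sum_{s=1}^t X_s \le C_1\sqrt{\theta_{t,\delta}\sum_{s=1}^t b_s^2}+C_2\,\theta_{t,\delta}\,b,
\]
where $\theta_{t,\delta}=\log\tfrac{60\log(6t)}{\delta}$ — the doubly-logarithmic dependence and the numerical constants stemming from a peeling/union-bound argument over dyadic scales of the realized quadratic variation. Substituting $b_s^2=4\diam^2\tilde{\eta}_s^2\norm{\nabla f(w_s)-g_s}^2$ and $b=2\initeta\diam^2\sigmaball/\initg$, factoring $2\diam$ out of both terms, and merging them via $x+y\le\sqrt{2(x^2+y^2)}$, the bound collapses to $8\diam\sqrt{\theta_{t,\delta}\sum_s\tilde{\eta}_s^2\norm{\nabla f(w_s)-g_s}^2+\initeta^2\diam^2\sigmaball^2\theta_{t,\delta}^2/\initg^2}$, with the constant $8$ absorbing $C_1,C_2$.

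The main obstacle is bookkeeping rather than ideas: one must quote a version of the anytime inequality whose variance proxy is the \emph{realized} quadratic variation $\sum_s X_s^2$ (or the upper bound $\sum_s b_s^2$) and not the predictable $\sum_s\Ex{X_s^2\mid\filtration_{s-1}}$ — the latter would only yield $\sigmaball^2\sum_s\tilde{\eta}_s^2$, which need not be logarithmically bounded (cf.\ \cref{lem:sum_tilde_eta_squared}) — and then track the constants carefully enough to land exactly on the factor $8$ and on $\theta_{t,\delta}=\log\tfrac{60\log(6t)}{\delta}$.
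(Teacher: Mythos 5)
Your proposal is correct and follows essentially the same route as the paper: cast the left-hand side as a martingale difference sum (exploiting the fact that $\tilde\eta_s$ is $\filtration_{s-1}$-measurable precisely because $\norm{\nabla f(w_s)}^2$ replaces $\norm{g_s}^2$), bound the increments via Cauchy--Schwarz and $\norm{w_s-w^\star_\diam}\le 2\diam$, apply the anytime empirical-Bernstein inequality of \citet{Ivgi2023DoGIS} (\cref{lem:emp-bernstein}) with $\hat X_s=0$ and $c=2\initeta\diam^2\sigmaball/\initg$, and substitute. The only cosmetic difference is that the paper's version of the inequality packages the realized quadratic variation and the $c^2\theta^2$ correction inside a single square root, whereas you state two separate terms and recombine via $x+y\le\sqrt{2(x^2+y^2)}$ — same substance, constants absorbed into the $8$. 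Your remark on why the variance proxy must be the realized (not predictable) quadratic variation is exactly the right point and matches what the paper is implicitly exploiting.
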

\begin{lemma}\label{lem:padasgd-correlation-error}
    For any $t \in [T]$,
    \begin{align*}
        \sum_{s=1}^t \brk{\tilde{\eta}_s-\eta_s} g_s \cdot (w_s-w^\star_\diam)
        &\leq
        \frac{1}{\initeta} \sum_{s=1}^t \eta_s^2 \norm{g_s}^2
        + \frac{1}{\initeta} \sum_{s=1}^t \tilde{\eta}_s^2 \norm{g_s-\nabla f(w_s)}^2
        .
    \end{align*}
\end{lemma}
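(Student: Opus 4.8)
The plan is to bound each summand $(\tilde{\eta}_s-\eta_s)\, g_s \cdot (w_s-w^\star_\diam)$ individually by $\tfrac{1}{\initeta}\brk*{\eta_s^2\norm{g_s}^2 + \tilde{\eta}_s^2\norm{g_s-\nabla f(w_s)}^2}$ and then sum over $s=1,\dots,t$. The starting observation is that $\eta_s$ and $\tilde{\eta}_s$ differ only in the $s$-th summand under the square root: writing $A_s \eqdef \initg^2 + \sum_{s'<s}\norm{g_{s'}}^2$, we have $\eta_s = \initeta\diam/\sqrt{A_s+\norm{g_s}^2}$ and $\tilde{\eta}_s = \initeta\diam/\sqrt{A_s+\norm{\nabla f(w_s)}^2}$, so the entire argument reduces to controlling how much perturbing a single term of the running sum can change the stepsize.

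First I would bound $\abs{\tilde{\eta}_s-\eta_s}$. Applying the identity $\abs{1/\sqrt a-1/\sqrt b} = \abs{b-a}/\brk*{\sqrt{ab}\,(\sqrt a+\sqrt b)}$ with $a=A_s+\norm{\nabla f(w_s)}^2$ and $b=A_s+\norm{g_s}^2$, and using the reverse triangle inequality $\abs{b-a}=\abs{\norm{g_s}^2-\norm{\nabla f(w_s)}^2}\le \norm{g_s-\nabla f(w_s)}\brk*{\norm{g_s}+\norm{\nabla f(w_s)}}$ together with $\sqrt a\ge\norm{\nabla f(w_s)}$ and $\sqrt b\ge\norm{g_s}$ (so that $\norm{g_s}+\norm{\nabla f(w_s)}\le\sqrt a+\sqrt b$), the factor $\sqrt a+\sqrt b$ cancels and one is left with $\abs{\tilde{\eta}_s-\eta_s}\le \initeta\diam\,\norm{g_s-\nabla f(w_s)}/\sqrt{ab}$. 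Substituting $1/\sqrt a=\tilde{\eta}_s/(\initeta\diam)$ and $1/\sqrt b=\eta_s/(\initeta\diam)$ then turns this into the clean form $\abs{\tilde{\eta}_s-\eta_s}\le \tilde{\eta}_s\eta_s\,\norm{g_s-\nabla f(w_s)}/(\initeta\diam)$.

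Next I would combine this with Cauchy--Schwarz, $g_s\cdot(w_s-w^\star_\diam)\le\norm{g_s}\,\norm{w_s-w^\star_\diam}$, and the geometric bound $\norm{w_s-w^\star_\diam}\le 2\diam$, which holds because the iterates are projected onto the $\diam$-ball centered at $w_1$ and $w^\star_\diam$ also lies in that ball — in particular this step needs no reference to the event $\event_1$. This gives $(\tilde{\eta}_s-\eta_s)\,g_s\cdot(w_s-w^\star_\diam)\le \tfrac{2}{\initeta}\,\brk*{\eta_s\norm{g_s}}\brk*{\tilde{\eta}_s\norm{g_s-\nabla f(w_s)}}$, and the AM--GM inequality $xy\le\tfrac12(x^2+y^2)$ applied with $x=\eta_s\norm{g_s}$ and $y=\tilde{\eta}_s\norm{g_s-\nabla f(w_s)}$ yields exactly the desired per-summand bound. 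Summing over $s=1,\dots,t$ completes the proof of \cref{lem:padasgd-correlation-error}.

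I expect the bulk of this to be routine; the one place requiring care is the perturbation estimate for $\abs{\tilde{\eta}_s-\eta_s}$ — specifically, pairing $\norm{g_s}$ with $\sqrt b$ and $\norm{\nabla f(w_s)}$ with $\sqrt a$ so that the $\sqrt a+\sqrt b$ factor cancels cleanly and no stray gradient-norm term survives, and tracking constants so that the final coefficient is precisely $1/\initeta$ (the $2\diam$ from the diameter bound and the $\tfrac12$ from AM--GM conspire to produce this). Everything else is elementary algebra and a telescoping-free summation.
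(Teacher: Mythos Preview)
Your proposal is correct and follows essentially the same approach as the paper's proof: both compute $\abs{\tilde{\eta}_s-\eta_s}$ via the difference-of-reciprocal-square-roots identity, cancel the $\sqrt{a}+\sqrt{b}$ factor using $\norm{g_s}+\norm{\nabla f(w_s)}\le\sqrt{a}+\sqrt{b}$, apply Cauchy--Schwarz together with $\norm{w_s-w^\star_\diam}\le 2\diam$, and finish with $2xy\le x^2+y^2$. The steps and constants match exactly.
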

Combining both into \cref{eq:eta-decomposition} yields the desired inequality.%
\qed

\subsection{Proof of \texorpdfstring{\cref{lem:padasgd-martingale}}{Lemma 14}}
While $\norm{\nabla f(w_s)-g_s} \leq \sigmaball$, this bound is too crude for the required martingale analysis. To this end we use the following empirical Bernstein inequality due to \citet{howard2021time} in order to exploit the relationship between $\tilde{\eta}_t$ and $\nabla f(w_s)-g_s$.
\begin{lemma}[Corollary 2 of \citet{Ivgi2023DoGIS}]\label{lem:emp-bernstein}
    Let $c>0$ and $X_t$ be a martingale difference sequence adapted to $\cF_t$ such that $\abs{X_t} \leq c$ with probability $1$ for all $t$. Then, for all $\delta \in (0,1)$, and $\hat{X}_t \in \cF_{t-1}$ such that $\abs{\hat{X}_t} \leq c$ with probability $1$,
    \begin{align*}
        \Pr\brk*{\exists t \in [T] : \abs*{\sum_{s=1}^t X_s} \geq 4 \sqrt{\theta_{t,\delta} \sum_{s=1}^t \brk*{X_s-\hat{X}_s}^2 + c^2 \theta_{t,\delta}^2}}
        \leq
        \delta
        ,
    \end{align*}
    where $\theta_{t,\delta}=\log \tfrac{60 \log (6t)}{\delta}$.
\end{lemma}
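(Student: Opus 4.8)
The plan is to obtain this inequality as a specialization of the time-uniform empirical Bernstein bound of \citet{howard2021time}, and I will sketch the three ingredients that go into such a bound. The starting point is a Bernstein-type exponential supermartingale: fix $\lambda \in (0,1/c)$, set $\psi(\lambda) \eqdef \brk*{e^{\lambda c}-1-\lambda c}/c^2$, and use $\abs{X_s}\le c$ together with $\E[X_s \mid \cF_{s-1}]=0$ to get the one-step cumulant bound $\E[e^{\lambda X_s}\mid\cF_{s-1}] \le \exp\brk*{\psi(\lambda)\,\E[X_s^2\mid\cF_{s-1}]}$. Then $M_t(\lambda) \eqdef \exp\brk*{\lambda\sum_{s\le t}X_s - \psi(\lambda)V_t}$ with $V_t \eqdef \sum_{s\le t}\E[X_s^2\mid\cF_{s-1}]$ is a nonnegative supermartingale started at $1$, and Ville's maximal inequality gives, for each fixed $\lambda$, $\Pr\brk*{\exists t\le T:\ \lambda\sum_{s\le t}X_s \ge \psi(\lambda)V_t + \log(1/\delta)} \le \delta$, i.e.\ a straight-line deviation boundary in the variable $V_t$.

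Second, I would pass from the predictable variance $V_t$ to the observable quantity $\hat V_t \eqdef \sum_{s\le t}\brk*{X_s-\hat X_s}^2$. Since $\hat X_s$ is $\cF_{s-1}$-measurable, $\E[\brk*{X_s-\hat X_s}^2\mid\cF_{s-1}] = \E[X_s^2\mid\cF_{s-1}] + \hat X_s^2 \ge \E[X_s^2\mid\cF_{s-1}]$, so the increments of $\hat V_t$ dominate those of $V_t$ in conditional expectation. To convert this into a bound on $V_t$ by the \emph{realized} $\hat V_t$ (which is what actually appears in the statement), I would run a second Bernstein/Freedman argument on the bounded-difference martingale $\brk*{X_s-\hat X_s}^2 - \E[\brk*{X_s-\hat X_s}^2\mid\cF_{s-1}]$ (whose increments are at most $4c^2$), yielding $V_t \le \hat V_t + O\brk*{c\sqrt{\theta\hat V_t}+c^2\theta}$ on a high-probability event, with the leftover absorbed into the additive $c^2\theta_{t,\delta}^2$ term. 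I expect this substitution to be the main obstacle: it is easy to get an order-of-magnitude statement, but producing \emph{exactly} the closed form $4\sqrt{\theta_{t,\delta}\hat V_t + c^2\theta_{t,\delta}^2}$ is what dictates both the $\log\log$-type factor inside $\theta_{t,\delta}$ and the quadratic-in-$\theta_{t,\delta}$ additive slack.

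Finally, I would remove the dependence on a single tuning $\lambda$ by stitching: a fixed $\lambda$ gives a boundary that is tight only near one value of $\hat V_t$, whereas $\hat V_t$ is random and ranges over $[0,4c^2 t]$. Applying the first two steps along a geometric grid $\{\lambda_j\}$ with failure budget $\delta_j \propto \delta/\brk*{j(j+1)}$ (or an explicit mixture over $\lambda$), and then taking the lower envelope of the resulting lines, produces a smooth square-root-plus-linear boundary; since only $O\brk*{\log\log(6t)}$ grid points are needed to cover the range of $\hat V_t$, the union-bound cost is precisely the $\log\brk*{60\log(6t)/\delta}$ that becomes $\theta_{t,\delta}$. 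Repeating the argument with $-X_s$ and a last union bound (halving $\delta$, with the factor absorbed into constants) then gives the two-sided inequality uniformly over $t\in[T]$, completing the proof.
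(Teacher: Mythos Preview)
The paper does not prove this lemma at all: it is quoted verbatim as Corollary~2 of \citet{Ivgi2023DoGIS} (itself a repackaging of the time-uniform empirical Bernstein bound of \citet{howard2021time}) and is used as a black box in the proofs of \cref{lem:padasgd-martingale,lem:padasgd-convergence-martingale}. So there is no ``paper's own proof'' to compare your attempt against; your proposal is effectively an independent derivation sketch.

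On the substance of your sketch: steps~1 and~3 (Bernstein exponential supermartingale plus Ville, then stitching over a geometric grid of $\lambda$'s to get the $\log\log$ factor) are exactly the machinery of \citet{howard2021time}. Your step~2, however, is not how the empirical variance actually enters in that work. You propose to first control $\sum_s X_s$ in terms of the predictable variance $V_t=\sum_s\E[X_s^2\mid\cF_{s-1}]$ and then run a \emph{second} Freedman-type argument to replace $V_t$ by the observed $\hat V_t=\sum_s(X_s-\hat X_s)^2$. The route taken by \citet{howard2021time} is more direct: using a Fan--Grama--Liu style inequality, one shows that the self-normalized process $\exp\brk!{\lambda\sum_{s\le t}X_s-\psi(\lambda)\hat V_t}$ is \emph{itself} a supermartingale (the predictable $\hat X_s$ can be absorbed because each factor is $\cF_{s-1}$-measurable), so $\hat V_t$ appears from the outset with no auxiliary concentration step. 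Your two-stage approach can be made to work, but it is exactly the place you flagged as ``the main obstacle,'' and it will not easily yield the clean constant $4$ and the specific form $\theta_{t,\delta}=\log\tfrac{60\log(6t)}{\delta}$; those come out of the one-shot self-normalized construction followed by stitching.
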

\begin{proof}[\unskip\nopunct]
Invoking \cref{lem:emp-bernstein} with
\begin{align*}
    X_s &= \tilde{\eta}_s \brk{\nabla f(w_s)-g_s} \cdot (w_s-w^\star_\diam),
    \qquad
    \hat{X}_s = 0
    \qquad\text{and}\qquad
    c = \frac{2 \initeta \diam^2 \sigmaball}{\initg},
\end{align*}
(note the use of $\diam \leq 8 \norm{w_1-w^\star}$ to bound $\norm{\nabla f(w_s)-g_s} \leq \sigmaball$) with probability at least $1-\delta$, for any $t \in [T]$,
\begin{align*}
    \abs*{\sum_{s=1}^t \tilde{\eta}_s \brk{\nabla f(w_s)-g_s} \cdot (w_s-w^\star_\diam)}
    \leq
    4 \sqrt{\theta_{t,\delta} \sum_{s=1}^t \brk*{\tilde{\eta}_s \brk{\nabla f(w_s)-g_s} \cdot (w_s-w^\star_\diam)}^2 + \frac{4 \initeta^2 \diam^4 \sigmaball^2 \theta_{t,\delta}^2}{\initg^2}}
    .
\end{align*}
By Cauchy-Schwarz inequality and the triangle inequality ($\norm{w_s-w^\star_\diam} \leq \norm{w_s-w_1}+\norm{w_1-w^\star_\diam} \leq 2 \diam$),
\begin{align*}
    \brk*{\tilde{\eta}_s \brk{\nabla f(w_s)-g_s} \cdot (w_s-w^\star_\diam)}^2
    &\leq
    \tilde{\eta}_s^2 \norm{\nabla f(w_s)-g_s}^2 \norm{w_s-w^\star_\diam}^2
    \leq
    4 \tilde{\eta}_s^2 \norm{\nabla f(w_s)-g_s}^2 \diam^2
    .
\end{align*}
Hence, with probability at least $1-\delta$, for any $t \in [T]$,
\begin{align*}
    \abs*{\sum_{s=1}^t \tilde{\eta}_s \brk{\nabla f(w_s)-g_s} \cdot (w_s-w^\star_\diam)}
    &\leq
    8 \diam \sqrt{\theta_{t,\delta} \sum_{s=1}^t \tilde{\eta}_s^2 \norm{\nabla f(w_s)-g_s}^2 + \frac{\initeta^2 \diam^2 \sigmaball^2 \theta_{t,\delta}^2}{\initg^2}}
    .
    \qedhere
\end{align*}
\end{proof}
\subsection{Proof of \texorpdfstring{\cref{lem:padasgd-correlation-error}}{Lemma 15}}
\begin{proof}[\unskip\nopunct]%
    Let $G_t = \sqrt{\sum_{s=1}^t \norm{g_s}^2}$. Thus,
    \begin{align*}
        \abs*{\tilde{\eta}_s-\eta_s}
        &=
        \frac{\initeta \diam \abs{\norm{g_s}^2-\norm{\nabla f(w_s)}^2}}{\sqrt{\sigmaball^2 + G_s^2} \sqrt{\sigmaball^2 + \norm{\nabla f(w_s)}^2 + G_{s-1}^2}\brk{\sqrt{\sigmaball^2 + G_s^2}+ \sqrt{\sigmaball^2 + \norm{\nabla f(w_s)}^2 + G_{s-1}^2}}}
        .
    \end{align*}
    By the triangle inequality,
    \begin{align*}
        \frac{\abs{\norm{g_s}^2-\norm{\nabla f(w_s)}^2}}{\sqrt{\sigmaball^2 + G_s^2}+ \sqrt{\sigmaball^2 + \norm{\nabla f(w_s)}^2 + G_{s-1}^2}}
        &\leq
        \frac{\norm{g_s-\nabla f(w_s)} \brk{\norm{g_s}+\norm{\nabla f(w_s)}}}{\sqrt{\sigmaball^2 + G_s^2}+ \sqrt{\sigmaball^2 + \norm{\nabla f(w_s)}^2 + G_{s-1}^2}}
        \\
        &
        \leq
        \norm{g_s-\nabla f(w_s)}
        .
    \end{align*}
    Hence, by Cauchy-Schwarz inequality,
    \begin{align*}
        \sum_{s=1}^t \brk{\tilde{\eta}_s-\eta_s} g_s \cdot (w_s-w^\star_\diam)
        &\leq
        \initeta \diam \sum_{s=1}^t \frac{\norm{g_s-\nabla f(w_s)} \norm{g_s} \norm{w_s-w^\star_\diam}}{\sqrt{\sigmaball^2 + G_s^2} \sqrt{\sigmaball^2 + \norm{\nabla f(w_s)}^2 + G_{s-1}^2}}
        \\
        &\leq
        2 \initeta \diam^2 \sum_{s=1}^t \frac{\norm{g_s-\nabla f(w_s)} \norm{g_s}}{\sqrt{\sigmaball^2 + G_s^2} \sqrt{\sigmaball^2 + \norm{\nabla f(w_s)}^2 + G_{s-1}^2}}
        \\
        &
        =
        \frac{2}{\initeta} \sum_{s=1}^t \eta_s \tilde{\eta}_s \norm{g_s-\nabla f(w_s)} \norm{g_s}
        ,
    \end{align*}
    where the second inequality follows by $\norm{w_s-w^\star_\diam} \leq 2 \diam$.
    Using the identity $2 a b \leq a^2 + b^2$,
    \begin{align*}
        \sum_{s=1}^t \brk{\tilde{\eta}_s-\eta_s} g_s \cdot (w_s-w^\star_\diam)
        &\leq
        \frac{1}{\initeta} \sum_{s=1}^t \eta_s^2 \norm{g_s}^2
        + \frac{1}{\initeta} \sum_{s=1}^t \tilde{\eta}_s^2 \norm{g_s-\nabla f(w_s)}^2
        .
        \qedhere
    \end{align*}
\end{proof}
\subsection{Proof of \texorpdfstring{\cref{lem:sum_eta_squared,lem:sum_tilde_eta_squared}}{Lemmas 12 and 13}}
In order to prove the lemmas we need the following standard lemma used in the analysis of AdaGrad-like methods.
\begin{lemma}\label{lem:sum-one-over-sum}
    Let $a_0 > 0$ and $a_1,a_2,\ldots,a_n \geq 0$. Then
    \begin{align*}
        \sum_{i=1}^n \frac{a_i}{\sum_{j=0}^i a_j} \leq \ln \frac{\sum_{i=0}^n a_i}{a_0} \leq \log \frac{\sum_{i=0}^n a_i}{a_0}.
    \end{align*}
\end{lemma}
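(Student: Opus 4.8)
The plan is to reduce the claim to a one-line telescoping estimate over the partial sums. Set $S_0 \eqdef a_0$ and $S_i \eqdef \sum_{j=0}^i a_j$ for $i \ge 1$; since $a_0 > 0$ and $a_1,\dots,a_n \ge 0$, the sequence $(S_i)_{i=0}^n$ is nondecreasing with $S_i \ge S_0 > 0$ for all $i$, so every ratio $a_i/S_i = (S_i-S_{i-1})/S_i$ is well defined (and vanishes whenever $a_i=0$), and the telescoping target $\ln(S_n/S_0)$ is nonnegative.

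First I would establish the per-term bound $a_i/S_i \le \ln(S_i/S_{i-1})$. This is immediate from the elementary inequality $1 - 1/t \le \ln t$, valid for all $t>0$ (equality at $t=1$, and $\ln t - 1 + 1/t$ has derivative $(t-1)/t^2$, hence a global minimum at $t=1$): plugging in $t = S_i/S_{i-1} \ge 1$ yields $a_i/S_i = 1 - S_{i-1}/S_i \le \ln(S_i/S_{i-1})$. Equivalently, one can compare the rectangle of height $1/S_i$ on $[S_{i-1},S_i]$ with $\int_{S_{i-1}}^{S_i}\! dx/x$ using monotonicity of $x\mapsto 1/x$.

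Then I would sum over $i=1,\dots,n$; the right-hand side telescopes:
\[
\sum_{i=1}^n \frac{a_i}{\sum_{j=0}^i a_j} \;\le\; \sum_{i=1}^n \brk*{\ln S_i - \ln S_{i-1}} \;=\; \ln S_n - \ln S_0 \;=\; \ln\frac{\sum_{i=0}^n a_i}{a_0},
\]
which is the first inequality. For the second, note that $\sum_{i=0}^n a_i \ge a_0$, so the argument of the logarithm is at least $1$, and then $\ln y = (\ln 2)\log y \le \log y$ for every $y \ge 1$ (since $\ln 2 < 1$ and $\log y \ge 0$), converting to the base-$2$ logarithm used throughout the paper.

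There is essentially no obstacle here: this is the standard AdaGrad-type ``sum of increments over running totals'' estimate. The only points deserving a word of care are the strict positivity of the partial sums (needed so the divisions and logarithms make sense), which is guaranteed by the hypothesis $a_0>0$, and the degenerate terms with $a_i=0$, for which both sides of the per-term bound are zero.
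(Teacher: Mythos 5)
Your proof is correct and takes essentially the same approach as the paper: the same per-term bound $a_i/S_i = 1 - S_{i-1}/S_i \le \ln(S_i/S_{i-1})$ (the paper phrases the elementary inequality as $1-x \le \ln(1/x)$, equivalent to your $1-1/t \le \ln t$), followed by the same telescoping and the same observation that $\ln y \le \log y$ for $y \ge 1$. No gap.
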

\begin{proof}
    Using the inequality $1-x \leq \ln\brk{\ifrac{1}{x}}$ for $x > 0$,
    \begin{align*}
        \sum_{i=1}^n \frac{a_i}{\sum_{j=0}^i a_j}
        =
        \sum_{i=1}^n \brk*{1-\frac{\sum_{j=0}^{i-1} a_j}{\sum_{j=0}^i a_j}}
        \leq
        \sum_{i=1}^n \ln \brk*{\frac{\sum_{j=0}^i a_j}{\sum_{j=0}^{i-1} a_j}}
        =
        \ln \brk*{\frac{\sum_{j=0}^n a_j}{a_0}}
        \leq
        \log \brk*{\frac{\sum_{j=0}^n a_j}{a_0}}
        ,
    \end{align*}
    where the last inequality holds since $\ln \brk{\ifrac{\sum_{j=0}^n a_j}{a_0}} \geq 0$.
\end{proof}

\begin{proof}[Proof of \cref{lem:sum_eta_squared}]
    Using \cref{lem:sum-one-over-sum} with $a_0=\initg^2$ and $a_i=\norm{g_i}^2$,
    \begin{align*}
        \sum_{s=1}^t \eta_s^2 \norm{g_s}^2
        =
        \initeta^2 \diam^2 \sum_{s=1}^t \frac{\norm{g_s}^2}{\initg^2+\sum_{k=1}^s \norm{g_k}^2}
        \leq
        \initeta^2 \diam^2 \log \brk*{1+\frac{\sum_{s=1}^t \norm{g_s}^2}{\initg^2}}
        .
    \end{align*}
    By the inequality $\norm{u+v}^2 \leq 2 \norm{u}^2 + \norm{v}^2$, the Lipschitz assumption and the noise assumption, $\norm{g_s}^2 \leq 2 \lipball^2 + 2 \sigmaball^2$. Thus,
    \begin{align*}
        \sum_{s=1}^t \eta_s^2 \norm{g_s}^2
        &\leq
        \initeta^2 \diam^2 \log \brk*{1+\frac{2\lipball^2 T + 2 \sigmaball^2 T}{\initg^2}}
        .
        \qedhere
    \end{align*}
\end{proof}
\begin{proof}[Proof of \cref{lem:sum_tilde_eta_squared}]
    Using lemma \cref{lem:sub_gaussian}, with probability at least $1-\frac{\delta}{T}$, for $\lambda=\frac{2}{3 \sigmaball^2}$,
    \begin{align*}
        \sum_{s=1}^{t-1} \nabla f(w_s) \cdot (\nabla f(w_s) - g_s)
        &\leq
        \frac12 \sum_{s=1}^{t-1} \norm{\nabla f(w_s)}^2 + \frac32 \sigmaball^2 \log \tfrac{T}{\delta}
        .
    \end{align*}
    Thus, with probability at least $1-\frac{\delta}{T}$,
    \begin{align*}
        \sum_{s=1}^{t-1} \norm{g_s}^2
        &=
        \sum_{s=1}^{t-1} \brk{\norm{\nabla f(w_s)}^2 + \norm{g_s-\nabla f(w_s)}^2 - 2 \nabla f(w_s) \cdot (\nabla f(w_s)-g_s)}
        \\&
        \geq
        \sum_{s=1}^{t-1} \norm{g_s-\nabla f(w_s)}^2
        - 3 \sigmaball^2 \log \tfrac{T}{\delta},
    \end{align*}
    and since $\initg^2 \geq 5 \sigmaball^2 \log \tfrac{T}{\delta}$ and $\norm{g_t-\nabla f(w_t)} \leq \sigmaball$,
    \begin{align*}
        \tilde{\eta}_s
        &\leq
        \frac{\initeta \diam}{\sqrt{5 \sigmaball^2 \log \tfrac{T}{\delta} + \sum_{s=1}^{t-1} \norm{g_s}^2}}
        \leq
        \frac{\initeta \diam}{\sqrt{2 \sigmaball^2 \log \tfrac{T}{\delta} + \sum_{s=1}^{t-1} \norm{g_s-\nabla f(w_s)}^2}}
        \\&
        \leq
        \frac{\initeta \diam}{\sqrt{\sigmaball^2 + \sum_{s=1}^{t} \norm{g_s-\nabla f(w_s)}^2}}
        .
    \end{align*}
    Hence, under a union bound, with probability at least $1-\delta$, for any $t \in [T]$,
    \begin{align*}
        \tilde{\eta}_s
        \leq
        \frac{\initeta \diam}{\sqrt{\sigmaball^2 + \sum_{s=1}^{t} \norm{g_s-\nabla f(w_s)}^2}}
    \end{align*}
    and using \cref{lem:sum-one-over-sum} with $a_0=\sigmaball^2$ and $a_i = \norm{g_i-\nabla f(w_i)}^2$,
    \begin{align*}
        \sum_{s=1}^t \tilde{\eta}_s^2 \norm{g_s-\nabla f(w_s)}^2
        &\leq
        \initeta^2 \diam^2 \sum_{s=1}^t \frac{\norm{g_s-\nabla f(w_s)}^2}{\sigmaball^2 + \sum_{s=1}^t \norm{g_s-\nabla f(w_s)}^2}
        \\&
        \leq
        \initeta^2 \diam^2 \log \brk*{\frac{\sigmaball^2 + \sum_{s=1}^t \norm{g_s-\nabla f(w_s)}^2}{\sigmaball^2}}
        \leq
        \initeta^2 \diam^2 \log (1+T)
        .
        \qedhere
    \end{align*}
\end{proof}

\subsection{Proof of \texorpdfstring{\cref{lem:padasgd-convergence-martingale}}{Lemma 7}}
\begin{proof}[\unskip\nopunct]
    Let
    \begin{align*}
        X_s = (\nabla f(w_s)-g_s) \cdot (w_s-w^\star)
        ,\qquad
        \hat{X}_s = 0
        ,\qquad\text{and}\qquad
        c = \sigmaball(\diam+\norm{w_1-w^\star}).
    \end{align*}
    Note that by Cauchy-Schwarz inequality and the triangle inequality,
    \begin{align*}
        (\nabla f(w_s)-g_s) \cdot (w_s-w^\star)
        &\leq
        \norm{\nabla f(w_s)-g_s} \norm{w_s-w^\star}
        \leq
        \norm{\nabla f(w_s)-g_s} \brk{\norm{w_s-w_1}+\norm{w_1-w^\star}}
        \\&
        \leq
        \sigmaball \brk{\diam+\norm{w_1-w^\star}}
        =
        c
        ,
    \end{align*}
    where we used the assumption that $\diam \leq 8 \norm{w_1-w^\star}$ to bound $\norm{\nabla f(w_s)-g_s}$.
    Applying \cref{lem:emp-bernstein}, with probability at least $1-\delta$,
    \begin{align*}
        \abs*{\sum_{t=1}^T (\nabla f(w_t)-g_t) \cdot (w_s-w^\star)} \leq 4 (\diam+\norm{w_1-w^\star}) \sqrt{\theta_{T,\delta} \sum_{t=1}^T \norm{\nabla f(w_t)-g_t}^2 + \sigmaball^2 \theta_{T,\delta}^2}
        .
    \end{align*}
    Applying \cref{lem:sub_gaussian} with $Z_t = \nabla f(w_t) \cdot (\nabla f(w_t)-g_t)$ and $\lambda=\frac{2}{3 \sigmaball^2}$, with probability at least $1-\delta$
    \begin{align*}
        \sum_{t=1}^T \nabla f(w_t) \cdot (\nabla f(w_t)-g_t)
        &\leq
        \frac{3 \sigmaball^2}{4} \lambda \sum_{t=1}^T \norm{\nabla f(w_t)} + \frac{\log \tfrac{1}{\delta}}{\lambda}
        =
        \frac{1}{2} \sum_{t=1}^T \norm{\nabla f(w_t)} + \frac{3 \sigmaball^2 \log \tfrac{1}{\delta}}{2}
        .
    \end{align*}
    Hence,
    \begin{align*}
        \sum_{t=1}^T \norm{\nabla f(w_t)-g_t}^2
        &=
        \sum_{t=1}^T \brk*{\norm{g_t}^2+2 \nabla f(w_t) \cdot (\nabla f(w_t)-g_t)-\norm{\nabla f(w_t)}^2}
        \\
        &\leq
        3 \sigmaball^2 \log \tfrac{1}{\delta} + \sum_{t=1}^T \norm{g_t}^2
        .
    \end{align*}
    Returning to the previous inequality, with probability at least $1-2\delta$ (union bound),
    \begin{align*}
        \abs*{\sum_{t=1}^T (\nabla f(w_t)-g_t) \cdot (w_s-w^\star)}
        &\leq
        4 (\diam+\norm{w_1-w^\star}) \sqrt{\theta_{T,\delta} \sum_{t=1}^T \norm{g_t}^2 + \theta_{T,\delta} \brk*{3 \log \tfrac{1}{\delta} + \theta_{T,\delta}} \sigmaball^2}
        \\
        &\leq
        4 (\diam+\norm{w_1-w^\star}) \sqrt{\theta_{T,\delta} \sum_{t=1}^T \norm{g_t}^2 + 4 \theta_{T,\delta}^2 \sigmaball^2}
    \end{align*}
    where the last inequality follows by $\theta_{T,\delta} \geq \log \tfrac{1}{\delta}$. Thus, $\Pr(\event_2) \geq 1-2\delta$.
\end{proof}

\section{Assumption for Noise Upper Bound}\label{sec:estimation}
In the convex non-smooth setting, \cref{thm:lower-bound-new} implies that a stochastic convex optimization method cannot be parameter-free with respect to all three parameters, diameter, Lipschitz bound and noise bound, while being competitive with tuned SGD. Next, we observe that while such a term is unavoidable without further assumptions, if a success probability of $1-O(\delta)$ is desired, the following reasonable assumption is sufficient.

\noindent\textbf{Assumption:} for some $\delta \in (0,1)$ and $c = 4 \sqrt{1+3 \log \tfrac{1}{\delta}}$,
\begin{align*}
    f(w_1)-f^\star > \frac{c \sigmastar \norm{w_1-w^\star}}{\sqrt{T}}.
\end{align*}
In other words, $w_1$ is a ``bad enough'' approximate minimizer.

Let us first explain why this is a reasonable assumption. Consider the convergence rate of tuned SGD, 
which ensures with probability at least $1-\delta$ that
\begin{align*}
    f(\wout)-f^\star \leq
        \frac{C \norm{w_1-w^\star} (\lipstar+\sigmastar \sqrt{\log\tfrac{1}{\delta}})}{\sqrt{T}}
\end{align*}
for some constant $C > 0$.
In that case, the assumption is weaker then assuming that the above convergence bound is sufficient to ensure that with probability at least $1-\delta$, SGD produce a solution better than $w_1$ by some constant factor. If we aim to compete with tuned SGD, it is reasonable to focus on the regime where SGD produce better solutions than the initialization.

We move to analyze what the assumption above yields.
Using the assumption and the convexity of the objective,
\begin{align*}
    \frac{c \sigmastar \norm{w_1-w^\star}}{\sqrt{T}}
    &<
    f(w_1)-f^\star
    \leq
    \nabla f(w_1) \cdot (w_1-w^\star)
    \leq
    \norm{\nabla f(w_1)} \norm{w_1-w^\star}
    .
\end{align*}
Thus,
\begin{align*}
    \sigmastar
    &<
    \frac{\norm{\nabla f(w_1)} \sqrt{T}}{c}
    .
\end{align*}
So upper bounding $\norm{\nabla f(w_1)}$ produce an upper bound of $\sigmastar$.
The simplest approach to estimate the gradient at $w_1 \in \reals^d$ is to average $n$ stochastic samples. Let $Z=\frac{1}{n} \sum_{i=1}^n \goracle_i(w_1)$ where $\goracle_i(w_1)$ for $i \in [n]$ are independent stochastic gradients. Thus, $\E[Z]=\nabla f(w_1)$.
By \cref{lem:mds-sub-gaussian-bound},
\begin{align*}
    \Pr\brk*{\norm{Z-\nabla f(w_1)} \geq \ifrac{\sqrt{2}(1+\lambda) \sigma}{\sqrt{n}}}
    &\leq
    \exp\brk{\ifrac{-\lambda^2}{3}}
    .
\end{align*}
Let $\lambda=\sqrt{3 \log \tfrac{1}{\delta}}$. Hence, with probability at least $1-\delta$,
\begin{align*}
    \norm{Z-\nabla f(w_1)}
    &<
    \sqrt{2}\ifrac{\brk{1+\sqrt{3 \log \tfrac{1}{\delta}} \sigma}}{\sqrt{n}}
    <
    \sqrt{2}\ifrac{\brk{1+\sqrt{3 \log \tfrac{1}{\delta}}} \norm{\nabla f(w_1)} \sqrt{T}}{c \sqrt{n}}
    .
\end{align*}
Setting $n=\ifrac{8 \brk{1+3\log \tfrac{1}{\delta}} T}{c^2}=\tfrac{T}{2}$, with probability at least $1-\delta$,
\begin{align*}
    \frac{\norm{Z}}{\norm{\nabla f(w_1)}} \in [\tfrac12,\tfrac32]
    .
\end{align*}
Hence, using $\tfrac{T}{2}$ steps we can estimate $\norm{\nabla f(w_1)}$ and obtain a bound
\begin{align*}
    \sigmastar < \frac{\norm{\nabla f(w_1)} \sqrt{T}}{c}
    \leq
    \frac{2 \norm{Z} \sqrt{T}}{4 \sqrt{1+3 \log \tfrac{1}{\delta}}}
\end{align*}
which holds with probability at least $1-\delta$, and such bound is tight enough to make \cref{thm:lower-bound-new} inapplicable.
Further note that having a bound $\sigmastar \leq \sigmamax = O(\lipstar\sqrt{T})$, which we obtain with high probability using this technique, is sufficient to reduce the unavoidable lower order term presented at \cref{thm:convex-ball} to a term proportional to the (almost) optimal rate of tuned SGD.

\end{document}